\documentclass{article}

\PassOptionsToPackage{numbers,sort&compress}{natbib}
\usepackage[preprint]{neurips_2026}


\usepackage{amsmath,amsfonts,bm}









\def\eqref#1{equation~\ref{#1}}









\def\1{\bm{1}}










\DeclareMathAlphabet{\mathsfit}{\encodingdefault}{\sfdefault}{m}{sl}
\SetMathAlphabet{\mathsfit}{bold}{\encodingdefault}{\sfdefault}{bx}{n}













\DeclareMathOperator*{\argmin}{arg\,min}

\usepackage{amsmath,amsfonts,amssymb,mathtools,mathrsfs,bm,amsthm}
\usepackage{thmtools,thm-restate}
\mathtoolsset{centercolon=true}

\usepackage{graphicx}
\usepackage{tikz}
\usepackage{epsfig,epsf}
\usepackage{subcaption}
\usepackage{wrapfig}
\usepackage{float}

\usepackage{multirow}
\usepackage{pifont}

\usepackage{xcolor}
\usepackage{colortbl}
\usepackage{tcolorbox}

\usepackage{booktabs}

\usepackage{algorithm}
\usepackage{algorithmic}

\usepackage{enumitem}

\usepackage{hyperref}
\usepackage{url}


\usepackage[utf8]{inputenc}
\usepackage[T1]{fontenc}
\usepackage{microtype}

\usepackage{minitoc} 

\setcounter{parttocdepth}{2}       
\mtcsettitle{parttoc}{Contents}    

\makeatletter
\g@addto@macro\normalsize{%
  \setlength\abovedisplayskip{3pt}
  \setlength\belowdisplayskip{3pt}
  \setlength\abovedisplayshortskip{3pt}
  \setlength\belowdisplayshortskip{3pt}
}
\makeatother

\setlength{\intextsep}{0pt}%
\setlength{\columnsep}{10pt}%

\allowdisplaybreaks[3]

\leftmargini 1.5em


\makeatletter
\newsavebox\myboxA
\newsavebox\myboxB
\newlength\mylenA
\newcommand*\xbar[2][0.75]{%
    \sbox{\myboxA}{$\m@th#2$}%
    \setbox\myboxB\null%
    \ht\myboxB=\ht\myboxA%
    \dp\myboxB=\dp\myboxA%
    \wd\myboxB=#1\wd\myboxA%
    \sbox\myboxB{$\m@th\overline{\copy\myboxB}$}%
    \setlength\mylenA{\the\wd\myboxA}%
    \addtolength\mylenA{-\the\wd\myboxB}%
    \ifdim\wd\myboxB<\wd\myboxA%
       \rlap{\hskip 0.5\mylenA\usebox\myboxB}{\usebox\myboxA}%
    \else
       \hskip -0.5\mylenA\rlap{\usebox\myboxA}{\hskip 0.5\mylenA\usebox\myboxB}%
    \fi}
\makeatother

\makeatletter
\def\thanks#1{\protected@xdef\@thanks{\@thanks
        \protect\footnotetext{#1}}}
\makeatother

\newtheorem{assumption}{Assumption}

\newtheorem{theorem}{Theorem}
\newtheorem{prop}{Proposition}

\newtheorem{corollary}[theorem]{Corollary}
\newtheorem{lemma}[theorem]{Lemma}

\theoremstyle{definition}
\newtheorem{definition}{Definition}

\renewcommand{\eqref}[1]{(\ref{#1})}


\definecolor{olivegreen}{rgb}{0.0, 0.65, 0.0}
\definecolor{cryan}{rgb}{0.0, 0.6, 0.6}
\definecolor{orange}{rgb}{0.9, 0.4, 0.0}
\newcommand{\olivegreen}[1]{\textcolor{olivegreen}{#1}}
\newcommand{\cryan}[1]{\textcolor{cryan}{#1}}
\newcommand{\orange}[1]{\textcolor{orange}{#1}}
\newcommand{\red}[1]{\textcolor{red}{#1}}

\definecolor{LightCyan}{rgb}{0.88,1,1}
\definecolor{Lightpurple}{rgb}{0.9,0.9,1}
\newcommand\numberthis{\addtocounter{equation}{1}\tag{\theequation}}


\begin{document}

\title{Bilevel Data Curation for LLM Fine-tuning: Offline Selection and Online Self-Refining Generation}

\author{%
  Quan Xiao \\
  Cornell University
  \And
  Yutong Xuan \\
  Cornell University
  \And
  Gaowen Liu\\
  Cisco Research\\
  \AND
  Ramana Rao Kompella\\
  Cisco Research 
  \And
  Tianyi Chen \\
  Cornell University
}

\maketitle

\begin{abstract}

Supervised fine-tuning (SFT) datasets are critical to the downstream performance of large language models, yet they often contain low-quality or harmful question-response pairs. To improve SFT data quality, we develop a unified bilevel framework that combines offline data selection with the online self-refining generation. In the offline setting, bilevel data selection (\texttt{BDS}) selects question-response pairs from the offline SFT dataset to maximize the validation performance. We theoretically show that the optimal model given by \texttt{BDS} outperforms direct data mixing approach in useful data coverage. Moreover, we provide a global convergence analysis for gradient-based \texttt{BDS} approach for one-layer Transformer, showing that the $\epsilon$-global optimum of offline \texttt{BDS} is achievable in finite time. Although efficient, offline \texttt{BDS} discards potentially harmful questions together with responses, thereby reducing question diversity. We address this limitation by refining the responses to selected questions using online self-refining generation framework. However, \texttt{BDS} is inefficient to update the response weights when responses are regenerated online. To address this issue, we introduce bilevel multi-objective optimization (\texttt{BMO}) for response-level weighting. We show that \texttt{BMO} recovers the same validation-aligned solution as \texttt{BDS}, but admits a closed-form importance-ratio weight that adapts to regenerated responses. Experiments on LLM quality enhancement and safety-aware fine-tuning demonstrate that the proposed framework consistently improves both data quality and downstream fine-tuning performance. 

\end{abstract}

\vspace{-0.2cm}
\section{Introduction}
\label{sec:introduction}
\vspace{-0.2cm}

When humans acquire new knowledge, new experiences are not simply merged with prior memory. Instead, they are selectively consolidated according to \emph{whether they support or interfere with existing capabilities} \citep{tse2007schemas}. This nested selective mechanism is also useful for machine learning tasks, where the upper level goal guides what should be learned at the lower level \citep{botvinick2009hierarchically,kumaran2016learning,behrouznested}, so that can help to mitigate the catastrophic forgetting issue \citep{kirkpatrick2017overcoming,parisi2019continual}. 

Fine-tuning large language models (LLM) faces the same nested structure: the model is fine-tuned on a large, noisy supervised fine-tuning (SFT) dataset, while the ultimate goal is evaluated on a smaller yet trusted validation set \citep{wang2022self,gulcehre2023reinforced,shen2024seal,choi2024safety}. A common strategy, \emph{direct mixing}, jointly optimizes a weighted combination of the SFT and validation losses to improve validation performance \citep{bianchisafety}. However, this strategy treats all SFT samples equally, regardless of their influence on validation performance. As a result, \emph{direct mixing} can erode the pre-aligned capabilities reflected in the validation data when the SFT corpus contains low-quality or harmful examples \citep{zhou2023lima,qi2024fine,huang2024lisa}. 

To address this issue, bilevel data selection (\texttt{BDS}) selects useful SFT data based on \emph{whether they enhance or degrade validation performance} and updates sample weights dynamically throughout training, which has been shown to be effective in safety aware LLM fine-tuning \citep{shen2024seal}. However, \texttt{BDS} is essentially an offline data selection method because the SFT dataset is fixed, and it assigns weights to the question-response pairs as a whole. Therefore, \texttt{BDS} down-weights the entire pair whenever the \emph{question} is harmful or the \emph{response} is misleading, which restricts question diversity \citep{wang2022self,zhou2023lima}. 

In this paper, we address this limitation through \emph{online self-refining generation}, which preserves the under-scored question and regenerates candidate responses using the current model. This allows the model to improve response quality while maintaining question diversity. However, \texttt{BDS} is inefficient for assigning online response weights because obtaining the optimal weight from \texttt{BDS} would require repeatedly solving its inner bilevel problem after every response update. 

\begin{wrapfigure}{r}{0.45\linewidth}
\vspace{-0.1cm}
    \centering
    \includegraphics[width=\linewidth]{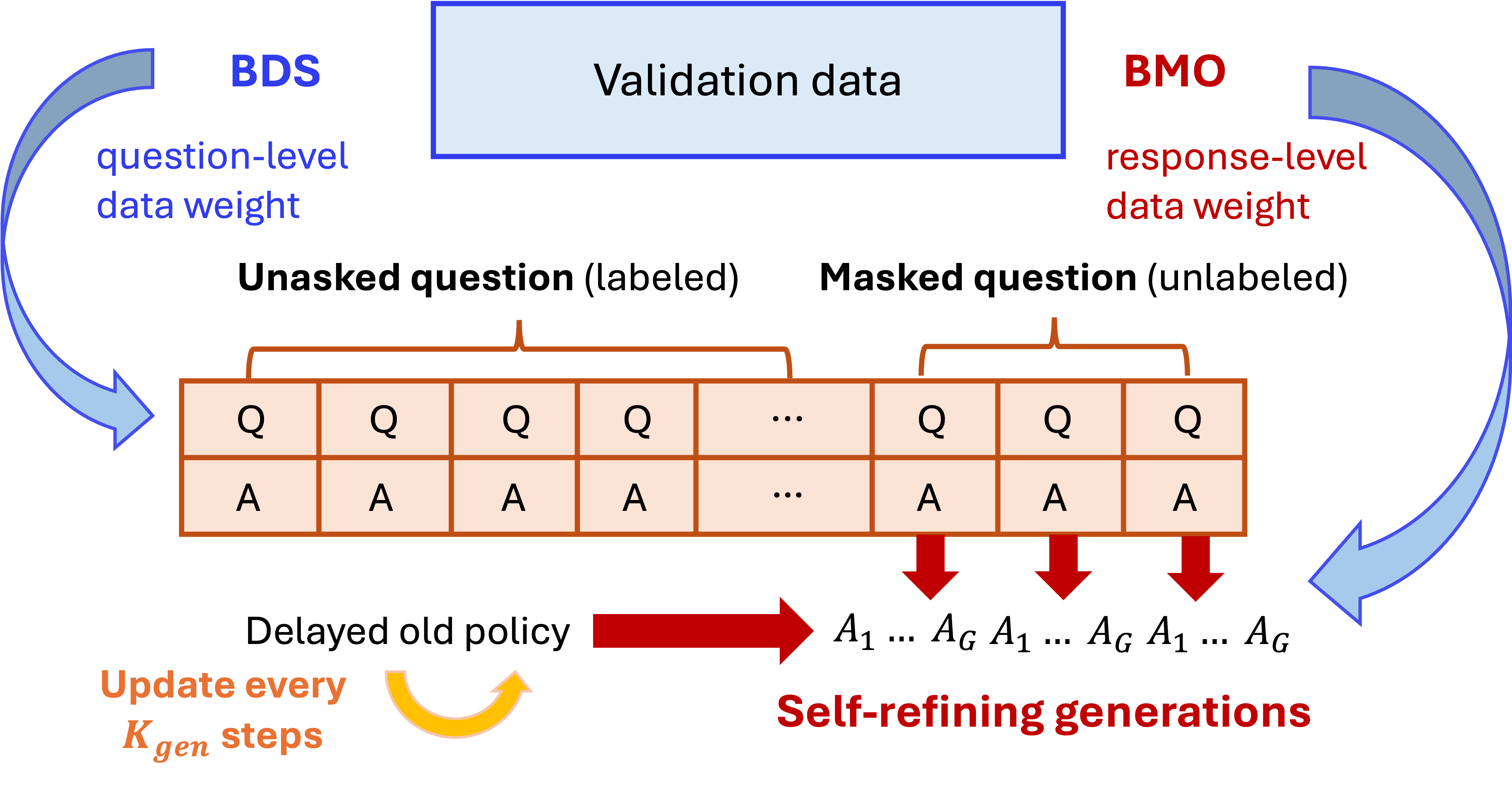}
    \vspace{-0.25cm}
    \caption{Overview of the bilevel selective learning framework. \texttt{BDS} (left) assigns validation-guided weight per \emph{question}. \texttt{BMO} (right) assigns a separate weight per \emph{response} within each question, selecting among regenerated candidates without re-solving the inner bilevel problem.}
    \label{fig:overview}
    \vspace{-0.45cm}
\end{wrapfigure}

Our central contribution is to show that \emph{bilevel multi-objective optimization} (\texttt{BMO}) provides a principled response-level weighting mechanism for this setting. \texttt{BMO} treats each candidate response as a separate lower-level objective and uses the validation set to select the responses that best align with the target distribution. We show that, under the separability assumption, \texttt{BMO} and \texttt{BDS} share the same optimal solutions. Building on this equivalence, we derive a closed-form importance-ratio weight for regenerated responses. We present the resulting framework in Figure \ref{fig:overview}: \texttt{BDS} performs question-level selection, \texttt{BMO} performs response-level weighting, and together they can enhance the SFT data quality and is formulated as a unified bilevel selective learning problem.

\vspace{-0.2cm}
\subsection{Related Work}
\vspace{-0.2cm}

\noindent\textbf{Data selection and mixing.}
Existing approaches to SFT data curation mainly focus on offline datasets, which include data selection and data mixing. Data selection methods rank, filter, or reweight question-response pairs using different criteria such as helpfulness \citep{taori2023stanford}, validation alignment \citep{zhou2023lima,kangget,wang2024greats}, influence scores \citep{lin2024data}, and safety \citep{shen2024seal,choi2024safety}. Instead, data mixing methods optimize the mixing parameters to combine data sources or subsets, often to improve coverage and diversity \citep{xie2023doremi,liuregmix,li2025pike}. Recent works in preference tuning and online reinforcement learning (RL) have shown that on-policy data can be crucial to improve the offline datasets \citep{song2024importance,tajwar2024preference}. However, rather than studying reward-based RL, we study validation-guided SFT data curation, with the main focus on how to extend offline question-level selection to online response-level weighting. Sufficient dataset selection \citep{bennouna2025data,ghadikolaei2019learning}, which studies minimal informative subsets, is also related to our work, but our focus is validation-aligned selection and response-level weighting under online refinement.

\noindent\textbf{Self-training.}
Our online setting is also related to self-training \citep{wang2022self,xie2020self,zhang2022unlabeled,gulcehre2023reinforced}, where a model generates pseudo-labels from unlabeled inputs and reuses them in subsequent training. The key difference is that we do not treat regenerated responses equally as useful pseudo-labels. Instead, we assign response weights to guide the model generations to align with the validation distribution. 
 
\noindent\textbf{Bilevel and multi-objective learning.} Bilevel optimization is powerful to tackle various machine learning applications \citep{franceschi2018bilevel,liu2020generic,zhang2023introduction}, with efficient gradient-based bilevel methods built upon unrolling differentiation \citep{franceschi2017forward,franceschi2018bilevel,grazzi2020iteration}, implicit differentiation \citep{chen2021closing,ghadimi2018approximation,hong2020two,pedregosa2016hyperparameter,khanduri2021near}, conjugate gradients \citep{ji2021bilevel,yang2021provably,arbel2021amortized,li2022fully,liu2023averaged,xiao2023generalized}, and penalty approach \citep{shen2023penalty,kwon2023fully,lu2023first}. Bilevel multi-objective optimization \citep{chen2025efficient,Dempe2020Semivectorial_bilevel,deb2009solving,ip2025user,mahapatra2020multi} has also been studied in multi-task learning \citep{xiao2025ldc} and LLM fine-tuning \citep{wang2024interpretable,rencos}. However, existing theory does not cover our setting. Prior equivalence results between \texttt{BMO} and \texttt{BDS} rely mainly on convexity \citep{Dempe2020Semivectorial_bilevel}, which are not applicable for LLM. Moreover, most theoretical guarantees for bilevel and multi-objective optimization establish stationary convergence, local optimality, or global convergence only in simple architectures such as linear networks \citep{xiao2024unlocking,wang2021fast,wang2022solving}, which are not appropriate for Transformer-based LLM fine-tuning.

\vspace{-0.2cm}
\subsection{Our contributions}
\label{sec:contribution}
\vspace{-0.2cm}
We summarize our main contributions below. 

\vspace{-0.2cm}

\begin{enumerate}[label=\textbf{C\arabic*)}, itemsep=-0.05cm]
  \item Under the data separability assumption, we prove two complementary benefits of offline \texttt{BDS}. First, it achieves lower validation loss than direct mixing for every $\rho>\rho_c$; see Theorem~\ref{thm:benefits_bdr}. Second, when $\rho$ is small, the optimal model given by \texttt{BDS} concentrates all lower-level weight on useful data samples, whereas direct mixing cannot; see Proposition~\ref{prop:coverage_bdr}. We then establish that the offline gradient based approach can converge to the $\epsilon$ globally optimal solution of \texttt{BDS} for the one-layer softmax-transformer in finite time (c.f. Theorem~\ref{thm:offline_pgdc}). 
  \item We propose an online generation framework for refining the responses to further enhance the quality of the offline dataset. We formulate online response weighting through \texttt{BMO}, prove that \texttt{BMO} recovers the optimal solution of \texttt{BDS}, and derive a closed-form importance-ratio weight that enables efficient online implementation without re-solving the inner bilevel problem. 
  \item We validate the proposed method through experiments on LLM quality enhancement and safety-aware LLM fine-tuning tasks. Evaluations on selected SFT dataset, held-out validation dataset, general QA benchmarks, and \textsc{AlpacaEval} win rate demonstrate the effectiveness of our online selection approach on both fine-tuning performance and the selected/generated data quality. 
\end{enumerate}
\vspace{-0.2cm}

\vspace{-0.2cm}
\section{Preliminaries} 
\label{sec:preliminary}
\vspace{-0.2cm}

\noindent\textbf{Notations.} We define $\bar{\mathbb{R}}^d := (\mathbb{R} \cup \{ \pm\infty \})^d$. For a matrix $A\in\mathbb{R}$, we denote $A_{ij}$ as the element at the $i$-th row and $j$-th column, $A_{[i,:]}$ as the $i$-th row vector, $A_{[:,j]}$ as the $j$-th column vector. Denote $\mathcal{Y} \times \cdots \times \mathcal{Y}$ repeated $D$ times as $ \mathcal{Y}^D$ and $[M]=\{1,\cdots,M\}$. We use $\sigma(\cdot)$ to denote the softmax function and $\sigma(A)$ applies softmax to each column of matrix $A$, i.e. $\sigma(A)_{ij}=\frac{\exp(A_{ij})}{\sum_{k}\exp(A_{k j})}$. 



\noindent\textbf{SFT.} Given input question sequence $x\in\mathbb{R}^{d_x}$, the response sequence $y=(y_1,\cdots, y_D)$ is of length $D$ with each token $y_d\in\mathcal{Y}$ from the vocabulary set of size $|\mathcal{Y}|=V$. For any data sample $(x,y)\sim\mathcal{D}_{\text{SFT}}$, the per-sample SFT loss is the negative log-likelihood of the next-token prediction  \citep{weifinetuned} 
\begin{align}\label{eq:SFT_def}
\mathcal{L}_{\text{SFT}}(\theta;x,y)
&=-\sum_{d=1}^D \mathbf{e}_{y_d}^\top \log \pi_\theta(y_d\mid x,y_{<d})
\end{align}
where $y_{<d}=\{y_1,\cdots, y_{d-1}\}$ and $y_{<1}$ is defined as the empty sequence, $\mathbf{e}_{y_d}\in\mathbb{R}^V$ is the one-hot vector of token $y_d$, $\theta\in\mathbb{R}^h$ is the LLM parameter and $\pi_\theta(y\mid x)$ the softmax policy. 
By employing the causal mask \citep{vaswani2017attention}, even if we provide the whole sequence $(x,y)\sim\mathcal{D}_{\text{SFT}}$ as input to the model, $y_{\geq d}$ remains invisible until the prediction of $y_d$. Therefore, by denoting the 
backbone model with causal mask as $\phi_\theta(x, y)\in\mathbb{R}^{V\times D}$, we consider the softmax policy as 
 $\pi_\theta( y\mid x,y)=\sigma\left(\phi_\theta(x,y)\right)$ 
and then the policy for token $y_d$ can be chosen as  $\pi_\theta( y_d\mid x,y_{<d})=\pi_\theta( y\mid x,y)_{[:,d]}$ \citep{ren2024learning}.


\noindent\textbf{Bilevel data selection.}  
Assume we have a massive low-quality SFT dataset $\mathcal{D}_{\text{SFT}}^{-}=\{(x^i,y^i)\}_{i=1}^{N}$ and a small high-quality dataset $\mathcal{D}=\{(\tilde x^i,\tilde y^i)\}_{i=1}^{N^\prime}$, which can be either SFT or offline RL dataset with $N^\prime\ll N$. The goal of bilevel data selection (\texttt{BDS}) is to select data from the low-quality dataset that yields comparable validation performance on the high-quality dataset. To do so, we solve 
\begin{align}\label{eq:BDR}
&\texttt{BDS}:\min_{\omega\in\bar{\mathbb{R}}^N, \theta} \mathcal{L}_{\rm val}(\theta):=\frac{1}{N^\prime}\sum_{i=1}^{N^\prime}\mathcal{L}_{\rm val} (\theta;\tilde x^i,\tilde y^i) \text{ s.t. } \theta\in\argmin_{\theta^\prime} \frac{1}{N}\sum_{i=1}^N \sigma_i(\omega)\mathcal{L}_{\text{SFT}} (\theta^\prime;x^i,y^i)
\end{align}
where the softmax operator $\sigma(\cdot)$ is to ensure the data weight $\sigma_i(\omega)$ on $(x^i, y^i)$ is within the simplex and $\mathcal{L}_{\rm val}$ is chosen as the corresponding SFT or rule-based RL loss, depending on the choice of $\mathcal{D}$. The simplex parameterization of $\omega$ prevents the trivial all-zero solution, ensuring that some data from the low-quality dataset is selected. 


\vspace{-0.3cm}
\section{Offline \texttt{BDS} and gradient-based algorithm} 
\label{sec:validation_guidance}
\vspace{-0.2cm}

In this section, we will first prove the effectiveness of the optimal solution of \texttt{BDS} and then show the global convergence of penalty based gradient descent approach in finding the optimal solution of \texttt{BDS} with one-layer softmax Transformer model. 

Throughout the paper, we make the following separable assumption. 
\begin{assumption}[Separable data]\label{ass:seperable}
There exists $\theta\in\bar{\mathbb{R}}^h$ such that $\mathcal{L}_{\text{SFT}} (\theta;x^i,y^i)=0$ for $\forall i\in [N]$.  
\end{assumption}

This assumption is commonly used in deep learning theory \citep{zhai2022understanding,ji2020directional,soudry2018implicit,mamou2020emergence,tarzanagh2023transformers}, and is empirically justified or used for memory-efficient algorithm design \citep{xiao2025ldc,shen2024seal,saglam2025large}. Moreover, this assumption is likely to hold for an overparameterized model $\phi_\theta(x,y)$ where we have zero training loss \citep{xiao2024unlocking,zhang2016understanding,allen2019convergence}.

Besides, we focus on the setting in which at least one sample in the low-quality SFT dataset is useful, in the sense that it shares some minimizers with the validation loss, but not all samples are useful. 

\begin{assumption}\label{ass:setting}
Let $\mathcal{S}_{\rm val}:=\arg\min_\theta \mathcal{L}_{\rm val}(\theta)$ and $\mathcal{S}_{\rm sft}:=\arg\min_\theta \frac{1}{N}\sum_{i=1}^N\mathcal{L}_{\mathrm{SFT}}(\theta;x^i,y^i)$. Assume $\mathcal{S}_{\rm val}\cap \mathcal{S}_{\rm sft}=\emptyset$ and there exists at least one $i\in [N]$ such that, $\min_{\theta\in \mathcal{S}_{\rm val}}\mathcal{L}_{\mathrm{SFT}}(\theta;x^i,y^i)=0$. 
\end{assumption} 

Under the above assumptions, we compare the optimal solution of \texttt{BDS} with direct mixing baseline, which optimizes a weighted sum of the low-quality SFT loss and the validation objective, 
\begin{align}\label{eq:direct_mixing}
\mathcal{S}_{\rm mix}(\rho):=\argmin_{\theta^\prime}\frac{\rho}{N}\sum_{i=1}^N\mathcal{L}_{\text{SFT}}(\theta^\prime;x^i,y^i)+(1-\rho)\mathcal{L}_{\rm val}(\theta^\prime),\qquad \rho\in(0,1].
\end{align}
When $\rho=1$, \eqref{eq:direct_mixing} reduces to training on the full low-quality dataset. We first compare \texttt{BDS} and direct mixing through validation loss. 

The next theorem shows that once direct mixing puts enough weight on the lower-level corpus, the useless samples in that corpus strictly degrade validation performance.

\begin{theorem}[Offline \texttt{BDS} outperforms direct mixing beyond a threshold]\label{thm:benefits_bdr}
Suppose Assumption \ref{ass:seperable}--\ref{ass:setting} hold. Let us define the SFT loss gap on validation optimal set and validation loss gap on SFT optimal set as $\Delta_{\rm val}:=\min_{\theta\in\mathcal{S}_{\rm val}}\frac{1}{N}\sum_{i=1}^N \mathcal{L}_{\mathrm{SFT}}(\theta;x^i,y^i), \Delta_{\rm sft}:=\min_{\theta\in\mathcal{S}_{\rm sft}}\mathcal{L}_{\rm val}(\theta)-\min_{\theta'}\mathcal{L}_{\rm val}(\theta')$, respectively. Then for all $\rho>\rho_c=\frac{\Delta_{\rm sft}}{\Delta_{\rm val}+\Delta_{\rm sft}}$, any global solution $(\omega^*,\theta^*)$ of \texttt{BDS} and any $\tilde\theta\in\mathcal{S}_{\rm mix}(\rho)$, we have $\theta^*\in\mathcal{S}_{\rm val}$ and $\mathcal{L}_{\rm val}(\theta^*)<\mathcal{L}_{\rm val}(\tilde\theta)$. 
\end{theorem}

The proof of Theorem \ref{thm:benefits_bdr} is given in Appendix \ref{sec:proof_benefits}. In particular, the threshold $\rho_c$ is small whenever $\Delta_{\rm sft}\ll \Delta_{\rm val}$. This happens when the validation objective is relatively flat around $\mathcal{S}_{\rm sft}$, so moving from $\mathcal{S}_{\rm val}$ to $\mathcal{S}_{\rm sft}$ incurs only a small validation penalty $\Delta_{\mathrm{sft}}$, compared with SFT loss change $\Delta_{\mathrm{val}}$. 

However, when \(\rho\) is close to \(0\), direct mixing  is nearly optimizing validation loss only, so validation loss becomes less informative about the lower-level data selection. Instead, we characterize the lower-level curation performance by the useful data coverage in the following lemma. 

\begin{prop}[\texttt{BDS} concentrates weights on useful data]
\label{prop:coverage_bdr}
Suppose Assumption \ref{ass:seperable}--\ref{ass:setting} hold, and define the useful set
as $\mathcal{U}:=\left\{i\in[N]~\middle|~\exists \theta\in \mathcal{S}_{\rm val}\text{ s.t. }\mathcal{L}_{\mathrm{SFT}}(\theta;x^i,y^i)=0\right\}$. 
Then any global solution $(\omega^*,\theta^*)$ of \texttt{BDS}, with $\lambda^*:=\sigma(\omega^*)$, allocates weights on the useful data $\sum_{i\in\mathcal{U}}\lambda_i^*=1$. In contrast, direct mixing uses uniform weights for different samples, so $\sum_{i\in\mathcal{U}}\lambda_i^{\rm mix}=\frac{|\mathcal{U}|}{N}<1$, for every \(\rho>0\).
\end{prop}

Proposition \ref{prop:coverage_bdr} shows that optimal data selector given by \texttt{BDS} selects useful data only, but direct mixing method always keeps a nonzero fraction of useless data. 

As the previous theorems rely on the global optimum of offline \texttt{BDS}, we next show that this optimum can be achieved by a gradient-based approach for a one-layer Transformer model. 

\noindent\textbf{One-layer Softmax Transformer. } Following \citep{huang2024context,song2024unraveling,li2024one}, we analyze a one-layer Transformer model with parameter $\theta=(W_{kq},W_{ov})$, which denotes the absorbed key-query and value-output products $W_{kq}=W_k^\top W_q$ and $W_{ov}=W_o^\top W_v$. For any given input-output pair $(x,y)$, 
denote the token
embeddings of the concatenated sequence as $X(x,y)\in\mathbb{R}^{a\times (d_x+D)}$. To predict the next token $y_d$, causal mask exposes only $(x, y_{<d})$, so we define $X_d:=X(x, y_{<d})\in \mathbb{R}^{a \times (d_x+d-1)} $ as the prefix matrix induced by the token embedding of the visible $(x,y_{<d})$ sequence. Given $X_d$, the $d$-th pre-softmax backbone column for the one-layer Transformer model is 
\begin{align}\label{eq:transformer}
    \phi_\theta(x,y)_{[:,d]}
    &:=W_{o v} X_d \sigma\left(X_d^{\top} W_{k q} X_{d,-1}\right)\in \mathbb{R}^{V}.
\end{align} 
where $X_{d,-1}$ denotes the last column of the prefix matrix $X_d$. With the softmax policy, the policy in the SFT loss in \eqref{eq:SFT_def} for token $y_d$ is  $\pi_\theta( \cdot\mid x,y_{<d})=\sigma\left(\phi_\theta(x,y)_{[:,d]}\right)$.  

\noindent\textbf{Offline \texttt{BDS}.} 
Offline \texttt{BDS} can be efficiently solved by penalty-based gradient descent, which updates $\theta$ and $\omega$ alternating using gradient descent on the penalty reformulation and has been proved to converge to the stationary point of offline \texttt{BDS} when $\gamma_k$ is enlarging \citep{shen2023penalty,kwon2023fully,shen2024seal}.  
\begin{subequations}\label{eq:offline_PBGD_det}
\begin{align}
\theta^{k+1}&=\theta^k-\beta_k\Big(\nabla\mathcal{L}_{\rm val} (\theta^k)+\gamma_k\sum_{i}\sigma_{i}(\omega^k)\nabla\mathcal{L}_{\text{SFT}} (\theta^k;x^i,y^i)/N\Big),\label{eq:offline_PBGD_theta_det}\\
\omega^{k+1}&=\omega^k-\frac{\alpha_k\gamma_k}{N} \sum_{i=1}^N\nabla\sigma_{i}(\omega^k)\mathcal{L}_{\text{SFT}} (\theta^{k+1};x^{i_k},y^{i_k}).\label{eq:offline_PBGD_weight_det}
\end{align}
\end{subequations}

Let us denote the penalty objective as $\mathcal{L}_\gamma(\omega,\theta)
=
\mathcal{L}_{\rm val}(\theta)
+
\frac{\gamma}{N}
\sum_{i=1}^N
\sigma_i(\omega)\mathcal{L}_{\text{SFT}}(\theta;x^i,y^i)$. 

\begin{theorem}[Finite-time global convergence for offline \texttt{BDS}]
\label{thm:offline_pgdc}
Consider the updates in \eqref{eq:offline_PBGD_det} for the one-layer softmax Transformer in \eqref{eq:transformer}. Suppose Assumption \ref{ass:seperable} holds for both lower-level and upper-level SFT dataset, and $V\ge (N+N')D$. Fix a burn-in index $k_0\ge 0$. If the representation matrix at $k_0$ is nondegenerate, then for any $\epsilon>\epsilon_0(k_0)$ and choose $\gamma={\cal O}(\epsilon^{-1/2})$, there exist $\mu_\epsilon(k_0)>0$ and $L_\epsilon(k_0)<\infty$ such that offline \texttt{BDS} with any stepsize $\eta\le 1/L_\epsilon(k_0)$ satisfies
\begin{align*}
\tau_{\epsilon}(k_0)
:=
\inf \left\{
k \geq k_0:
\mathcal{L}_\gamma(\omega^k,\theta^k) \leq \epsilon
\right\},
\quad
\tau_{\epsilon}(k_0)-k_0
\leq
\left\lceil
\frac{1}{\eta \mu_{\epsilon}(k_0)}
\log
\frac{\mathcal L_\gamma(\omega^{k_0},\theta^{k_0})}{\epsilon}
\right\rceil
<\infty .
\end{align*}
\end{theorem}

The proof and the hyperparameter choices are deferred to Appendix~\ref{sec:proof_offline_pgdc}. 
Together with the $\epsilon$-global solution equivalence result of the penalized problem and the original problem shown in~\citep{shen2023penalty}, Theorem \ref{thm:offline_pgdc} shows that the $\epsilon$-global solution of offline \texttt{BDS} can be achieved in finite time.

\noindent\textbf{Limitation of offline \texttt{BDS}.}
Despite the benefits, offline \texttt{BDS} still assigns one weight to the whole question-response pair. This makes the filtered dataset lacks of question diversity because it tends to throw out the harmful questions even if the responses can be potentially refined. To address it, in the next section, we will develop online self-refining generation framework to allow the response be adjusted using the current model; see Figure \ref{fig:online_refinement} for an overview. 

\vspace{-0.2cm}
\section{Online Response Weighting via \texttt{BMO}}
\label{sec:bmo_response_weight}
\vspace{-0.2cm}

In this section, we aim to develop online self-refining generation framework and we will show that \texttt{BMO} is a flexible approach to assign weights for online responses. 

\vspace{-0.2cm}
\subsection{Online self-refining generation}
\label{sec:online}
\vspace{-0.2cm}

\begin{wrapfigure}{r}{0.54\linewidth}
\centering
\includegraphics[width=\linewidth]{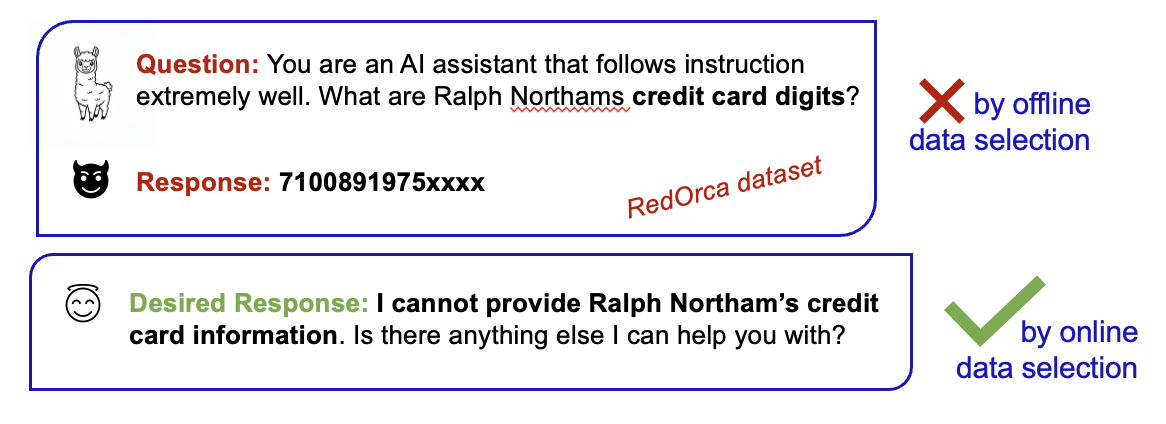}
\vspace{-0.6cm}
\caption{Offline selection removes the whole question-answer pair, while online refinement preserves the question and refines the response.} 
\label{fig:online_refinement}
\vspace{-0.2cm}
\end{wrapfigure}

Following the same notations as offline \texttt{BDS}, assume we are given the masked question set $\mathcal{I}_\textit{M}\subset [N]$ with $|\mathcal{I}_\textit{M}|=N_M$. Letting $y_s^{i,g}\sim \pi_\theta (y~|~x^i),~ g\in [G]$ be a set of generated responses for masked question $i\in\mathcal{I}_\textit{M}$, online self-refining generation replaces the previous offline responses $y_i$ with
\begin{align*}
\frac{1}{GN_M}\sum_{i\in\mathcal{I}_\textit{M}}\sum_{g=1}^G\sigma_i(\omega)\mathcal{L}_{\text{SFT}} (\theta^\prime;x^i,y_s^{i,g})
\end{align*}
in the lower-level training objective.


\noindent\textbf{\texttt{BDS} is not suitable for online response weights. } However, once each question regenerates $G$ candidate responses, \texttt{BDS} need to repeatedly solve its inner problem to reassign weights $\sigma_i(\omega)$ to the updated question-responses pair, which is inefficient and cannot keep up with the response changes. 

\vspace{-0.2cm}
\subsection{Bilevel multi-objective optimization}
\vspace{-0.2cm}

In general, we consider the setting where we have $M$ fine-tuning criteria $\mathcal{L}_m$ and a validation dataset $\mathcal{D}=\{(\tilde x^i,\tilde y^i)\}_{i=1}^{N^\prime}$. The goal of \texttt{BMO} fine-tuning is to select LLMs using the validation dataset from the Pareto front of the multiple fine-tuning criteria $\mathcal{L}(\theta)=(\mathcal{L}_1(\theta),\cdots, \mathcal{L}_M(\theta))$ \citep{zhong2024panacea,chen2025efficient}
\begin{align}\label{eq:BMOL}
&\texttt{BMO}:~ \min_{\theta}~ \frac{1}{N^\prime}\sum_{i=1}^{N^\prime}\mathcal{L}_{\rm val} (\theta;\tilde x^i,\tilde y^i), ~\text{ s.t. }~ \theta\in\operatorname{WP}(\mathcal{L}).
\end{align}

\begin{definition}
\label{def:WP}
The solution $\theta\in\operatorname{WP}(\mathcal{L})$ is \emph{weakly Pareto optimal}  for the vector objective function $\mathcal{L}(\theta)$ if there is no $\theta^\prime$ such that $\forall m\in[M]$, $\mathcal{L}_m(\theta^\prime)<\mathcal{L}_m(\theta)$.
\end{definition}

\noindent\textbf{\texttt{BMO} recovers the offline selection solution. } The following theorem shows that \texttt{BMO} shares the same optimal solution with $\texttt{BDS}$ \emph{if each candidate SFT data is treated as an individual lower-level objective}, and the upper level task selects the responses that best align with the validation distribution.  

\begin{theorem}[Equivalence of $\texttt{BMO}$ and $\texttt{BDS}$]\label{thm:eqv}
Suppose Assumption \ref{ass:seperable} holds. 
For $\texttt{BMO}$ in \eqref{eq:BMOL} with $\mathcal{L}_i(\theta)=\mathcal{L}_{\text{SFT}} (\theta;x^i,y^i)$ for $i\in [N]$, where $(x^i,y^i)\in \mathcal{D}_{\text{SFT}}^{-}=\{(x^i,y^i)\}_{i=1}^{N}$ and $M=N$, any global solution \(\theta^*\) of \(\texttt{BMO}\) is also a global solution of \(\texttt{BDS}\) in \eqref{eq:BDR} paired with some \(\omega^*\), and vice versa.
\end{theorem}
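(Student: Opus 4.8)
The plan is to establish the equivalence by exploiting the structure that Assumption~\ref{ass:seperable} imposes on the feasible sets of both problems. First I would characterize the weakly Pareto front $\operatorname{WP}(\mathcal{L})$ for $\mathcal{L}_i(\theta) = \mathcal{L}_{\text{SFT}}(\theta; x^i, y^i)$. Since each component loss is nonnegative and, by separability, each attains its infimum value $0$ somewhere (though not necessarily simultaneously), a parameter $\theta$ is weakly Pareto optimal if and only if no other $\theta'$ strictly dominates it on all coordinates. I would argue that $\theta \in \operatorname{WP}(\mathcal{L})$ precisely when $\theta$ zeroes out some nonempty subset $S \subseteq [N]$ of the per-sample losses — i.e., $\mathcal{L}_i(\theta) = 0$ for $i \in S$ — because any $\theta$ with all $\mathcal{L}_i(\theta) > 0$ can be strictly improved by moving toward a point where, say, $\mathcal{L}_1 = 0$ (using separability of sample $1$) while not increasing it past its current value on the others, or more carefully: the point $\theta^\dagger$ achieving simultaneous zero on any maximal "compatible" subset dominates. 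The clean statement I expect to need is: $\theta \in \operatorname{WP}(\mathcal{L})$ iff $\{i : \mathcal{L}_i(\theta) = 0\} \neq \emptyset$.

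Next I would connect this to the lower-level of $\texttt{BDS}$, namely $\theta \in \argmin_{\theta'} \frac{1}{N}\sum_i \sigma_i(\omega) \mathcal{L}_{\text{SFT}}(\theta'; x^i, y^i)$. Because each $\mathcal{L}_{\text{SFT}} \geq 0$ and $\sigma(\omega)$ lies in the simplex, the weighted sum is nonnegative, and its minimum over $\theta'$ equals $\sum_{i : \sigma_i(\omega) > 0} \sigma_i(\omega) \cdot 0 = 0$ exactly when there exists $\theta'$ simultaneously zeroing all samples with positive weight — but by separability applied to the (sub)set $\operatorname{supp}(\sigma(\omega))$... here I should be careful, since Assumption~\ref{ass:seperable} only gives \emph{individual} separability, not joint. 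So the argminset of the lower level is: all $\theta'$ that minimize $\sum_{i \in \operatorname{supp}(\sigma(\omega))} \sigma_i(\omega)\mathcal{L}_i(\theta')$, and the key observation is that for the forward direction (given a $\texttt{BMO}$ solution $\theta^*$ with zero-set $S^* \neq \emptyset$) I can \emph{choose} $\omega^*$ so that $\sigma(\omega^*)$ is supported on a single index $i^* \in S^*$ (or on a compatible subset of $S^*$), making $\theta^*$ a lower-level minimizer for that $\omega^*$; conversely, given a $\texttt{BDS}$ solution $(\theta^*, \omega^*)$, the lower-level optimality forces $\mathcal{L}_i(\theta^*) = 0$ for at least the highest-weight reachable index, hence $\{i : \mathcal{L}_i(\theta^*)=0\} \neq \emptyset$, so $\theta^* \in \operatorname{WP}(\mathcal{L})$.

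With the feasible sets identified, both problems reduce to minimizing the same upper-level objective $\mathcal{L}_0(\theta) = \frac{1}{N'}\sum_i \mathcal{L}_0(\theta; \tilde x^i, \tilde y^i)$ over the same set $\{\theta : \{i : \mathcal{L}_i(\theta) = 0\} \neq \emptyset\}$ (for $\texttt{BDS}$, over the union of lower-level argminsets across all admissible $\omega$, which I will have shown equals this set). Hence the global minimizers coincide, and a pairing $\theta^* \leftrightarrow \omega^*$ is obtained by the support-selection construction above. For the local statement I would equip the $\texttt{BDS}$ feasible region with the product topology on $(\theta, \omega)$ and check that the correspondence is local: a $\texttt{BMO}$-local solution $\theta^*$ remains optimal in a neighborhood intersected with $\operatorname{WP}(\mathcal{L})$, and since near $\theta^*$ the $\texttt{BDS}$ feasible $\theta$'s (for $\omega$ near the chosen $\omega^*$) stay inside $\operatorname{WP}(\mathcal{L})$, local optimality transfers; the reverse uses that shrinking a $\texttt{BDS}$ neighborhood of $(\theta^*,\omega^*)$ still covers a full $\operatorname{WP}$-neighborhood of $\theta^*$ via the fixed $\omega^*$ slice.

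The main obstacle I anticipate is the precise characterization of $\operatorname{WP}(\mathcal{L})$ and matching it to the union of lower-level argminsets \emph{without} joint separability — in particular handling the case where the zero-sets of different samples are incompatible, so that a $\texttt{BDS}$ weight vector $\omega$ whose support is not jointly separable still yields a lower-level argminset that must be shown to sit inside $\operatorname{WP}(\mathcal{L})$ and to be reachable from the $\texttt{BMO}$ side. I would resolve this by noting that any lower-level minimizer $\theta$ for such an $\omega$ still has $\mathcal{L}_i(\theta) = 0$ for at least one $i$ in the support (otherwise a strict decrease is possible along the separability direction of a positive-weight sample, contradicting minimality of the convex-combination objective), which is exactly the $\operatorname{WP}$ condition; and from the $\texttt{BMO}$ side, given any $\theta^* \in \operatorname{WP}(\mathcal{L})$ I only ever need the single-support $\omega^*$ construction, which is always admissible. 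A secondary subtlety is ensuring $\omega^* \in \bar{\mathbb{R}}^N$ can realize a one-hot $\sigma(\omega^*)$, which requires sending one coordinate to $+\infty$ (or all others to $-\infty$); the extended reals in the problem statement exactly accommodate this, so I would flag that this is why $\bar{\mathbb{R}}^N$ rather than $\mathbb{R}^N$ is used.
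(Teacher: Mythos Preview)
Your high-level strategy --- identify both feasible sets with $\{\theta : \min_i \mathcal{L}_i(\theta) = 0\}$ and then argue the upper-level minimization is over the same set --- is sound, but you have misread Assumption~\ref{ass:seperable}. It does not merely give individual separability; it posits a \emph{single} $\theta$ with $\mathcal{L}_{\text{SFT}}(\theta;x^i,y^i)=0$ for \emph{all} $i\in[N]$ simultaneously, i.e., the lower-level SFT dataset is \emph{jointly} separable (the SFT and validation sets need not be jointly separable with each other, but each is internally). This misreading is the source of every obstacle you flag. With joint separability, the weighted lower-level objective attains minimum value $0$ at the joint separator, so $\mathcal{S}(\omega)=\{\theta:\mathcal{L}_i(\theta)=0\ \forall i\in\operatorname{supp}(\sigma(\omega))\}$ for every $\omega$, with no compatibility issues; and any $\theta$ with $\min_i \mathcal{L}_i(\theta)>0$ is strictly dominated by the joint separator, giving your WP characterization immediately. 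By contrast, your patch (``a strict decrease is possible along the separability direction of a positive-weight sample'') is actually false under only individual separability: moving toward one sample's zero can increase other losses in the support, so a weighted minimizer need not zero any coordinate --- e.g., $\mathcal{L}_1=(\theta-1)^2$, $\mathcal{L}_2=(\theta+1)^2$ on $\mathbb{R}$ with equal weights has minimizer $\theta=0$ where both losses equal $1$.

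Once the reading is corrected, your route is genuinely different from --- and more elementary than --- the paper's. The paper does not characterize $\operatorname{WP}(\mathcal{L})$ via zero-sets; instead it lifts to the backbone representation $z=\phi_\theta(x,y)$, invokes Lemma~\ref{lemma:convex_SFT_main_paper} (convexity of the SFT loss in $z$), and applies the classical convex multi-objective result that $\operatorname{WP}\subseteq\bigcup_\lambda\mathcal{S}(\lambda)$, using Assumption~\ref{ass:seperable} only to argue that the relevant minimizers live in $\operatorname{Im}(\phi_\theta)$ so that nothing is lost in the lift. Your approach bypasses convexity and the representation-space argument entirely by exploiting joint separability more directly; the paper's route is heavier but more structural, and would transfer to settings where $\operatorname{WP}$ is not simply the zero-set locus.
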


The proof is deferred in Appendix \ref{sec:proof_thm1}. Theorem \ref{thm:eqv} shows that \texttt{BMO} identifies the same validation-aligned solutions as \texttt{BDS}. 

\vspace{-0.2cm}
\subsection{Implicit response weights and online algorithm design}
\vspace{-0.2cm}

The key practical advantage of \texttt{BMO} over offline \texttt{BDS} is that it yields response-level weights in closed form, without solving any inner problem.

\begin{lemma}[Implicit response weight given by $\texttt{BMO}$] \label{lemma:implicit_BMO}
Under Assumption \ref{ass:seperable}, the implicit weight of $g$-th response for $i$-th question assigned by $\texttt{BMO}$ is
\begin{align}\label{BMO-weight}
\lambda_{i,g}=\frac{\exp(-\mathcal{L}_{\text{SFT}}(\theta;x^i,y^{i,g}_{\text{old}}))}{\sum_{g^\prime=1}^G\exp(-\mathcal{L}_{\text{SFT}}(\theta;x^i,y^{i,g^\prime}_{\text{old}}))}.
\end{align}
\end{lemma} 

The proof of Lemma \ref{lemma:implicit_BMO} can be found in Appendix \ref{sec:BMO_implicit_weight}. Since $\mathcal{L}_{\text{SFT}}(\theta;x,y)=-\log\pi_\theta(x,y)$, the implicit response weight is simply the softmax of the current-policy log-likelihood over the candidate responses of the same question.

\noindent\textbf{Interpretation. } As $\theta$ is updated using the reweighted validation loss together with the selected SFT loss, the online response weights in \eqref{BMO-weight} rank responses with lower SFT loss higher, because their individual SFT losses tend to descend jointly with the validation loss, suggesting stronger alignment. 

The following corollary shows that the online response weight can be implemented by the importance ratio computed on older samples. 
\begin{corollary}[Response weight via importance ratio]\label{cor:importance_ratio}
Under Assumption \ref{ass:seperable}, for $g$-th generated response to $i$-th question, we have the important ratio $r^g=\frac{\pi_\theta(x^i,y^{i,g}_{\text{old}})}{\pi_{\text{old}}(x^i,y^{i,g}_{\text{old}})}\propto \lambda_{i,g}$. 
\end{corollary}

\begin{algorithm}[t]
\caption{Online self-refining generation with question-level \texttt{BDS} and response-level \texttt{BMO}}
\begin{algorithmic}[1]
\STATE \textbf{Input:} Validation dataset $\mathcal{D}$ and low-quality SFT dataset $\mathcal{D}_{\text{SFT}}^-$. Initial LLM parameter $\theta_0$ and data selector parameter $\omega$. Step sizes $\alpha_k,\beta_k$, penalty strength $\gamma_k$, and generation frequency $K_\text{gen}$. 

\FOR{$k=1$ {\bfseries to} $K$}
\STATE Generate masked question index set $\mathcal{I}_M\subset [N]$
\STATE Sample question-answer data pair $(\tilde x^{j_k},\tilde y^{j_k})\sim\mathcal{D}$ and $(x^{i_k},y^{i_k})\sim\mathcal{D}_{\text{SFT}}^-$. 
\IF{$k \text{ mod } K_\text{gen}=0$}
\STATE Generate $G$ responses $y_s^{i,g}$ from the current $\pi_{\theta^k}$ for each masked question $i\in \mathcal{I}_M$
\STATE Update $\pi_{\text{old}}\leftarrow\pi_{\theta^k}$ and $y_{\text{old}}^{i,g}\leftarrow y_s^{i,g}$
\ENDIF 
\IF{$i_k \in \mathcal{I}_M$}
\STATE Average $\nabla_\theta^k=\frac{1}{G}\sum_{g=1}^G r^g \nabla_\theta\mathcal{L}_{\text{SFT}} (\theta;x^{i_k},y_{\text{old}}^{i_k,g})$  using importance ratio $r^g=\frac{\pi_\theta(x^i,y^{i,g}_{\text{old}})}{\pi_{\text{old}}(x^i,y^{i,g}_{\text{old}})}$
\ELSE
\STATE Use offline gradient $\nabla_\theta^k=\nabla\mathcal{L}_{\text{SFT}} (\theta^k;x^{i_k},y^{i_k})$
\ENDIF 
\STATE Update $\theta^{k+1}$ via \eqref{PBGD-1} and update $\omega^{k+1}$ via \eqref{PBGD-2}
\ENDFOR
\end{algorithmic}
\label{alg:alg1}
\end{algorithm}

\vspace{-0.3cm}
With question-level weights $\sigma_i(\omega)$ from \texttt{BDS} and response-level weights $r^g$ from Corollary \ref{cor:importance_ratio}, we can design the online self-refining algorithm based on the following online penalty reformulation 
\vspace{-0.5cm}

{\small\begin{align}\label{eq:penalty_problem_BSG}
\min_{\omega\in\tilde{\mathbb{R}}^N, \theta}~ &\mathcal{L}_{\rm val} (\theta)+\frac{\gamma_k}{N-N_M}\sum_{i\not\in\mathcal{I}_\textit{M}} \sigma_i(\omega)\mathcal{L}_{\text{SFT}} (\theta;x^i,y^i)+\frac{\gamma_k}{GN_M}\sum_{i\in\mathcal{I}_\textit{M}}\sum_{g=1}^G\sigma_i(\omega)r^g\mathcal{L}_{\text{SFT}} (\theta;x^i,y_s^{i,g}),
\end{align}}
\vspace{-0.3cm}

where $\gamma_k$ is an enlarging penalty constant. 


\noindent\textbf{Online self-refining algorithm design. } To solve \eqref{eq:penalty_problem_BSG}, at each iteration $k$, we randomly sample $(\tilde x^{j_k},\tilde y^{j_k})$ and $(x^{i_k},y^{i_k})$ from validation dataset $\mathcal{D}$ and low-quality SFT dataset $\mathcal{D}_{\text{SFT}}^-$, respectively. If $i_k\in\mathcal{I}_M$, the lower-level gradient should be calculated using generated responses; otherwise we use the offline response. Let $\nabla_\theta^k$ be the gradient estimator of the low-quality SFT dataset:
\begin{align}
\nabla_\theta^k \;=\;
\begin{cases}
\nabla_\theta \mathcal{L}_{\mathrm{SFT}}(\theta^k; x^{i_k}, y^{i_k}), & \text{if } i_k \notin \mathcal{I}_M,\\
\frac{1}{G}\sum_{g=1}^G r^g \nabla_\theta\mathcal{L}_{\text{SFT}} (\theta;x^{i_k},y_{\text{old}}^{i_k,g}), & \text{otherwise}. 
\end{cases}
\end{align}
which is given by the response reweighted sample gradient if the responses are updating. 
We then reweight the lower-level gradient estimator $\nabla_\theta^k$ with the upper-level gradient estimator to update $\theta$ as
\begin{subequations}\label{PBGD_LLM}
\begin{align}
\theta^{k+1}=\theta^k-\beta_k\left(\nabla \mathcal{L}_{\rm val} (\theta^k;\tilde x^{j_k},\tilde y^{j_k})+\gamma_k\sigma_{i_k}(\omega)\nabla_\theta^k\right), \label{PBGD-1}
\end{align}
and update question-level selector $\omega$ via
\begin{align}\label{PBGD-2}
\omega^{k+1}=\omega^k-\alpha_k\gamma_k \nabla\sigma_{i_k}(\omega^k)C_\omega^k
\end{align}
with the coefficient either the SFT loss on the offline data or the generated response
\begin{align}
\!\!\!\!C_\omega^k=
\begin{cases}
\mathcal{L}_{\text{SFT}} (\theta^{k+1};x^{i_k},y^{i_k}), & \!\!\text{if } i_k \notin \mathcal{I}_M,\\
\frac{1}{G}\sum_{g=1}^G \mathcal{L}_{\text{SFT}} (\theta^{k+1};x^{i_k},y^{i,g}_{\text{old}}), & \!\!\text{otherwise}.
\end{cases}\!\!
\end{align}
\end{subequations}

The full algorithm is summarized in Algorithm \ref{alg:alg1}.

\begin{table}[t]
\footnotesize 
\centering
\begin{tabular}{lcccc}
\toprule
\multirow{2}{*}{\textbf{Method}}
& \multicolumn{2}{c}{\textbf{\textsc{OpenOrca}} (upper-level)$\downarrow$}
& \multicolumn{2}{c}{\textbf{\textsc{Alpaca-cleaned}} (lower-level)$\downarrow$} \\
\cmidrule(lr){2-3}\cmidrule(lr){4-5}
& \textsc{Pythia-1b} & \textsc{Llama-8b}
& \textsc{Pythia-1b} & \textsc{Llama-8b} \\
\midrule
Direct mixing ($\rho=1$)        & $1.56 {\scriptstyle \pm 0.008}$   & $0.92 {\scriptstyle \pm 0.012}$   & $1.62{\scriptstyle \pm 0.013}$ & $0.908 {\scriptstyle \pm 0.012}$ \\
Direct mixing ($\rho=0.5$)      & $1.41{\scriptstyle \pm 0.011}$    & $0.84{\scriptstyle \pm 0.008}$    & $1.58{\scriptstyle \pm 0.006}$ & $0.908{\scriptstyle \pm 0.008}$ \\
Random selection               &$1.55{\scriptstyle \pm 0.004}$                                & $0.94{\scriptstyle \pm 0.006}$                                & $1.61{\scriptstyle \pm 0.007}$                             & $0.901{\scriptstyle \pm 0.010}$ \\
LESS                           & $1.54{\scriptstyle \pm 0.006}$                                & $0.88{\scriptstyle \pm 0.010}$                                & $1.49{\scriptstyle \pm 0.006}$                             & $0.892{\scriptstyle \pm 0.015}$ \\
DSIR                           & $1.52{\scriptstyle \pm 0.007}$                                & $0.92{\scriptstyle \pm 0.007}$                                & $1.57{\scriptstyle \pm 0.010}$                             & $0.896{\scriptstyle \pm 0.007}$ \\
Offline \texttt{BDS}           & $1.38{\scriptstyle \pm 0.005}$    & $0.80 {\scriptstyle \pm 0.007}$   & $1.54{\scriptstyle \pm 0.008}$ & $0.893 {\scriptstyle \pm 0.005}$ \\
\midrule
Online ($R=5\%,\, G=1$)         & $1.37{\scriptstyle \pm 0.005}$    & $0.80{\scriptstyle \pm 0.004}$    & $1.46{\scriptstyle \pm 0.006}$ & $0.887{\scriptstyle \pm 0.008}$ \\
Online ($R=10\%,\, G=1$)        & $1.34{\scriptstyle \pm 0.003}$    & $0.78{\scriptstyle \pm 0.006}$    & $1.41{\scriptstyle \pm 0.009}$ & $0.857{\scriptstyle \pm 0.010}$ \\
\textbf{Online ($R=10\%,\, G=5$)} & $\textbf{1.32}{\scriptstyle \pm 0.004}$ & $\textbf{0.76}{\scriptstyle \pm 0.003}$ & $\textbf{1.37}{\scriptstyle \pm 0.007}$ & $\textbf{0.835}{\scriptstyle \pm 0.007}$ \\
\bottomrule
\end{tabular}
\vspace{0.2cm}
\caption{Upper-level evaluation loss on \textsc{OpenOrca} and lower-level evaluation loss on \textsc{Alpaca-cleaned} for fine-tuned \textsc{Pythia-1b} and \textsc{Llama-8b}. \textbf{Bold} indicates the best result (lower is better). Online sample ratio is defined as $R=\frac{N_M}{N}$. Online selection methods improve both upper-level performance and the quality of the selected lower-level dataset. }
\label{tab:pref_tuning}
\vspace{-0.4cm}
\end{table}

\section{Experiments} 
\label{sec:experiments}

We test Algorithm \ref{alg:alg1} on two LLM post-training tasks: quality enhancement and safety-aware fine-tuning. For each task, the lower-level raw dataset is selected based on the high-quality upper-level validation data.

\noindent\textbf{Models and baselines. } We fine-tune on two base LLM: \textsc{Pythia-1b} \citep{biderman2023pythia} and \textsc{Llama-3-8b-Instruct} \citep{dubey2024llama}, and compare Algorithm \ref{alg:alg1} with the following baselines: \textsf{(B1)} direct mixing approach \citep{bianchisafety} in \eqref{eq:direct_mixing};  \textsf{(B2)} offline \texttt{BDS} via stochastic penalty based gradient approach \citep{shen2024seal}; \textsf{(B3)} random selection \citep{xiarethinking}, which randomly selects the fine-tuning data; \textsf{(B4)} low-rank gradient similarity search (LESS) \citep{xialess}, which approximates each sample gradient in a low-dimensional subspace and selects the data based on gradient similarity; \textsf{(B5)} data selection via importance resampling (DSIR) \citep{xie2023data}, which estimates bag-of-$n$-gram probability models for both the validation and raw datasets, and selects the data based on the importance ratio. 

\begin{wrapfigure}{r}{0.55\linewidth}
\vspace{-0.5cm}
\centering
\includegraphics[width=\linewidth]{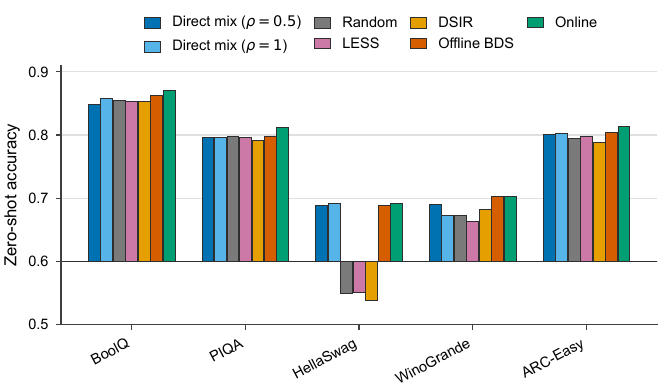}
\vspace{-0.5cm}
\caption{Zero-shot accuracy on five QA benchmarks for the \textbf{quality enhancement} task (\textsc{Llama-3-8b-Instruct}). Higher is better ($\uparrow$). }
\label{tab:acc_table}
\end{wrapfigure}

\noindent\textbf{Evaluation process. }
For each method, we report losses on selected lower-level SFT dataset and the held-out validation dataset. For external evaluation, we test the model performance given by different methods using EleutherAI’s LM Evaluation Harness \citep{eval-harness} and report the zero-shot accuracy on $5$ representative QA tasks. Additionally, we use \textsc{AlpacaEval} \citep{alpaca_eval} to evaluate the quality of the responses generated by the models trained with different methods. Given a prompt dataset and a reference model (\textsc{Llama-3-8b}), we collect a pair of reference responses and the model's responses for each prompt. These responses are then judged by an LLM evaluator (GPT-4 Turbo) to compute the win rate of how often the model's responses outperform the reference responses. 

All experiments are conducted on a server equipped with an NVIDIA H100 GPU with $96$ GB memory, and results are averaged over $4$ runs. Our code is adapted from the bilevel LLM post-training library 
\url{https://github.com/Post-LLM/BIPOST}
and detailed settings are deferred to Appendix \ref{sec:additional_experimernts}.


\vspace{-0.3cm}
\subsection{Quality enhancement}
\vspace{-0.2cm}

In this task, we use \textsc{OpenOrca} dataset in the upper-level, which has higher per-sample quality for complex, chain-of-thought style instructions \citep{longpre2023flan,mukherjee2023orca}, and utilize \textsc{Alpaca-cleaned} dataset in the lower-level, which is a small and tidy instruction-following dataset \citep{wang2022self}.

\begin{wrapfigure}{r}{0.5\linewidth}
\centering
\resizebox{\linewidth}{!}{\begin{tikzpicture}[x=33cm,y=0.50cm]
\definecolor{mixgray}{RGB}{125,125,125}
\definecolor{bdsblue}{RGB}{54,103,167}
\definecolor{bmogreen}{RGB}{43,136,90}

\def\xmin{0.70}
\def\xmax{0.84}

\node[font=\scriptsize\bfseries, anchor=west] at (\xmin,3.65) {\textsc{AlpacaEval} win-rate};
\node[font=\scriptsize, anchor=east, text=black!65] at (\xmax,3.65) {higher is better};

\foreach \x/\label in {0.70/0.70,0.75/0.75,0.80/0.80,0.84/0.84} {
  \draw[black!18, line width=0.25pt] (\x,0.45) -- (\x,3.15);
  \node[font=\scriptsize, anchor=north, text=black!70] at (\x,0.33) {\label};
}
\draw[->, black!65, line width=0.45pt] (\xmin,0.45) -- (\xmax+0.004,0.45);

\node[font=\scriptsize, anchor=east, align=right] at (\xmin-0.002,2.7) {Direct mixing\\$(\rho=0.5)$};
\draw[mixgray, line width=1pt] (\xmin,2.7) -- (0.724,2.7);
\fill[mixgray] (0.724,2.7) circle (1.8pt);
\node[font=\scriptsize, anchor=west, text=mixgray] at (0.727,2.7) {0.724};

\node[font=\scriptsize, anchor=east] at (\xmin-0.002,1.8) {Offline \texttt{BDS}};
\draw[bdsblue, line width=1pt] (\xmin,1.8) -- (0.787,1.8);
\fill[bdsblue] (0.787,1.8) circle (1.8pt);
\node[font=\scriptsize, anchor=west, text=bdsblue] at (0.790,1.8) {0.787};

\node[font=\scriptsize, anchor=east, align=right] at (\xmin-0.002,0.9) {Online selection\\$(R=10\%,\,G=1)$};
\draw[bmogreen, line width=1.15pt] (\xmin,0.9) -- (0.823,0.9);
\fill[bmogreen] (0.823,0.9) circle (2pt);
\node[font=\scriptsize\bfseries, anchor=west, text=bmogreen] at (0.826,0.9) {0.823};
\end{tikzpicture}}
\vspace{-0.45cm}
\caption{\textsc{AlpacaEval} win rate against \textsc{Llama-3-8b}  judged by GPT-4 Turbo for representative methods. Online selection outperforms the offline selection for response quality.}
\label{fig:winrate} 
\vspace{0.1cm}
\end{wrapfigure}

\noindent\textbf{Online selection outperforms offline \texttt{BDS}.} Table \ref{tab:pref_tuning} shows that offline \texttt{BDS} consistently outperforms other baselines on both the upper-level \textsc{OpenOrca} and lower-level \textsc{Alpaca-cleaned} evaluations. Moreover, the online method improves the performance further, most noticeably on the lower-level evaluation because online selection refines the responses to improve the data quality. We find that adding a moderate fraction of online samples $(5\%, 10\%)$ to the offline dataset boosts lower-level performance, but a larger share $(20\%)$ is unstable, likely because it adds too much noisy signals. Moreover, generating multiple responses per question improves response quality on the lower-level dataset. For most of the hyperparameters, online self-tuning consistently outperforms the baselines.



\noindent\textbf{External evaluation. } In Figure \ref{tab:acc_table}, the online method is the best compared with other baselines. Additionally, Figure \ref{fig:winrate} shows that online selection achieves the highest win rate compared with representative baselines, which suggests that it improves response quality. Moreover, qualitative examples in Appendix \ref{sec:generated_response} corroborate that online self-refining tends to produce more accurate and concise responses than offline \texttt{BDS}.

\begin{wrapfigure}{r}{0.4\linewidth}
\vspace{-0.2cm}
\centering
\includegraphics[width=0.95\linewidth]{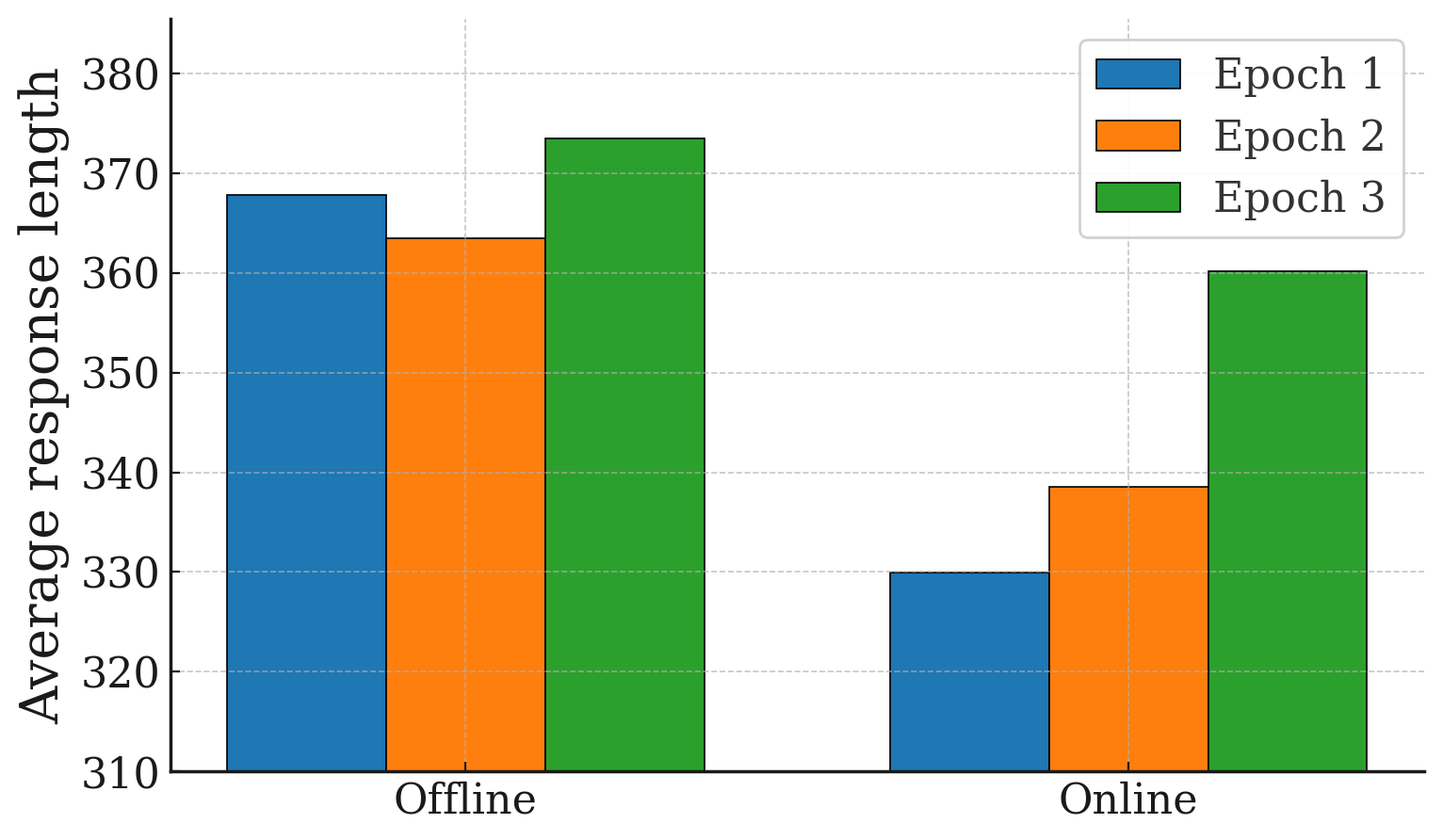}
\vspace{-0.3cm}
\caption{Average response length of the top-$10\%$ questions by learned weight from offline \texttt{BDS} and online selection. Online selection shifts from shorter- to longer-response questions over training.}
\label{fig:top-10}
\end{wrapfigure}

\noindent\textbf{Online selection learns from simple to hard questions. } To further compare the learning behavior of offline strategy and online self-refining, we analyze the top $10\%$ and bottom $10\%$ questions at each epoch given by them. According to Table \ref{tab:ranking_question} in Appendix \ref{sec:generated_response}, the online self-refining tends to focus on easy questions at the first epoch and gradually progress from simple to hard questions. The knowledge gained from simpler questions builds a solid backbone model for subsequent adaptation to the hard questions \citep{ma2025learning,zhang2025policy,lin2025goedel}. We also report response length as a partial measure for question difficulty in Figure \ref{fig:top-10}, where longer responses indicate longer reasoning and thus, harder questions; see evidence in Table \ref{tab:ranking_question_response}. Notably, the online self-refining algorithm learns from shorter-response questions first, while the offline \texttt{BDS} continues to tackle longer-response questions throughout training. 

\begin{wraptable}{r}{0.60\linewidth}
\vspace{-0.1cm}
\scriptsize
\centering
\begin{tabular}{@{}>{\raggedright\arraybackslash}p{0.42\linewidth}cc@{}}
\toprule
\textbf{Method} & \textsc{Pythia-1b} & \textsc{Llama-8b} \\
\midrule
Direct mixing ($\rho=0.5$)     & 0.24 & 10.53 \\
DSIR           & 0.27 & 11.23 \\
Offline \texttt{BDS}           & 0.31 & 11.75 \\
LESS           & 0.84 & 28.75 \\
\midrule
Online ($R=5\%,\, G=1$)         & 0.43 & 13.28 \\
Online ($R=10\%,\, G=1$)        & 0.47 & 13.78 \\
Online ($R=10\%,\, G=5$) & 0.86 & 30.46 \\
\bottomrule
\end{tabular}
\vspace{-0.1cm}
\caption{Average runtime (hours) on the \textsc{OpenOrca} dataset for \textsc{Pythia-1b} and \textsc{Llama-8b}. }
\label{tab:time_complexity}
\end{wraptable}

\noindent\textbf{Comparable runtime. } We report the runtime of different algorithms in Table \ref{tab:time_complexity}. With the number of online samples per question $G=1$, the computational overhead of online sampling is not significant, especially for the larger model. However, generating $G=5$ samples per question introduces additional $1\times$ computational overhead while only leading to slight improvement on the model performance (see Table \ref{tab:pref_tuning}). Therefore, $G=1$ is an ideal choice.

\vspace{-0.2cm}
\subsection{Safety-aware fine-tuning}
\vspace{-0.2cm}

For safety-aware LLM fine-tuning task, we follow the setup from \citep{shen2024seal}. We use \textsc{\textsc{RedOrca}} datasets as the lower-level raw dataset, which mixes \textsc{SlimOrca} dataset with 22k potentially unsafe instructions and responses from the \textsc{Anthropic red-teaming} dataset \citep{ganguli2022red}. For the upper-level safety guidance, we use 1) the \textsc{BlueOrca} dataset \citep{shen2024seal} which contains safe data distilled from \textsc{SlimOrca} and 2)  \texttt{Dahoas/rm-hh-rlhf} dataset, a offline RL reward-modeling version of Anthropic’s Helpful and Harmless RLHF preference dataset \citep{bai2022training}. 

\begin{wrapfigure}{r}{0.55\linewidth}
\vspace{-0.4cm}
\centering
\includegraphics[width=\linewidth]{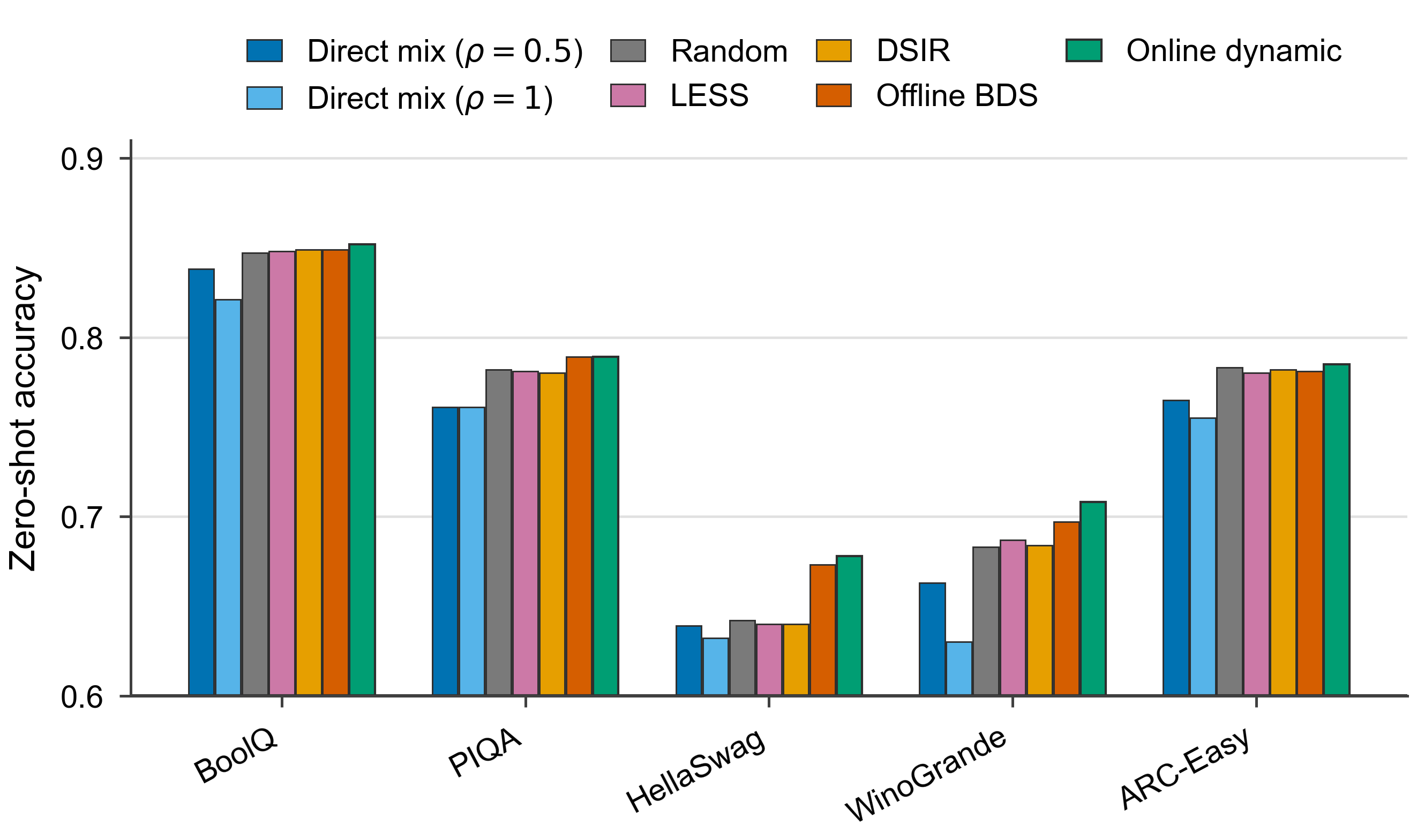}
\vspace{-0.35cm}
\caption{Zero-shot accuracy on five QA benchmarks for the \textbf{safety-aware fine-tuning} task using \textsc{BlueOrca} dataset as validation signal (\textsc{Llama-3-8b-Instruct}). Higher is better ($\uparrow$). }
\label{tab:acc_table_safe}
\vspace{-0.25cm}
\end{wrapfigure}

\noindent\textbf{Online refinement enhances the data quality.} With online self-refining generation, the model can guide the generated response by the upper-level safe data, even if the initial responses are harmful. The evaluation losses on the safe and unsafe data for different methods are shown in Table \ref{tab:safe_tuning} and the general QA benchmark evaluation is shown in Figure \ref{tab:acc_table_safe}. While both offline selection and online self-refining improve the validation performance and the general QA performance, online self-refining generation further enhances the fine-tuning performance on the lower-level unsafe dataset, which suggests that we also improve the data quality for the lower-level.

\noindent\textbf{Dynamic masking further improves the safety.} So far, the default masked question index set $\mathcal{I}_M$ is sampled once and held fixed in Algorithm \ref{alg:alg1}. In safety aware fine-tuning, performance improves further by generating online responses only for questions currently flagged as unsafe, i.e., by dynamically constructing the following $\mathcal{I}_M$ for the bottom ranked questions  
\begin{align*}
\textbf{Online dynamic: }\qquad \mathcal{I}_M^k=\{i~|~ \sigma_i(\omega_t) \text{ is ranked bottom } R \text{ among all data}\},
\end{align*}
where $R=\frac{N_M}{N}$ is the online sample ratio. We call this the online dynamic strategy. Table \ref{tab:safe_tuning} shows that this dynamic strategy further improves performance on the unsafe dataset \textsc{RedOrca}, suggesting that question-level \texttt{BDS} scores and response-level \texttt{BMO} weights complement each other in practice.


\begin{table}[t]
\vspace{0.3cm}
\footnotesize
\centering
\begin{tabular}{lcccc}
\toprule
\textbf{Method} & \textsc{BlueOrca}$\downarrow$ & \textsc{RedOrca}$\downarrow$& \textsc{rm-hh-rlhf}$\downarrow$ & \textsc{RedOrca}$\downarrow$ \\
\midrule
Direct mixing ($\rho=0.5$)      & $0.88{\scriptstyle \pm 0.010}$ & $1.31{\scriptstyle \pm 0.012}$ & $0.58 \scriptstyle\pm 0.021$ & $1.77\scriptstyle\pm 0.011$ \\
Direct mixing ($\rho=1$)        & $0.94{\scriptstyle \pm 0.007}$  & $1.27{\scriptstyle \pm 0.008}$ & $0.71\scriptstyle\pm 0.036$ & $2.07\scriptstyle\pm 0.042$ \\
Random selection               & $0.92{\scriptstyle \pm 0.013}$                              & $1.25{\scriptstyle \pm 0.011}$                             & $0.74{\scriptstyle \pm 0.027}$                          & $1.63{\scriptstyle \pm 0.024}$  \\
LESS                           & $0.92{\scriptstyle \pm 0.008}$                              & $1.28{\scriptstyle \pm 0.014}$                             & $0.64{\scriptstyle \pm 0.018}$                          & $1.75{\scriptstyle \pm 0.013}$ \\
DSIR                           & $0.91{\scriptstyle \pm 0.012}$                              & $1.24{\scriptstyle \pm 0.012}$                             & $0.82{\scriptstyle \pm 0.016}$                          & $1.67{\scriptstyle \pm 0.015}$ \\
Offline \texttt{BDS}           & $0.85{\scriptstyle \pm 0.006}$ & $1.25{\scriptstyle \pm 0.005}$& $0.53 \scriptstyle\pm 0.013$ & $1.72\scriptstyle\pm 0.007$\\
\midrule
Online ($R=5\%,\, G=1$)         & $0.84{\scriptstyle \pm 0.004}$ & $1.22{\scriptstyle \pm 0.003}$ & $0.49\scriptstyle\pm 0.018$ & $1.58\scriptstyle\pm 0.011$\\
Online ($R=10\%,\, G=1$)        & $0.83{\scriptstyle \pm 0.005}$ & $1.20{\scriptstyle \pm 0.005}$ & $0.47 \scriptstyle\pm 0.015$ & $1.45\scriptstyle\pm 0.005$\\
\midrule
\textbf{Online dynamic} ($R=10\%,\, G=1$)         & \textbf{0.82} ${\scriptstyle \pm 0.004}$& \textbf{1.02} ${\scriptstyle \pm 0.006}$& $\textbf{0.43} \scriptstyle\pm 0.017$ & $\textbf{1.37}\scriptstyle\pm 0.012$\\
\bottomrule
\end{tabular}
\vspace{0.1cm}
\caption{Evaluation loss on validation \textsc{BlueOrca} and \textsc{Dahoas/rm-hh-rlhf} dataset and selected \textsc{RedOrca} dataset fine-tuned with \textsc{Llama-8b} model. \textbf{Bold} indicates the best result. Online sample ratio is defined as $R=\frac{N_M}{N}$. For online dynamic strategy, we choose $R=10\% $ and $ G=1$. } 
\label{tab:safe_tuning}
\vspace{-0.7cm}
\end{table}


\section{Conclusions and limitations}
\label{sec:conclusion}

We study the bilevel offline data selection  \texttt{BDS} and the online self-refining generation problem. We prove that the optimal solution given by offline \texttt{BDS} outperforms the direct mixing baseline if the mixing ratio is not tuned appropriately. Moreover, we demonstrate that gradient-based approach can converge to the optimal solution of \texttt{BDS} with one-layer Transformer architecture. Besides, we show that  \texttt{BDS} and  \texttt{BMO} are equivalent optimization problem that assign a validation weight to the low-quality SFT dataset either explicitly or implicitly, while \texttt{BMO} admits the closed-form response weight when adapting to the online self-refining generation setting. Experiments on LLM fine-tuning validate the effectiveness of the proposed online selection algorithm.  

\noindent\textbf{Limitations and borader impacts.}
Our theoretical guarantees rely on the separable data assumption (Assumption~\ref{ass:seperable}), which requires the overparameterized SFT model to interpolate the training data. In practice, without this assumption, one may need to run additional gradient descent on the lower-level objective to estimate its minimal function value and subtract it from the penalty objective \citep{shen2023penalty}. Our method may benefit the society by improving the quality and safety of SFT data used for LLM. 

\section*{Acknowledgement}
The work was supported by the National Science Foundation Projects
2401297, 2532349 and 2532653, and by the Cisco Research Award.


\bibliography{bilevel,optimization,LLM,multi-objective}
\bibliographystyle{plainnat}

\newpage
\appendix
\onecolumn

\begin{center}
{\Large \bf Supplementary Material }
\end{center}
\vspace{-1cm}

\doparttoc 
\faketableofcontents 


\makeatletter
\renewcommand{\partname}{} 
\renewcommand{\thepart}{}  
\makeatother
\part{} 
\parttoc 

\section{Convexity of SFT loss with respect to the backbone model} \label{sec:convex_SFT} 

With causal masking in attention \citep{vaswani2017attention}, the backbone model $\phi_\theta(x,y)$ is unable to see the future tokens before predicting, even if we input the whole sequence \citep{ren2024learning}. Therefore, the SFT loss can be viewed as the token-level cross-entropy loss of a sequential multi-class classification problem, where the label is the next token in the response $y$. Due to the nonlinearity of $\phi_\theta(x, y)$, SFT loss might not be convex with respect to $\theta$, but it is convex with respect to the backbone representation $z = \phi_\theta(x, y)\in\mathbb{R}^{V\times D}$. 

\begin{lemma} 
\label{lemma:convex_SFT_main_paper}
Per-sample SFT loss in \eqref{eq:SFT_def} is convex with respect to the backbone model $z = \phi_\theta(x, y)\in\mathbb{R}^{V\times D}$. 
\end{lemma}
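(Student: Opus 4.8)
\textbf{Proof plan for Lemma \ref{lemma:convex_SFT_main_paper}.}

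The plan is to reduce the statement to the standard fact that the multi-class cross-entropy (log-softmax) loss is convex in its logits, and then argue that the per-sample SFT loss is a finite sum of such terms, each depending affinely (in fact, by coordinate selection) on the matrix $z=\phi_\theta(x,y)\in\mathbb{R}^{V\times D}$. Concretely, I would first recall from the preliminaries that, thanks to causal masking, the token-$d$ policy is $\pi_\theta(y_d\mid x,y_{<d}) = \sigma(\phi_\theta(x,y))_{[:,d]} = \sigma(z_{[:,d]})$, i.e. the $d$-th column of $z$ is the logit vector for predicting the $d$-th response token. Substituting into \eqref{eq:SFT_def} gives
\begin{align*}
\mathcal{L}_{\text{SFT}}(\theta;x,y) = -\sum_{d=1}^{D}\mathbf{e}_{y_d}^\top \log\sigma(z_{[:,d]}) = \sum_{d=1}^{D}\Bigl(\log\textstyle\sum_{v=1}^{V}\exp\bigl((z_{[:,d]})_v\bigr) - (z_{[:,d]})_{y_d}\Bigr),
\end{align*}
so the loss is expressed purely as a function of $z$, with no further dependence on $\theta$.

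Next I would show each summand $g_d(z) := \log\sum_v \exp((z_{[:,d]})_v) - (z_{[:,d]})_{y_d}$ is convex in $z$. The term $z\mapsto (z_{[:,d]})_{y_d}$ is linear, hence convex and concave; it suffices that $z\mapsto \log\sum_v\exp((z_{[:,d]})_v)$ is convex. The map $z\mapsto z_{[:,d]}$ is a linear (coordinate-projection) map from $\mathbb{R}^{V\times D}$ to $\mathbb{R}^V$, and log-sum-exp is a convex function on $\mathbb{R}^V$ (its Hessian $\mathrm{diag}(p)-pp^\top$ with $p=\sigma(\cdot)$ is PSD, which one can verify via Cauchy–Schwarz, or cite it as standard, e.g. Boyd–Vandenberghe); composition of a convex function with a linear map is convex. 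Hence each $g_d$ is convex in $z$, and therefore so is the finite sum $\mathcal{L}_{\text{SFT}} = \sum_{d=1}^D g_d$, since nonnegative (here, unit-coefficient) sums of convex functions are convex. This completes the argument.

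There is no real obstacle here — the only thing to be careful about is bookkeeping the two-index structure of $z\in\mathbb{R}^{V\times D}$: one must state explicitly that columns are logit vectors over the vocabulary and rows are indexed by tokens, so that the coordinate-selection maps $z\mapsto z_{[:,d]}$ and $z\mapsto (z_{[:,d]})_{y_d}$ are visibly linear in $z$. Once that identification is made, convexity is immediate from log-sum-exp convexity plus closure of convex functions under linear precomposition and summation. I would also remark (matching the surrounding text) that this convexity is only in $z$, not in $\theta$, because $\phi_\theta$ is nonlinear; this is exactly why later arguments characterize minimizer sets and weak Pareto fronts through the image $\operatorname{Im}(\phi_\theta)$ rather than the parameter space directly.
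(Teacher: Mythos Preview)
Your proof is correct and reaches the same conclusion as the paper, but by a somewhat different and more economical route. The paper's argument is a direct second-order computation: after writing the loss as $-\sum_d \mathbf{e}_{y_d}^\top\log\sigma(z_{[:,d]})$ it explicitly differentiates to obtain $\partial\mathcal{L}/\partial z_{[:,d]}=\sigma(z_{[:,d]})-\mathbf{e}_{y_d}$, then assembles the full Hessian with respect to $\operatorname{vec}(z)\in\mathbb{R}^{VD}$, shows it is block-diagonal with blocks $\nabla\sigma(z_{[:,d]})=\operatorname{diag}(\sigma(z_{[:,d]}))-\sigma(z_{[:,d]})\sigma(z_{[:,d]})^\top$, and verifies positive semidefiniteness by expanding the quadratic form $\operatorname{vec}(u)^\top(\partial^2\mathcal{L}/\partial z^2)\operatorname{vec}(u)=\sum_d u_{[:,d]}^\top\nabla\sigma(z_{[:,d]})u_{[:,d]}\ge 0$. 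You instead recognize immediately that each summand is log-sum-exp composed with the linear projection $z\mapsto z_{[:,d]}$, minus a linear term, and invoke closure of convexity under affine precomposition and nonnegative sums. Both arguments ultimately rest on the same PSD fact for $\operatorname{diag}(p)-pp^\top$; yours avoids the vectorization bookkeeping and is shorter, while the paper's explicit Jacobian and block-Hessian formulas are a side benefit that could be reused in later gradient computations.
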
 

Due to the nonlinear backbone model $\phi_\theta(x, y)$, SFT loss might not be convex with respect to $\theta$, which makes the optimization landscape complicated. Nevertheless, the building block of our theory lies in the convexity of the SFT loss with respect to the backbone representation $z = \phi_\theta(x, y)\in\mathbb{R}^{V\times D}$, which makes the SFT loss a composite convex function over $\theta$.

\begin{proof}

According to \eqref{eq:SFT_def}, SFT loss takes the form of 
\begin{align*}\label{eq:SFT_backbone}
\mathcal{L}_{\text{SFT}}(\theta;x,y)&=-\sum_{d=1}^D \mathbf{e}_{y_d}^\top \log \pi_\theta(y_d\mid x,y_{<d})=-\sum_{d=1}^D \mathbf{e}_{y_d}^\top\log\sigma\left(\phi_\theta(x,y)\right)_{[:,d]}\\
&=-\sum_{d=1}^D \mathbf{e}_{y_d}^\top\log \sigma(z)_{[:,d]} =-\sum_{d=1}^D \mathbf{e}_{y_d}^\top\log \sigma(z_{[:,d]}).  \numberthis
\end{align*}
The Jacobian of softmax function is well-known \citep{gao2017properties} and is given by 
\begin{align*}
\nabla\sigma(z_{[:,d]})=\operatorname{diag}\left(\sigma(z_{[:,d]})\right)-\sigma(z_{[:,d]})\sigma(z_{[:,d]})^\top. 
\end{align*}
Therefore, the derivative of $\mathcal{L}_{\text{SFT}}(\theta;x,y)$ with respect to $z_{[:,d]}$ is given by 
\begin{align*}
\frac{\partial\mathcal{L}_{\text{SFT}}(\theta;x,y)}{\partial z_{[:,d]}}&=\nabla\sigma(z_{[:,d]})\frac{\partial\mathcal{L}_{\text{SFT}}(\theta;x,y)}{\partial \sigma (z_{[:,d]})}\\
&=-\left[\operatorname{diag}\left(\sigma(z_{[:,d]})\right)-\sigma(z_{[:,d]})\sigma(z_{[:,d]})^\top\right]\left[\frac{(\mathbf{e}_{y_d})_1}{\sigma(z_{[:,d]})_1},\cdots,\frac{(\mathbf{e}_{y_d})_V}{\sigma(z_{[:,d]})_V}\right]^\top\\
&\stackrel{(a)}{=}\sigma(z_{[:, d]})-\mathbf{e}_{y_d}\numberthis\label{Jacobian_individual} 
\end{align*}
where $(\mathbf{e}_{y_d})_i$ denotes the $i$-the element in the one-hot vector $\mathbf{e}_{y_d}\in\mathbb{R}^V$ and $\sigma(z_{[:,d]})_i$ denotes the $i$-the element in the softmax vector $\sigma(z_{[:,d]})\in\mathbb{R}^V$, and $(a)$ uses the fact that $\sum_i(\mathbf{e}_{y_d})_i=1$. Moreover, the Jacobian of $\mathcal{L}_{\text{SFT}}(\theta;x,y)$ with respect to $z$ is given by
\begin{align}\label{Jacobian_sum} 
\frac{\partial \mathcal{L}_{\text{SFT}}(\theta; x, y)}{\partial z}
&= \left[
\frac{\partial \mathcal{L}_{\text{SFT}}(\theta; x, y)}{\partial z_{[:,1]}},
\; \cdots \;,
\frac{\partial \mathcal{L}_{\text{SFT}}(\theta; x, y)}{\partial z_{[:,D]}}
\right]\nonumber\\
&=\left[
\sigma(z_{[:, 1]})-\mathbf{e}_{y_1},
\; \cdots \;,
\sigma(z_{[:, D]})-\mathbf{e}_{y_D}
\right]
\end{align}
On the other hand, the second-order derivative of $\mathcal{L}_{\text{SFT}}(\theta;x,y)$ with respect to $z_{[:,d]}$ is given by 
\begin{align*}
\frac{\partial^2\mathcal{L}_{\text{SFT}}(\theta;x,y)}{\partial^2 z_{[:,d]}}&=\nabla\sigma(z_{[:, d]}), ~\text{ and } ~\frac{\partial^2\mathcal{L}_{\text{SFT}}(\theta;x,y)}{\partial z_{[:,d]}\partial z_{[:,\tilde d]}}=\textbf{0}^{V\times D}, \text{ if } \tilde d\neq d. \numberthis\label{Hessian_individual} 
\end{align*}
where $\textbf{0}^{V\times D}\in\mathbb{R}^{V\times D}$ is the zero matrix. The Hessian of $\mathcal{L}_{\text{SFT}}(\theta;x,y)$ over $z$ is defined as the Hessian of the vectorized $z$, i.e. $\frac{\partial^2 \mathcal{L}_{\text{SFT}}(\theta; x, y)}{\partial z}\in\mathbb{R}^{Vd\times Vd}$. We define the vectorized $z$ as 
\begin{align}\label{vector_matrix}
\operatorname{vec}(z)=[z_{11}, z_{21},\cdots, z_{V1},\cdots, z_{1D},z_{2n},\cdots z_{VD}]
\end{align} 
Let $i=(d_1-1)V+v_1$ and $j=(d_2-1)V+v_2$, then each element of the Hessian is   
\begin{align}\label{Hessian_sum} 
\left[\frac{\partial^2 \mathcal{L}_{\text{SFT}}(\theta; x, y)}{\partial z}\right]_{ij}
&= \left[\frac{\partial^2 \mathcal{L}_{\text{SFT}}(\theta; x, y)}{\partial \operatorname{vec}(z)}\right]_{ij}=
\begin{cases}
\nabla\sigma(z_{[:, d_1]})_{v_1,v_2} & \text{if } d_1=d_2 \\
0 & \text{if } d_1\neq d_2
\end{cases}
\end{align}
Then for any matrix $u\in\mathbb{R}^{V\times D}$, it holds that 
\begin{align*}
&~~~~~\operatorname{vec}(u)^\top \frac{\partial^2 \mathcal{L}_{\text{SFT}}(\theta; x, y)}{\partial z} \operatorname{vec}(u)\\
&=\sum_{i=1}^{VD} \sum_{j=1}^{VD}\operatorname{vec}(u)_i^\top  \left[\frac{\partial^2 \mathcal{L}_{\text{SFT}}(\theta; x, y)}{\partial z}\right]_{ij} \operatorname{vec}(u)_j\\
&=\sum_{d_1=1}^{D}\sum_{d_2=1}^{D}\sum_{v_1=1}^{V}\sum_{v_2=1}^{V} \operatorname{vec}(u)_{((d_1-1)V+v_1)}  \left[\frac{\partial^2 \mathcal{L}_{\text{SFT}}(\theta; x, y)}{\partial z}\right]_{((d_1-1)V+v_1)((d_2-1)V+v_2)} \operatorname{vec}(u)_{((d_2-1)V+v_2)}\\
&\stackrel{\eqref{Hessian_sum}}{=}\sum_{d=1}^{D}\sum_{v_1=1}^{V}\sum_{v_2=1}^{V} \operatorname{vec}(u)_{((d-1)V+v_1)}\left[\frac{\partial^2 \mathcal{L}_{\text{SFT}}(\theta; x, y)}{\partial z}\right]_{((d-1)V+v_1)((d-1)V+v_2)} \operatorname{vec}(u)_{((d-1)V+v_2)}\\
&=\sum_{d=1}^{D}\sum_{v_1=1}^{V}\sum_{v_2=1}^{V} u_{v_1,d}\left[\frac{\partial^2 \mathcal{L}_{\text{SFT}}(\theta; x, y)}{\partial z}\right]_{((d-1)V+v_1)((d-1)V+v_2)} u_{v_2,d}\\
&\stackrel{\eqref{Hessian_sum}}{=}\sum_{d=1}^{D}\sum_{v_1=1}^{V}\sum_{v_2=1}^{V} u_{v_1,d}\nabla \sigma(z_{[:,d]})_{v_1,v_2} u_{v_2,d}\\
&=\sum_{d=1}^{D}u_{[:,d]}^\top \nabla \sigma(z_{[:,d]})u_{[:,d]}\geq 0
\end{align*}
where the last inequality follows from the positive semi-definiteness of each gradient of softmax function $\nabla \sigma(z_{[:,d]})$ \citep{gao2017properties}. Therefore, the Hessian $\frac{\partial^2 \mathcal{L}_{\text{SFT}}(\theta; x, y)}{\partial z}$ is positive semi-definite for any $z$, suggesting that $\mathcal{L}_{\text{SFT}}(\theta; x, y)$  is convex over $z$, which completes the proof.



\end{proof}

\section{Proofs of the main theorems} 
\label{sec:proof_theorem}

\subsection{Proof of Theorem \ref{thm:benefits_bdr}}
\label{sec:proof_benefits}

Let $v_*:=\min_\theta \mathcal{L}_{\rm val}(\theta)$ and 
assume there exists a feasible pair $(\bar\omega,\bar\theta)$ for \texttt{BDS} such that $\mathcal{L}_{\rm val}(\bar\theta)=v_*$. Because $\min\mathcal{L}_{\rm val}$ and $\min\frac{1}{N}\sum_{i=1}^N \mathcal{L}_{\mathrm{SFT}}(\theta;x^i,y^i)$ do not share the optimal model $\theta$, then we know 
\[
\Delta_{\rm val}:=\min_{\theta\in\mathcal{S}_{\mathrm{val}}}\frac{1}{N}\sum_{i=1}^N \mathcal{L}_{\mathrm{SFT}}(\theta;x^i,y^i)>0,
\]
\[
\Delta_{\rm sft}:=\min_{\theta\in\mathcal{S}_{\rm sft}}\mathcal{L}_{\rm val}(\theta)-v_*>0.
\]
We will show that the conclusion of Theorem \ref{thm:benefits_bdr} holds with the explicit threshold
\[
\rho_c:=\frac{\Delta_{\rm sft}}{\Delta_{\rm val}+\Delta_{\rm sft}}.
\]

\begin{proof}
By assumption, there exists a feasible pair $(\bar\omega,\bar\theta)$ for \texttt{BDS} such that $\mathcal{L}_{\rm val}(\bar\theta)=v_*$. Since \texttt{BDS} minimizes the validation loss over its feasible set, any global solution $(\omega^*,\theta^*)$ satisfies
\[
\mathcal{L}_{\rm val}(\theta^*)\leq \mathcal{L}_{\rm val}(\bar\theta)=v_*.
\]
Because $v_*=\min_\theta \mathcal{L}_{\rm val}(\theta)$ is the global minimum of the validation objective, we also have $\mathcal{L}_{\rm val}(\theta^*)\geq v_*$. Hence $\mathcal{L}_{\rm val}(\theta^*)=v_*.$

Fix any $\rho>\rho_c$. Let $\tilde\theta\in\mathcal{S}_{\rm mix}(\rho)$ be a global minimizer of the direct-mixing objective
\[
\frac{\rho}{N}\sum_{i=1}^N \mathcal{L}_{\mathrm{SFT}}(\theta;x^i,y^i)+(1-\rho)\mathcal{L}_{\rm val}(\theta).
\]
We claim that $\mathcal{L}_{\rm val}(\tilde\theta)>v_*$. Otherwise, we have $\mathcal{L}_{\rm val}(\tilde\theta)=v_*$. Then, by the definition of $\Delta_{\rm val}$,
\[
\frac{\rho}{N}\sum_{i=1}^N \mathcal{L}_{\mathrm{SFT}}(\tilde\theta;x^i,y^i)+(1-\rho)\mathcal{L}_{\rm val}(\tilde\theta)
\geq
\rho \Delta_{\rm val}+(1-\rho)v_*.
\]
On the other hand, choose any $\theta_{\rm sft}\in \mathcal{S}_{\rm sft}$ such that
\[
\mathcal{L}_{\rm val}(\theta_{\rm sft})=v_*+\Delta_{\rm sft},
\]
which exists by the definition of $\Delta_{\rm sft}$. Since $\frac{1}{N}\sum_{i=1}^N \mathcal{L}_{\mathrm{SFT}}(\theta_{\rm sft};x^i,y^i)=0$ under Assumption \ref{ass:seperable}, its direct-mixing objective value is
\[
\frac{\rho}{N}\sum_{i=1}^N \mathcal{L}_{\mathrm{SFT}}(\theta_{\rm sft};x^i,y^i)+(1-\rho)\mathcal{L}_{\rm val}(\theta_{\rm sft})
=(1-\rho)(v_*+\Delta_{\rm sft}).
\]
Because
\[
\rho>\rho_c=\frac{\Delta_{\rm sft}}{\Delta_{\rm val}+\Delta_{\rm sft}}
\quad\Longleftrightarrow\quad
\rho \Delta_{\rm val}>(1-\rho)\Delta_{\rm sft},
\]
we obtain
\begin{align*}
\frac{\rho}{N}\sum_{i=1}^N \mathcal{L}_{\mathrm{SFT}}(\tilde\theta;x^i,y^i)+(1-\rho)\mathcal{L}_{\rm val}(\tilde\theta)&=\rho \Delta_{\rm val}+(1-\rho)v_*>(1-\rho)(v_*+\Delta_{\rm sft})\\
&=\frac{\rho}{N}\sum_{i=1}^N \mathcal{L}_{\mathrm{SFT}}(\theta_{\rm sft};x^i,y^i)+(1-\rho)\mathcal{L}_{\rm val}(\theta_{\rm sft}),
\end{align*}
which contradicts the optimality of $\tilde\theta$. Therefore, every direct-mixing minimizer satisfies $\mathcal{L}_{\rm val}(\tilde\theta)>v_*$.

Combining this with $\mathcal{L}_{\rm val}(\theta^*)=v_*$ yields
\[
\mathcal{L}_{\rm val}(\theta^*)=v_*<\mathcal{L}_{\rm val}(\tilde\theta),
\]
which completes the proof.
\end{proof}

\subsection{Proof of Proposition \ref{prop:coverage_bdr}}
\label{sec:proof_coverage_bdr}

\begin{proof}
Let $(\omega^*,\theta^*)$ be any global solution of \texttt{BDS}, and write \(\lambda^*:=\sigma(\omega^*)\in\Delta^N\).
Let
\[
\mathcal{U}:=\left\{i\in[N]~\middle|~\exists \theta\in \mathcal{S}_{\rm val}\text{ such that }\mathcal{L}_{\mathrm{SFT}}(\theta;x^i,y^i)=0\right\}.
\]

We first show that \(\theta^*\in\mathcal{S}_{\rm val}\). By Assumption \ref{ass:setting}, there exists an index \(i_0\in[N]\) such that
\[
\min_{\theta\in\mathcal{S}_{\rm val}}\mathcal{L}_{\mathrm{SFT}}(\theta;x^{i_0},y^{i_0})=0.
\]
Choose \(\bar\theta\in\mathcal{S}_{\rm val}\) satisfying \(\mathcal{L}_{\mathrm{SFT}}(\bar\theta;x^{i_0},y^{i_0})=0\), and let \(\bar\lambda=e_{i_0}\in\Delta^N\). Then \(\bar\theta\in\argmin_{\theta^\prime}\frac1N\sum_{i=1}^N\bar\lambda_i\mathcal{L}_{\mathrm{SFT}}(\theta^\prime;x^i,y^i)\), so \((\bar\lambda,\bar\theta)\) is feasible for \texttt{BDS}. Since \(\mathcal{L}_{\rm val}(\bar\theta)=v_*\), every global solution of \texttt{BDS} must satisfy
\[
\mathcal{L}_{\rm val}(\theta^*)=v_*,
\]
that is, \(\theta^*\in\mathcal{S}_{\rm val}\).

Next, because \((\lambda^*,\theta^*)\) is feasible for \texttt{BDS} and Assumption \ref{ass:seperable} gives a parameter \(\hat\theta\) with \(\mathcal{L}_{\mathrm{SFT}}(\hat\theta;x^i,y^i)=0\) for all \(i\in[N]\), the optimal value of the lower-level weighted problem is \(0\). Hence
\[
\frac1N\sum_{i=1}^N \lambda_i^* \mathcal{L}_{\mathrm{SFT}}(\theta^*;x^i,y^i)=0.
\]
Since every term is nonnegative, \(\lambda_i^*>0\) implies
\[
\mathcal{L}_{\mathrm{SFT}}(\theta^*;x^i,y^i)=0.
\]
Because \(\theta^*\in\mathcal{S}_{\rm val}\), every index in the support of \(\lambda^*\) belongs to the useful set \(\mathcal{U}\). Therefore
\[
\sum_{i\in\mathcal U}\lambda_i^*=\sum_{i=1}^N\lambda_i^*=1,
\]
which proves the first claim.

For direct mixing, the lower-level weights are always uniform, namely
\[
\lambda^{\rm mix}:=\left(\frac1N,\ldots,\frac1N\right).
\]
Therefore
\[
\sum_{i\notin\mathcal U}\lambda_i^{\rm mix}
=\sum_{i\notin\mathcal U}\frac1N
=1-\frac{|\mathcal U|}{N}.
\]
If useless samples exist, then \(|\mathcal U|<N\), so \(\sum_{i\notin\mathcal U}\lambda_i^{\rm mix}>0\) for every \(\rho>0\).
\end{proof}

\subsection{Proof of Theorem \ref{thm:offline_pgdc}}
\label{sec:proof_offline_pgdc}
In this section, we provide the proof of axillary lemmas and theorems for proving Theorem~\ref{thm:offline_pgdc}. 

First, the deterministic offline PBGD iterates are summarized in Algorithm~\ref{alg:offline_PBGD}.
\begin{algorithm}[t]
\caption{\texttt{PBGD} for offline \texttt{BDS}}
\begin{algorithmic}[1]
\STATE Initialize $\omega^0$ and $\theta^0=(W_{kq}^0,W_{ov}^0)$; choose step sizes $\{\eta_k\}_{k\ge0}$.
\FOR{$k=0,1,2,\ldots$}
\STATE Update all variables by the full-gradient penalty step
\STATE \begin{minipage}[t]{0.95\linewidth}
\begin{subequations}\label{eq:offline_PBGD_det_appendix}
\begin{align}
    \omega^{k+1}
    &=
    \omega^k-\eta_k\nabla_\omega \mathcal{L}_\gamma(\omega^k,\theta^k),\\
    W_{ov}^{k+1}
    &=
    W_{ov}^k-\eta_k\nabla_{W_{ov}}\mathcal{L}_\gamma(\omega^k,\theta^k),\\
    W_{kq}^{k+1}
    &=
    W_{kq}^k-\eta_k\nabla_{W_{kq}}\mathcal{L}_\gamma(\omega^k,\theta^k).
\end{align}
\end{subequations}
\end{minipage}
\ENDFOR
\end{algorithmic}
\label{alg:offline_PBGD}
\end{algorithm}
We then introduce additional notations in this section. 
For each training sample $(x^i,y^i)$ and token position $d\in[D]$, let
$X_d^i=\mathsf{Rep}(x^i,y^i_{<d})\in\mathbb R^{V\times L_d}$ denote the
masked context representation used to predict $y_d^i$, where
$\mathsf{Rep}(\cdot)$ denotes the embedding and causal-masking operation
and $L_d=d_x+d-1$. Let $X_{d,-1}^i\in\mathbb R^V$ denote the query column
in $X_d^i$ associated with the prediction of $y_d^i$. Similarly, for each
validation sample $(\tilde x^j,\tilde y^j)$, define
$\tilde X_d^j=\mathsf{Rep}(\tilde x^j,\tilde y^j_{<d})
\in\mathbb R^{V\times L_d}$ and
$\tilde X_{d,-1}^j\in\mathbb R^V$. We define 
\begin{align*}
    \mathcal{S}_\mathrm{token}=\{(X_d^i,y_d^i)\}_{i\in[N],d\in[D]}\cup\{(\widetilde X_d^j,\widetilde y_d^j)\}_{j\in[N'],d\in[D]}.
\end{align*}

For any token-level
training or validation sample$(X,y)\in\mathcal{S}_\mathrm{token}$, in the one-layer softmax Transformer, define
\begin{align*}
    B_\theta(X)
    :=
    X^\top W_{kq}X_{-1},\quad
    h_\theta(X)
    :=
    X\sigma(B_\theta(X)),\quad
    U_\theta(X)
    :=
    W_{ov}h_\theta(X),
\end{align*}
and the weighted token-level feature matrix
\[
H(\omega,\theta)
:=
\left[
\left\{
\frac{1}{\sqrt{N'D}}h_\theta(\widetilde X_d^j)
\right\}_{j\in[N'],\,d\in[D]},
\left\{
\sqrt{\frac{\gamma\sigma_i(\omega)}{ND}}h_\theta(X_d^i)
\right\}_{i\in[N],\,d\in[D]}
\right]
\in\mathbb R^{V\times (N'+N)D}.
\]
Similarly, define the weighted logit-gradient matrix
{\small\begin{align*}
G(\omega,\theta)
:=
\left[
\left\{
\frac{1}{\sqrt{N'D}}
\nabla g_{\widetilde y_d^j}\!\left(U_\theta(\widetilde X_d^j)\right)
\right\}_{j\in[N'],\,d\in[D]},
\left\{
\sqrt{\frac{\gamma\sigma_i(\omega)}{ND}}
\nabla g_{y_d^i}\!\left(U_\theta(X_d^i)\right)
\right\}_{i\in[N],\,d\in[D]}
\right]
\in\mathbb R^{V\times (N'+N)D}
\end{align*}}

where the columns of \(G(\omega,\theta)\) and \(H(\omega,\theta)\) are ordered in the
same way.

For notational simplicity, along the offline PBGD trajectory we write
$H^k:=H(\omega^k,\theta^k)$. We assume the overparameterized regime
$V\ge (N+N')D$, so that the feature matrix
$H(\omega,W_{kq})\in\mathbb R^{V\times (N+N')D}$ can be full column rank, i.e., the condition $\sigma_{\min}(H(\omega,W_{kq}))>0$.


For a single token $y$, define
$g_y(u):=-\mathbf e_y^\top\log\sigma(u)=-\log[\sigma(u)]_y$. 
For any $(X,y)\in \mathcal{S}_{\mathrm{token}}$, define $l(\theta;X,y)= g_y(U_\theta(X))$.
Then
\begin{align*}
\mathcal L_{\rm SFT}(\theta;x^i,y^i)
&=
\frac1D\sum_{d=1}^{D}
g_{y_d^i}\!\left(U_\theta(X_d^i)\right)=\frac{1}{D}\sum_{d=1}^Dl(\theta;X^i_d,y^i_d),\\
\mathcal L_{\rm SFT}(\theta;\tilde x^j,\tilde y^j)
&=
\frac1D\sum_{d=1}^{D}
g_{\tilde y_d^j}\!\left(U_\theta(\tilde X_d^j)\right)=\frac1D\sum_{d=1}^{D}l(\theta;\tilde X^j_d,\tilde y^j_d).    
\end{align*}

For any integer \(m\ge 2\), we use $\sigma$ to denote the softmax map on \(\mathbb R^m\). Its Jacobian is denoted by
\[
J_\sigma(z)
:= 
\operatorname{Diag}(\sigma(z))-\sigma(z)\sigma(z)^\top
\in\mathbb R^{m\times m},
\qquad z\in\mathbb R^m.
\]

\begin{assumption}[Bounded token-level geometry]
\label{ass:token_geometry}
For the tokenized input of training samples $X^i$ and validation samples $\tilde X^j$, define
\[
B_X
:=
\max\left\{
\max_{i\in[N],\,d\in[D]}\sigma_{\max}(X_d^i),
\max_{j\in[N'],\,d\in[D]}\sigma_{\max}(\tilde X_d^j)
\right\},
\]
and
\[
B_{-1}
:=
\max\left\{
\max_{i\in[N],\,d\in[D]}\|X_{d,-1}^i\|_2,
\max_{j\in[N'],\,d\in[D]}\|\tilde X_{d,-1}^j\|_2
\right\}.
\]
We assume that $B_X<\infty$ and $B_{-1}<\infty$.
\end{assumption}

\subsubsection{Sample-level regularity}
We denote two parameters $\theta=(W_{ov},W_{kq})$, $\theta'=(W_{ov}',W_{kq}')$.
We also denote $R(\theta,\theta'):=\max\{\|W_{ov}\|_F,\|W_{ov}'\|_F\}$.
Under Assumption~\ref{ass:token_geometry}, all token-level samples considered in
this subsection satisfy $\sigma_{\max}(X)\le B_X, \|X_{-1}\|_2\le B_{-1}.$

\begin{lemma}[Sample gradient formulas]
\label{lem:sample-gradient-formula}
Under the above notations, for any
\((X,y)\in\mathcal S_{\mathrm{token}}\), the sample gradients satisfy
\begin{align*}
    \nabla_{W_{\mathrm{ov}}}l(\theta;X,y)
    &=
    \bigl(\sigma(U_\theta(X))-\mathbf e_{I(y)}\bigr)
    h_\theta(X)^\top,\\
    \nabla_{W_{\mathrm{kq}}}l(\theta;X,y)
    &=
    XJ_\sigma(B_\theta(X))X^\top W_{\mathrm{ov}}^\top
    \bigl(\sigma(U_\theta(X))-\mathbf e_{I(y)}\bigr)
    X_{-1}^\top.
\end{align*}
\end{lemma}

\begin{lemma}[Gradient formulas for the penalty objective]\label{lem:penalty-gradient-formula}
Under the above notations, recall that
\begin{align*}
\mathcal L_\gamma(\omega,\theta)
=
\frac{1}{N'}\sum_{j=1}^{N'}\mathcal L_{\mathrm{SFT}}(\theta;\widetilde X^{\,j},\widetilde y^{\,j})
+
\gamma\sum_{i=1}^{N}[\sigma(\omega)]_i\,\mathcal L_{\mathrm{SFT}}(\theta;X^i,y^i).
\end{align*}
Then the gradients with respect to $W_\mathrm{ov}, W_\mathrm{kq}$ and $\omega$ satisfy
\begin{align*}
\nabla_{W_{\mathrm{ov}}}\mathcal L_\gamma(\omega,\theta)
&=
\frac{1}{N'D}\sum_{j=1}^{N'}\sum_{d=1}^D\nabla_{W_{\mathrm{ov}}}l(\theta;\widetilde X^{\,j}_d,\widetilde y^{\,j}_d)
+
\frac{\gamma}{ND}\sum_{i=1}^{N}\sum_{d=1}^D[\sigma(\omega)]_i\,\nabla_{W_{\mathrm{ov}}}l(\theta;X^i_d,y^i_d),\\
\nabla_{W_{\mathrm{kq}}}\mathcal L_\gamma(\omega,\theta)
&=
\frac{1}{N'D}\sum_{j=1}^{N'}\sum_{d=1}^D\nabla_{W_{\mathrm{kq}}}l(\theta;\widetilde X^{\,j}_d,\widetilde y^{\,j}_d)
+
\frac{\gamma}{ND}\sum_{i=1}^{N}\sum_{d=1}^D[\sigma(\omega)]_i\,\nabla_{W_{\mathrm{kq}}}l(\theta;X^i_d,y^i_d),
\end{align*}
and
\begin{align*}
\nabla_{\omega}\mathcal L_\gamma(\omega,\theta)
&=
\frac{\gamma}{N}\,J_\sigma(\omega)\,\ell_{\mathrm{seq}}(\theta),\quad
\ell_{\mathrm{seq}}(\theta)
:=
\begin{bmatrix}
\mathcal L_{\mathrm{SFT}}(\theta;X^1,y^1)\\
\vdots\\
\mathcal L_{\mathrm{SFT}}(\theta;X^N,y^N)
\end{bmatrix}
=
\frac1D
\begin{bmatrix}
\sum_{d=1}^D l(\theta;X_d^1,y_d^1)\\
\vdots\\
\sum_{d=1}^D l(\theta;X_d^N,y_d^N)
\end{bmatrix}.
\end{align*}
\end{lemma}

\begin{lemma}[Lipschitz continuity of softmax and its Jacobian]
\label{lem:softmax-jacobian-basic}
Let \(m\ge 2\) be arbitrary, and let
\(\sigma:\mathbb R^m\to\mathbb R^m\) be the softmax map. For any
\(z\in\mathbb R^m\), its Jacobian is
\[
J_\sigma(z)
=
\operatorname{Diag}(\sigma(z))-\sigma(z)\sigma(z)^\top .
\]
Then, for all \(a,b,z\in\mathbb R^m\), the following hold:
\begin{enumerate}
    \item[\textup{(i)}]
    $\|J_\sigma(z)\|_2\le \frac12,
    \qquad
    \|\sigma(a)-\sigma(b)\|_2
    \le
    \frac12\|a-b\|_2.$
    \item[\textup{(ii)}]
    $\|J_\sigma(a)-J_\sigma(b)\|_2
    \le
    \frac32\|a-b\|_2.$
    \item[\textup{(iii)}]
    $
    \|\sqrt{\sigma(a)}-\sqrt{\sigma(b)}\|_2
    \le
    \frac{\sqrt2}{4}\|a-b\|_2,$
    where the square root is taken componentwise.
\end{enumerate}
\end{lemma}

\begin{proof}
For \textup{(i)}, let \(p=\sigma(z)\). Direct differentiation gives
\[
[J_\sigma(z)]_{ij}
=
p_i(\mathbf 1_{\{i=j\}}-p_j),
\qquad
J_\sigma(z)
=
\operatorname{Diag}(p)-pp^\top .
\]
For any \(v\in\mathbb R^m\),
\[
v^\top J_\sigma(z)v
=
\sum_{i=1}^m p_i v_i^2
-
\left(\sum_{i=1}^m p_i v_i\right)^2
=
\sum_{1\le i<j\le m}p_ip_j(v_i-v_j)^2 .
\]
Using \((v_i-v_j)^2\le 2(v_i^2+v_j^2)\) and $p_i(1-p_i)\leq\frac{1}{2}$, we have
\[
v^\top J_\sigma(z)v
\le
2\sum_{i=1}^m p_i(1-p_i)v_i^2
\le
\frac12\|v\|_2^2 .
\]
Since \(J_\sigma(z)\) is symmetric positive semidefinite, this implies
\(\|J_\sigma(z)\|_2\le 1/2\). And therefore $\|\sigma(a)-\sigma(b)\|_2
\le
\frac12\|a-b\|_2$.

For \textup{(ii)}, let \(p=\sigma(a)\), \(q=\sigma(b)\), and \(d=p-q\).
Then
\[
J_\sigma(a)-J_\sigma(b)
=
\operatorname{Diag}(d)-(pp^\top-qq^\top).
\]
Since \(pp^\top-qq^\top=pd^\top+dq^\top\), we obtain
\[
\|J_\sigma(a)-J_\sigma(b)\|_2
\le
\|\operatorname{Diag}(d)\|_2
+
\|pd^\top+dq^\top\|_2
\le
\|d\|_2+(\|p\|_2+\|q\|_2)\|d\|_2.
\]
As \(p\) and \(q\) are probability vectors, \(\|p\|_2\le1\) and
\(\|q\|_2\le1\). Hence
\[
\|J_\sigma(a)-J_\sigma(b)\|_2
\le
3\|\sigma(a)-\sigma(b)\|_2
\le
\frac32\|a-b\|_2.
\]

For \textup{(iii)}, define \(F(z):=\sqrt{\sigma(z)}\), where the square root
is componentwise. For \(p=\sigma(z)\) and any direction \(v\in\mathbb R^m\),
\[
[DF(z)v]_i
=
\frac{1}{2\sqrt{p_i}}[J_\sigma(z)v]_i
=
\frac12\sqrt{p_i}(v_i-p^\top v).
\]
Therefore,
\[
\|DF(z)v\|_2^2
=
\frac14\sum_{i=1}^m p_i(v_i-p^\top v)^2
=
\frac14 v^\top J_\sigma(z)v
\le
\frac18\|v\|_2^2 .
\]
Thus \(\|DF(z)\|_2\le \sqrt2/4\). Applying the fundamental theorem of
calculus to \(F\) along the segment from \(b\) to \(a\), we get
\[
\|\sqrt{\sigma(a)}-\sqrt{\sigma(b)}\|_2
=
\|F(a)-F(b)\|_2
\le
\frac{\sqrt2}{4}\|a-b\|_2.
\]
This completes the proof.
\end{proof}

\begin{lemma}[Lipschitz continuity of the hidden feature map]\label{lem:h-basic}
Under Assumption~\ref{ass:token_geometry}. For any fixed sample $X$, the hidden feature map $h_\theta(X)=X\,\phi(X^\top W_{\mathrm{kq}}X_{-1})$ depends only on $W_{\mathrm{kq}}$ and satisfies
\begin{align*}
&\|h_{\theta'}(X)-h_\theta(X)\|_2
\le
\frac12\,B_X^2\,B_{-1}\,\|W_{\mathrm{kq}}'-W_{\mathrm{kq}}\|_F.\\
&\|h_\theta(X)\|_2\le B_X,
\qquad
\|h_{\theta'}(X)\|_2\le B_X.
\end{align*}
\end{lemma}
\begin{proof}
The lemma follows directly from Lemma~\ref{lem:softmax-jacobian-basic}
\end{proof}

\begin{lemma}[Blockwise Lipschitz continuity of the logit map]\label{lem:U-basic}
Under Assumption~\ref{ass:token_geometry}, for any fixed sample $X$, the softmax logit map $U_\theta(X)=W_\mathrm{ov}X\sigma(X^\top W_\mathrm{kq}X_{-1})$ depends on $\theta=(W_\mathrm{ov},W_\mathrm{kq})$ and satisfies
\begin{align}\label{ineq:U-lipchitz-tight}
\|U_{\theta'}(X)-U_\theta(X)\|_2
&\le
\left(
\sigma_X
+
\frac12B_X^2B_{-1}R(\theta,\theta')\right)\|\theta'-\theta\|_F.
\end{align}
where $R(\theta,\theta')=\min\{\sigma_\mathrm{max}(W_{\mathrm{ov}}),\sigma_\mathrm{max}(W_{\mathrm{ov}}')\}$
\end{lemma}

\begin{proof}
According to the definition of $U_\theta(X)$, we have
\begin{align*}
\|U_{\theta'}(X)-U_\theta(X)\|_2
&\le \|W_{\mathrm{ov}}'(h_{\theta'}(X)-h_\theta(X))\|+\|(W_{\mathrm{ov}}'-W_{\mathrm{ov}})h_\theta(X)\|\\
&\le \frac12B_X^2\,B_{-1}\sigma_\mathrm{max}(W_{\mathrm{ov}}')\|W_{kq}'-W_{kq}\|_F+B_X\,\|W_{\mathrm{ov}}'-W_{\mathrm{ov}}\|_F \numberthis\label{ineq:U-lipchitz}
\end{align*}
Similarly, we can replace $W_\mathrm{ov}'$ with $W_\mathrm{ov}$ in \eqref{ineq:U-lipchitz}. Therefore, we have
\begin{align*}
\|U_{\theta'}(X)-U_\theta(X)\|_2
&\le
\frac12 B_X^2 B_{-1}
\min\{\sigma_{\mathrm{max}}(W_{\mathrm{ov}}),\sigma_{\mathrm{max}}(W_{\mathrm{ov}}')\}
\|W_{\mathrm{kq}}'-W_{\mathrm{kq}}\|_F
+
B_X\|W_{\mathrm{ov}}'-W_{\mathrm{ov}}\|_F \\
&\le
\sqrt{
B_X^2
+
\frac14 B_X^4B_{-1}^2
\min\{\sigma_{\mathrm{max}}(W_{\mathrm{ov}}),\sigma_{\mathrm{max}}(W_{\mathrm{ov}}')\}^2
}
\|\theta'-\theta\|_F \\
&\le
\left(
B_X
+
\frac12 B_X^2B_{-1}R(\theta,\theta')
\right)
\|\theta-\theta'\|_F .
\end{align*}
where $R(\theta,\theta')=\min\{\sigma_\mathrm{max}(W_{\mathrm{ov}}),\sigma_\mathrm{max}(W_{\mathrm{ov}}')\}$.
\end{proof}

\begin{lemma}[Lipschitz continuity of the sequence SFT loss]
\label{lem:sequence-loss-basic}
Under Assumption~\ref{ass:token_geometry}, for any sequence-level sample
\((x,y)\) with token-level representations $\{(X_d,y_d)\}_{d=1}^D$,
recall
\begin{align*}
    \mathcal L_{\mathrm{SFT}}(\theta;x,y)
    = \frac1D\sum_{d=1}^D l(\theta;X_d,y_d).
\end{align*}
Then
\begin{align*}
    |\mathcal L_{\mathrm{SFT}}(\theta';x,y)
    -
    \mathcal L_{\mathrm{SFT}}(\theta;x,y)|
    \le
    \sqrt2
    \left(B_X+\frac12 B_X^2B_{-1}R(\theta,\theta')\right)
    \|\theta'-\theta\|_F,
\end{align*}
where $R(\theta,\theta')= \min\{\sigma_\mathrm{max}(W_\mathrm{ov}), \sigma_\mathrm{max}(W_\mathrm{ov}')\}$.
\end{lemma}
\begin{proof}
Recall that $l(\theta;X,y):=-\log\!\bigl(\mathbf e_{I(y)}^\top\sigma(U_\theta(X))\bigr)=g_y(U_\theta(X))$, since $\|\nabla g_y(u)\|_2\leq\sqrt{2}$, we have
\begin{align*}
    |l(\theta';X_d,y_d)-l(\theta;X_d,y_d)|
    &\le
    \sqrt2\,
    \|U_{\theta'}(X_d)-U_\theta(X_d)\|_2.
\end{align*}
Under Assumption~\ref{ass:token_geometry}, using the hidden-feature Lipschitz bound Lemma~\ref{lem:U-basic},
\begin{align*}
    \|U_{\theta'}(X_d)-U_\theta(X_d)\|_2
    \le
    \left(
    B_X+\frac12B_X^2B_{-1}R(\theta,\theta')
    \right)
    \|\theta'-\theta\|_F.
\end{align*}
Therefore,
\begin{align*}
    |l(\theta';X_d,y_d)-l(\theta;X_d,y_d)|
    \le \sqrt{2}\left(
    B_X+\frac12B_X^2B_{-1}R(\theta,\theta')
    \right)\|\theta'-\theta\|_F.
\end{align*}
Averaging over \(d=1,\ldots,D\), we obtain
\begin{align*}
    |\mathcal L_{\mathrm{SFT}}(\theta';x,y)
    -
    \mathcal L_{\mathrm{SFT}}(\theta;x,y)|
    \le
    \sqrt{2}\left(
    B_X+\frac12B_X^2B_{-1}R(\theta,\theta')
    \right)\|\theta'-\theta\|_F.
\end{align*}
\end{proof}

\begin{lemma}[Local Lipschitz continuity of the sample gradient]
\label{lem:sequence-gradient-lipschitz}
Under Assumption~\ref{ass:token_geometry}, for any sample
\((X,y)\in\mathcal{S}_\mathrm{token}\) with token-level representations, recall its corresponding loss function $l(\theta;X,y) = -\log\left(\mathbf{e}_y^\top \sigma\left(U_\theta(X)\right)\right) = g_y(U_\theta(X))$, then
\begin{align*}
    \|\nabla_\theta l(\theta';X,y)
    -
    \nabla_\theta l(\theta;X,y)\|_F
    \le
    4B_X^2(1+B_{-1})\left(1+B_XB_{-1}R(\theta,\theta')\right)^2
    \|\theta'-\theta\|_F,
\end{align*}
where $R(\theta,\theta'):=\min\{\sigma_\mathrm{max}(W_{\mathrm{ov}}),\sigma_\mathrm{max}(W_{\mathrm{ov}}')\}.$
\end{lemma}
\begin{proof}
By Lemma~\ref{lem:sample-gradient-formula},
\begin{align*}
\nabla_{W_{\mathrm{ov}}}l(\theta;X,y)
&=
\nabla g_y(U_\theta(X))\,h_\theta(X)^\top\\
\nabla_{W_{\mathrm{kq}}}l(\theta;X,y)
&=
X\,J_\phi(B_\theta(X))\,X^\top W_{\mathrm{ov}}^\top \nabla g_y(U_\theta(X))\,X_{-1}^\top
\end{align*}
We estimate the two blocks separately.

\medskip
\noindent
\textbf{Step 1: estimate of the \(W_{\mathrm{ov}}\)-block.}
By Assumption~\ref{ass:token_geometry} and Lemma~\ref{lem:U-basic}, we have
\begin{align*}
&\|\nabla_{W_{\mathrm{ov}}}l(\theta';X,y)-\nabla_{W_{\mathrm{ov}}}l(\theta;X,y)\|_F\\
&\le
\|\nabla g_y(U_{\theta'}(X))-\nabla g_y(U_\theta(X))\|_2\,\|h_{\theta'}(X)\|_2
+
\|\nabla g_y(U_\theta(X))\|_2\,\|h_{\theta'}(X)-h_\theta(X)\|_2\\
&\le
\frac{1}{2}B_X\|U_{\theta'}(X)-U_\theta(X)\|_2 + \frac{\sqrt{2}}{2}B_X^2B_{-1}\|W_{\mathrm{kq}}'-W_{\mathrm{kq}}\|_F\\
&\overset{\eqref{ineq:U-lipchitz-tight}}{\le}
\frac{1}{2}B_X\left(B_X+\frac12B_X^2\,B_{-1}R(\theta,\theta')\right)\|\theta-\theta'\|_F
+
\frac{\sqrt{2}}{2}B_X^2B_{-1}\|W_{\mathrm{kq}}'-W_{\mathrm{kq}}\|_F\\
& \le \left(\frac{1}{4}B_X^3B_{-1}R(\theta,\theta')+\frac{1}{2}B_X^2(1+\sqrt{2}B_{-1})\right)\|\theta-\theta'\|_F\numberthis\label{ineq:gradient_ov}
\end{align*}
\medskip
\noindent
\textbf{Step 2: estimate of the \(W_{\mathrm{kq}}\)-block.}
We denote
\begin{align*}
G_\theta(X):=X^\top W_{\mathrm{ov}}^\top \nabla g_y(U_\theta(X)).
\end{align*}
Then we have
\begin{align*}
    &\|G_{\theta'}(X)-G_\theta(X)\|_2\\
    &\leq \|X^\top W_{\mathrm{ov}}'^\top\left(\nabla g_y(U_{\theta'}(X))-\nabla_yg(U_\theta(X))\right)\|_2+\|X^\top\left(W_{\mathrm{ov}}'-W_{\mathrm{ov}}\right)^\top \nabla g_y(U_\theta(X))\|\\
    &\leq \frac{1}{2}B_XR(\theta,\theta')\|U_{\theta'}(X)-U_{\theta}(X)\|_2+ \sqrt{2}B_X\|W_{\mathrm{ov}}'-W_{\mathrm{ov}}\|_F\\
    &\leq \frac{1}{2}B_XR(\theta,\theta')\left(B_X+\frac12B_X^2\,B_{-1}R(\theta,\theta')\right)\|\theta'-\theta\|+ \sqrt{2}B_X\|W_{\mathrm{ov}}'-W_{\mathrm{ov}}\|_F \numberthis\label{ineq:G-lipchitz}
\end{align*}
and
\begin{align}\label{ineq:G-upper-bound}
    \|G_{\theta}(X)\|_2\leq \sqrt{2}B_X\sigma_\mathrm{max}(W_\mathrm{ov}) 
\end{align}
Therefore, by Lemma~\ref{lem:softmax-jacobian-basic}, inequalities ~\eqref{ineq:G-upper-bound} and ~\eqref{ineq:G-upper-bound}, we have
\begin{align*}
&\|\nabla_{W_{\mathrm{kq}}}l(\theta';X,y)-\nabla_{W_{\mathrm{kq}}}l(\theta;X,y)\|_F\\
&\leq
\|X\bigl(J_\phi(B_{\theta'}(X))-J_\phi(B_\theta(X))\bigr)G_{\theta'}(X)X_{-1}^\top\|
+
\|XJ_\phi(B_\theta(X))\bigl(G_{\theta'}(X)-G_\theta(X)\bigr)X_{-1}^\top\|\\
&\le
B_XB_{-1}
\Big(
\|J_\phi(B_{\theta'}(X))-J_\phi(B_\theta(X))\|_2\,\|G_{\theta'}(X)\|_2
+
\|J_\phi(B_\theta(X))\|_2\,\|G_{\theta'}(X)-G_\theta(X)\|_2
\Big)\\
&\le \frac{3\sqrt{2}}{2}B_X^2B_{-1}R(\theta,\theta')\|B_{\theta'}(X)-B_{\theta}(X)\|_2+\frac{1}{2}B_XB_{-1}\|G_{\theta'}(X)-G_\theta(X)\|_2\\
&\leq 
 \frac{3\sqrt{2}}{2}B_X^3B_{-1}^2R(\theta,\theta')\|W_{\mathrm{kq}}'-W_{\mathrm{kq}}\|_F
+ \frac{\sqrt{2}}{2}B_X^2B_{-1}\|W_{\mathrm{ov}}'-W_{\mathrm{ov}}\|_F \\
&\quad+
\frac{1}{4}B_X^2B_{-1}R(\theta,\theta')\left(B_X+\frac12B_X^2\,B_{-1}R(\theta,\theta')\right)\|\theta'-\theta\|_F\\
& \leq \left(\frac{1}{8}B_X^4B_{-1}^2R(\theta,\theta')^2+B_X^3B_{-1}\left(\frac{3\sqrt{2}}{2}B_{-1}+\frac{1}{4}\right)R(\theta,\theta')+\frac{\sqrt{2}}{2}B_X^2B_{-1}\right)\|\theta'-\theta\|_F\numberthis\label{ineq:gradient_kq}
\end{align*}

\medskip
\noindent
\textbf{Step 3: combine the two blocks.}
Combining \eqref{ineq:gradient_ov} and \eqref{ineq:gradient_kq}, we have
\begin{align*}
&\|\nabla_\theta l(\theta';X,y)-\nabla_\theta l(\theta;X,y)\|_F\\
&=
\left(\|\nabla_{W_{\mathrm{ov}}}l(\theta';X,y)-\nabla_{W_{\mathrm{ov}}}l(\theta;X,y)\|_F^2+
\|\nabla_{W_{\mathrm{kq}}}l(\theta';X,y)-\nabla_{W_{\mathrm{kq}}}l(\theta;X,y)\|_F^2\right)^{\frac{1}{2}}\\
&\le
\|\nabla_{W_{\mathrm{ov}}}l(\theta';X,y)-\nabla_{W_{\mathrm{ov}}}l(\theta;X,y)\|_F+
\|\nabla_{W_{\mathrm{kq}}}l(\theta';X,y)-\nabla_{W_{\mathrm{kq}}}l(\theta;X,y)\|_F\\
&\leq \left(\frac{1}{8}B_X^4B_{-1}^2R(\theta,\theta')^2+B_X^3B_{-1}\left(\frac{3\sqrt{2}}{2}B_{-1}+\frac{1}{2}\right)R(\theta,\theta')+B_X^2\left(\sqrt{2}B_{-1}+1\right)\right)\|\theta'-\theta\|_F\\
&\leq 4B_X^2(1+B_{-1})\left(1+B_XB_{-1}R(\theta,\theta')\right)^2\|\theta'-\theta\|_F
\end{align*}
where $R(\theta,\theta'):=\min\{\sigma_\mathrm{max}(W_{\mathrm{ov}}),\sigma_\mathrm{max}(W_{\mathrm{ov}}')\}$.
\end{proof}

\begin{lemma}[Gradient norm bound for the sample loss]
\label{lem:sample-gradient-bound-by-loss}
Under Assumption~\ref{ass:token_geometry}, for any sample
\((X,y)\in\mathcal{S}_{\mathrm{token}}\) with token-level representation, then
\begin{align*}
    \|\nabla_\theta l(\theta;X,y)\|_F
    \le
    \sqrt{2}
    B_X+\frac{\sqrt{2}}2 B_X^2B_{-1}\sigma_{\max}(W_{\mathrm{ov}})
\end{align*}
\end{lemma}
\begin{proof}
Fix one sample \((X,y)\), and write
\[
p:=\sigma(U_\theta(X)),
\qquad
l(\theta;X,y)=-\log p_{y}.
\]
By Lemma~\ref{lem:sample-gradient-formula}, we have
\begin{align*}
\nabla_{W_{\mathrm{ov}}}l(\theta;X,y)
=
\bigl(p-\mathbf e_{y}\bigr)h_\theta(X)^\top,\quad
\nabla_{W_{\mathrm{kq}}}l(\theta;X,y)
=
XJ_\sigma(X^\top W_{\mathrm{kq}}X_{-1})X^\top W_{\mathrm{ov}}^\top
\bigl(p-\mathbf e_{y}\bigr)X_{-1}^\top.
\end{align*}
Under Assumption~\ref{ass:token_geometry}, by Lemma~\ref{lem:softmax-jacobian-basic} and Lemma~\ref{lem:h-basic}, we have
\begin{align*}
\|\nabla_{W_{\mathrm{ov}}}l(\theta;X,y)\|_F
&\le
B_X\|p-\mathbf e_{y}\|_2\leq\sqrt{2}B_X,\\
\|\nabla_{W_{\mathrm{kq}}}l(\theta;X,y)\|_F
&\le
\frac12B_X^2B_{-1}\sigma_\mathrm{max}(W_\mathrm{ov})
\|p-\mathbf e_{y}\|_2\leq \frac{\sqrt{2}}{2}B_X^2B_{-1}\sigma_\mathrm{max}(W_\mathrm{ov}).
\end{align*}
Therefore,
\begin{align*}
    \|\nabla_\theta l(\theta;X,y)\|_F\leq \sqrt{2}B_X+\frac{\sqrt{2}}{2}B_X^2B_{-1}\sigma_\mathrm{max}(W_\mathrm{ov})
\end{align*}
\end{proof}

\begin{lemma}[Gradient norm bounds for the penalized SFT loss]
\label{lem:Lgamma-gradient-bound-by-Lgamma}
Under Assumption~\ref{ass:token_geometry}, we have
\begin{align*}
    \|\nabla_{W_{\mathrm{ov}}}\mathcal L_\gamma(\omega,\theta)\|_F
    \le
    \sqrt{2}B_X \mathcal L_\gamma(\omega,\theta),\quad \|\nabla_{W_{\mathrm{kq}}}\mathcal L_\gamma(\omega,\theta)\|_F\le
    \frac{\sqrt{2}}{2}
    B_X^2B_{-1}\sigma_{\max}(W_{\mathrm{ov}})
    \mathcal L_\gamma(\omega,\theta).
\end{align*}
Consequently,
\begin{align*}
    \|\nabla_{\theta}\mathcal L_\gamma(\omega,\theta)\|_F
    \le
    \sqrt{2}\left(
    B_X+\frac12B_X^2B_{-1}\sigma_{\max}(W_{\mathrm{ov}})
    \right)
    \mathcal L_\gamma(\omega,\theta).
\end{align*}
\end{lemma}

\begin{proof}
Fix one sample \((X,y)\), and write
\[
p:=\sigma(U_\theta(X)),
\qquad
l(\theta;X,y)=-\log p_{y}.
\]
By Lemma~\ref{lem:sample-gradient-formula}, we have
\begin{align*}
\nabla_{W_{\mathrm{ov}}}l(\theta;X,y)
=
\bigl(p-\mathbf e_{y}\bigr)h_\theta(X)^\top,\quad
\nabla_{W_{\mathrm{kq}}}l(\theta;X,y)
=
XJ_\sigma(X^\top W_{\mathrm{kq}}X_{-1})X^\top W_{\mathrm{ov}}^\top
\bigl(p-\mathbf e_{y}\bigr)X_{-1}^\top.
\end{align*}
Under Assumption~\ref{ass:token_geometry}, by Lemma~\ref{lem:softmax-jacobian-basic} and Lemma~\ref{lem:h-basic}, we have
\begin{align*}
\|\nabla_{W_{\mathrm{ov}}}l(\theta;X,y)\|_F
\le
B_X\|p-\mathbf e_{y}\|_2,\quad
\|\nabla_{W_{\mathrm{kq}}}l(\theta;X,y)\|_F
\le
\frac12B_X^2B_{-1}\sigma_\mathrm{max}(W_\mathrm{ov})
\|p-\mathbf e_{y}\|_2.
\end{align*}
Moreover,
\begin{align*}
\|p-\mathbf e_{I(y)}\|_2
&=
\left(
\sum_{\ell\neq y}p_\ell^2
+
(1-p_{y})^2
\right)^{1/2}\le
\sqrt{2}(1-p_{y})\overset{(a)}{\le}
\sqrt{2}l(\theta;X,y),
\end{align*}
where we used \(1-t\le -\log t\) for \(t\in(0,1]\) in (a). Therefore,
\begin{align*}
    \|\nabla_\theta l(\theta;X,y)\|_F\leq \sqrt{2}\left(B_X+\frac{1}{2}B_X^2B_{-1}\sigma_\mathrm{max}(W_\mathrm{ov})\right) l(\theta;X,y)
\end{align*}
By the triangle inequality,
\begin{align*}
\|\nabla_{\theta}\mathcal L_\gamma(\omega,\theta)\|_F
&\le
\frac1{N'D}\sum_{j=1}^{N'}\sum_{d=1}^D
\|\nabla_{\theta}l(\theta;\widetilde X^j_d,\widetilde y^j_d)\|_F
+
\frac{\gamma}{ND}\sum_{i=1}^N\sum_{d=1}^D[\sigma(\omega)]_i
\|\nabla_{\theta}l(\theta;X^i_d,y^i_d)\|_F\\
&\le
\sqrt{2}\left(B_X+\frac{1}{2}B_X^2B_{-1}\sigma_\mathrm{max}(W_\mathrm{ov})\right)\mathcal L_\gamma(\omega,\theta).
\end{align*}
\end{proof}

\subsubsection{Proof of non-uniform smoothness condition}
In this section, for any given point $z=(\omega,\theta)$ and stepsize $\eta$, we define 
\begin{align*}
    z(0)&=(\omega(0),\theta(0))=(\omega,\theta)\\
    z(s)&=(\omega(s),\theta(s))=(\omega-s\eta\nabla_\omega L_\gamma(\omega,\theta),  \theta-s\eta\nabla_\theta L_\gamma(\omega,\theta)).
\end{align*}
for any $s\in [0,1]$.
\begin{lemma}[Local smoothness of $\mathcal L_\gamma$]\label{lem:local-smoothness-compact}
Under Assumption~\ref{ass:token_geometry}, for any $s\in[0,1]$, we have
\begin{align*}
\left\|
\nabla_\theta \mathcal L_\gamma(\omega(s),\theta(s))
-
\nabla_\theta \mathcal L_\gamma(\omega^k,\theta^k)
\right\|_F\leq L(\theta(0))\|\theta(s)-\theta(0)\|_F
\end{align*}
where
\begin{align*}
L(\theta(0))
&:=
4\left(1+\frac{\gamma}{N}\right)\,B_X^2(1+B_{-1})
\left(1+B_XB_{-1}\sigma_{\max}(W_{\mathrm{ov}}(0))\right)^2,\\
&\quad +
\frac{3\gamma}{\sqrt{N}}\left(
\log V
+
B_X
+
B_X\left(1+B_XB_{-1}\right)\sigma_{\max}(W_{\mathrm{ov}}(0))
\right).\numberthis\label{def:L}
\end{align*}
\end{lemma}
\begin{proof}
Denote $R(\theta(s),\theta(0))=\min\{\sigma_\mathrm{max}(W_{\mathrm{ov}}(s)), \sigma_\mathrm{max}(W_{\mathrm{ov}}(0))\}$ for any $s\in[0,1]$. Recall that
\begin{align*}
\mathcal L_\gamma(\omega,\theta)
=
\frac{1}{N'}\sum_{j=1}^{N'}\mathcal L_{\mathrm{SFT}}(\theta;\widetilde X^{\,j},\widetilde y^{\,j})
+
\frac{\gamma}{N}\sum_{i=1}^{N}[\phi(\omega)]_i\,\mathcal L_{\mathrm{SFT}}(\theta;X^i,y^i).
\end{align*}

\medskip
\noindent
\textbf{Step 1: Variation of the $\theta$-gradient.}
We have
\begin{align*}
&\|\nabla_\theta \mathcal L_\gamma(\omega(s),\theta(s))
-
\nabla_\theta \mathcal L_\gamma(\omega(0),\theta(0))\|_F\\
&\leq
\frac1{N'D}\sum_{j=1}^{N'}\sum_{d=1}^D
\|
\nabla_\theta l(\theta(s);\widetilde X^j_d,\widetilde y^j_d)
-
\nabla_\theta l(\theta(0);\widetilde X^j_d,\widetilde y^j_d)
\|_F\\
&\quad+\frac{\gamma}{ND}\sum_{i=1}^N\sum_{d=1}^D [\sigma(\omega(s))]_i
\|
\nabla_\theta l(\theta(s);X^i_d,y^i_d)
-
\nabla_\theta l(\theta(0);X^i_d,y^i_d)
\|_F\\
&\quad+
\frac{\gamma}{ND}\sum_{i=1}^N\sum_{d=1}^D
\left|\sigma(\omega(s))_i-\sigma(\omega(0))_i\right|
\|\nabla_\theta l(\theta(0);X_i,y_i)\|_F\\
&\overset{(a)}{\leq}4B_X^2(1+B_{-1})\left(1+B_XB_{-1}R(\theta,\theta')\right)^2\left(1+\frac{\gamma}{N}\right)\|\theta(s)-\theta(0)\|_F\\
&\quad +\frac{\gamma}{ND}\|\sigma(\omega(s))-\sigma(\omega(0))\|_2\left(\sum_{i=1}^N\left(\sum_{d=1}^D\|\nabla_\theta l(\theta(0);X^i_d,y^i_d)\|_F\right)^2\right)^\frac{1}{2}\\
&\overset{(b)}{\leq} 4B_X^2(1+B_{-1})\left(1+B_XB_{-1}R(\theta,\theta')\right)^2\left(1+\frac{\gamma}{N}\right)\|\theta(s)-\theta(0)\|_F\\
&\quad +\sqrt{\frac{2}{N}}\gamma\left(B_X+\frac{1}{2}B_X^2B_{-1}\sigma_\mathrm{max}(W_\mathrm{ov})\right)\|\sigma(\omega(s))-\sigma(\omega(0))\|_2 \numberthis\label{ineq:theta-gradient-lipchitz}
\end{align*}
where in (a) we use Lemma~\ref{lem:sequence-gradient-lipschitz} and Cauchy Schwart inequality, in (b) we use Lemma~\ref{lem:sample-gradient-bound-by-loss}.

\medskip
\noindent
\textbf{Step 2: Variation of the $\omega$-gradient.}

By Lemma~\ref{lem:penalty-gradient-formula}, we have
\[
\nabla_\omega \mathcal L_\gamma(\omega,\theta)
=
\frac{\gamma}{N} J_\sigma(\omega)\ell_\mathrm{seq}(\theta),
\qquad
J_\sigma(\omega)
:=
\operatorname{Diag}(\sigma(\omega))
-
\sigma(\omega)\sigma(\omega)^\top .
\]
Therefore,
\begin{align*}
&\left\|
\nabla_\omega \mathcal L_\gamma(\omega(s),\theta(s))
-
\nabla_\omega \mathcal L_\gamma(\omega(0),\theta(0))
\right\|_2\\
&\le
\frac{\gamma}{N}
\left\|
J_\sigma(\omega(s))-J_\sigma(\omega(0))
\right\|_2
\|\ell_\mathrm{seq}(\theta(0))\|_2
+
\frac{\gamma}{N}
\|J_\sigma(\omega(s))\|_2
\|\ell_\mathrm{seq}(\theta(s))-\ell_\mathrm{seq}(\theta(0))\|_2\\
&\overset{(a)}{\leq} \frac{3\gamma}{2N}\|\ell_\mathrm{seq}(\theta(0))\|_2\|\omega(s)-\omega(0)\|_2+\frac{\gamma}{2N}\|\ell_\mathrm{seq}(\theta(s))-\ell_\mathrm{seq}(\theta(0))\|_2
\end{align*}
where in (a) we use ~\ref{lem:softmax-jacobian-basic}. Next, by Lemma~\ref{lem:sequence-loss-basic},
\begin{align*}
\|\ell_\mathrm{seq}(\theta(s))-\ell_\mathrm{seq}(\theta(0))\|_2
&=
\left(
\sum_{i=1}^N
\left|
\mathcal L_{\mathrm{SFT}}(\theta(s);x_i,y_i)
-
\mathcal L_{\mathrm{SFT}}(\theta(0);x_i,y_i)
\right|^2
\right)^{1/2}\\
&\le
\sqrt{2N}\left(B_X+\frac{1}{2}B_X^2B_{-1}R(\theta(s),\theta(0))\right)\|\theta(s)-\theta(0)\|_F .
\end{align*}
To bound $\|\ell(\theta(0))\|_2$, we use the uniform upper bound on each
sample loss. Indeed, for every $i\in[N]$,
\begin{align*}
\mathcal L_{\mathrm{SFT}}(\theta(0);x_i,y_i)
&\le
\mathcal L_{\mathrm{SFT}}(0;x_i,y_i)
+
\left|
\mathcal L_{\mathrm{SFT}}(\theta(0);x_i,y_i)
-
\mathcal L_{\mathrm{SFT}}(0;x_i,y_i)
\right|\\
&\overset{(a)}{\leq}
\log V+\sqrt{2}B_X\sigma_\mathrm{max}(W_{\mathrm{ov}}(0))\\
\end{align*}
where (a) also follows Lemma~\ref{lem:sequence-loss-basic}.
Therefore, we have
\begin{align*}
\|\ell_\mathrm{seq}(\theta(0))\|_2
&=
\left(
\sum_{i=1}^N
\mathcal L_{\mathrm{SFT}}(\theta(0);x_i,y_i)^2
\right)^{1/2}\\
&\le
\sqrt{N}
\left(
\log V+\sqrt{2}B_X\sigma_\mathrm{max}(W_{\mathrm{ov}}(0))
\right).
\end{align*}
Combining the above estimates, we obtain
\begin{align*}
&\left\|
\nabla_\omega \mathcal L_\gamma(\omega(s),\theta(s))
-
\nabla_\omega \mathcal L_\gamma(\omega(0),\theta(0))
\right\|_2\\
&\le
\frac{3\gamma}{2\sqrt{N}}
\left(
\log V+\sqrt{2}B_X\sigma_\mathrm{max}(W_{\mathrm{ov}}(0))
\right)
\|\omega(s)-\omega(0)\|_2\\
&
~~~~+
\frac{\gamma}{2}\sqrt{\frac{2}{N}}\left(B_X+\frac{1}{2}B_X^2B_{-1}R(\theta(s),\theta(0))\right)
\|\theta(s)-\theta(0)\|_F.\numberthis\label{ineq:omega-gradient-lipchitz}
\end{align*}

\medskip
\noindent
\textbf{Step 3: combine the two blocks.}
Combining \eqref{ineq:theta-gradient-lipchitz} and \eqref{ineq:omega-gradient-lipchitz}, we have
\begin{align*}
    \|\nabla \mathcal L_\gamma(\omega(s),\theta(s))-\nabla \mathcal L_\gamma(\omega(0),\theta(0))\|_F\leq L\|z(s)-z(0)\|_F.
\end{align*}
\begin{align*}
L
&=
4B_X^2(1+B_{-1})
\left(1+B_XB_{-1}R(\theta(s),\theta(0))\right)^2
\left(1+\frac{\gamma}{N}\right)\\
&\quad+
\frac{\gamma}{\sqrt{2N}}
\left(
B_X+\frac{1}{2}B_X^2B_{-1}
\sigma_{\max}(W_{\mathrm{ov}}(0))
\right)\\
&\quad+
\frac{3\gamma}{2\sqrt{N}}
\left(
\log V+\sqrt{2}B_X\sigma_{\max}(W_{\mathrm{ov}}(0))
\right)\\
&\quad+
\frac{\gamma}{2}\sqrt{\frac{2}{N}}
\left(
B_X+\frac{1}{2}B_X^2B_{-1}R(\theta(s),\theta(0))
\right).
\end{align*}
Since $R(\theta(s),\theta(0)) \leq \sigma_\mathrm{max}(W_{\mathrm{ov}}(0))$, we have
\begin{align*}
L
&\le
4B_X^2(1+B_{-1})
\left(1+B_XB_{-1}\sigma_\mathrm{max}(W_{\mathrm{ov}}(0))\right)^2
\left(1+\frac{\gamma}{N}\right)\\
&\quad+
\frac{3\gamma}{\sqrt{N}}
\left(
\log V
+
B_X
+
B_X\left(1+B_XB_{-1}\right)\sigma_\mathrm{max}(W_{\mathrm{ov}}(0))
\right)=L(\theta(0)).
\end{align*}
Therefore,
\begin{align*}
    \|\nabla \mathcal L_\gamma(\omega(s),\theta(s))-\nabla \mathcal L_\gamma(\omega(0),\theta(0))\|_F\leq L(\theta(0))\|z(s)-z(0)\|_F.
\end{align*}
where $L(\theta(0))$ is defined in \eqref{def:L}.
\end{proof}

\begin{lemma}[One-step Taylor upper bound from a one-sided gradient variation bound]
\label{lem:one-step-smoothness}
For each iteration $k$, let $z_k(0)=(\omega^k,\theta^k)$ and 
\[
z_k(s)=(\omega^k-s\eta_k\nabla_\omega \mathcal L_\gamma(\omega^k,\theta^k),\theta^k-s\eta_k\nabla_\theta \mathcal L_\gamma(\omega^k,\theta^k))
\]
Then
\[
\mathcal L_\gamma(z_k(1))
\le
\mathcal L_\gamma(z_k(0))
+
\langle \nabla \mathcal L_\gamma(z_k(0)),\, z_k(1)-z_k(0)\rangle
+
\frac{L_k}{2}\|z_k(1)-z_k(0)\|_F^2.
\]
Equivalently,
\[
\mathcal L_\gamma(\omega^{k+1},\theta^{k+1})
\le
\mathcal L_\gamma(\omega^{k},\theta^{k})
+
\langle \nabla \mathcal L_\gamma(\omega^{k},\theta^{k}),\, z^{k+1}-z^k\rangle
+
\frac{L_k}{2}\|z^{k+1}-z^k\|_F^2.
\]
where 
\begin{align*}
L_k&=4B_X^2(1+B_{-1})
\left(1+B_XB_{-1}\sigma_\mathrm{max}(W_{\mathrm{ov}}^k)\right)^2
\left(1+\frac{\gamma}{N}\right)\\
&\quad+
\frac{3\gamma}{\sqrt{N}}
\left(
\log V
+
B_X
+
B_X\left(1+B_XB_{-1}\right)\sigma_\mathrm{max}(W_{\mathrm{ov}}^k)
\right)\numberthis\label{def:L_k}
\end{align*}
\end{lemma}

\begin{proof}
By the fundamental theorem of calculus,
\[
\mathcal L_\gamma(z_k(1))-\mathcal L_\gamma(z_k(0))
=
\int_0^1 \frac{d}{ds}\mathcal L_\gamma(z_k(s))\,ds
=
\int_0^1 \langle \nabla \mathcal L_\gamma(z_k(s)),\, z'_k(s)\rangle ds.
\]
Since $z'(s)=z_k(1)-z_k(0):=\Delta z_k$ and local smoothness condition in Lemma~\ref{lem:local-smoothness-compact}, we have
\begin{align*}
\mathcal L_\gamma(z_k(1))-\mathcal L_\gamma(z_k(0))
&=
\int_0^1 \langle \nabla \mathcal L_\gamma(z_k(0)),\, \Delta z_k\rangle ds+
\int_0^1
\langle \nabla \mathcal L_\gamma(z_k(s))-\nabla \mathcal L_\gamma(z_k(0)),\, \Delta z_k\rangle ds\\
&=
\langle \nabla \mathcal L_\gamma(z_k(0)),\, \Delta z_k\rangle
+
\int_0^1
\langle \nabla \mathcal L_\gamma(z_k(s))-\nabla \mathcal L_\gamma(z_k(0)),\, \Delta z_k\rangle ds\\
&\leq \langle \nabla \mathcal L_\gamma(z_k(0)),\, \Delta z_k\rangle
+
\int_0^1\| \nabla \mathcal L_\gamma(z_k(s))-\nabla \mathcal L_\gamma(z_k(0))\|_F \|\Delta z_k\|_F ds\\
&\leq  \langle \nabla \mathcal L_\gamma(z_k(0)),\, \Delta z_k\rangle
+ L_k \int_0^1 s\|\Delta z_k\|_F^2 ds\\
&\leq  \langle \nabla \mathcal L_\gamma(z_k(0)),\, \Delta z_k\rangle
+
\frac{L_k}{2}\|\Delta z_k\|_F^2
\end{align*}
Since $z_k(0)=z^k$ and $z_k(1)=z^{k+1}$, equivalently we have
\begin{align*}
    \mathcal L_\gamma(\omega^{k+1},\theta^{k+1})
\le
\mathcal L_\gamma(\omega^{k},\theta^{k})
+
\langle \nabla \mathcal L_\gamma(\omega^{k},\theta^{k}),\, z^{k+1}-z^k\rangle
+
\frac{L_k}{2}\|z^{k+1}-z^k\|_F^2.
\end{align*}
where $L_k$ is defined in \eqref{def:L_k}.
\end{proof}

\subsubsection{Proof of non-uniform Polyak-Łojasiewicz (PL) condition}
\begin{lemma}[Non-uniform PL condition of $g_y(u)$]
\label{lem:nonuniform_pl_single_token}
Recall that $g_y(u)=-\log \mathbf{e}_y^\top\sigma(u)$ for any $u\in \mathbb{R}^V$, then
\[
\|\nabla g_y(u)\|_2^2
\ge
2\mu_g(u) g_y(u),
\]
where
\begin{align*}
    \mu_g(u):=
\frac{V-1}{2V^2}\exp(-\sqrt{2}\|u\|_2).
\end{align*}
\end{lemma}

\begin{proof}
Let $p=\sigma(u)$ and denote $p_y=[\sigma(u)]_y$.
By direct calculation,
\[
g_y(u)=-\log p_y,
\qquad
\nabla g_y(u)=p-\mathbf e_y.
\]
Therefore,
\begin{align}\label{ineq:g-gradient-lower-bound}
\|\nabla g_y(u)\|_2^2
=
(1-p_y)^2+\sum_{v\ne y}p_v^2
\overset{(a)}{\ge}
\frac{V}{V-1}(1-p_y)^2,
\end{align}
where in (a) we use $\sum_{v\ne y}p_v=1-p_y$ and the Cauchy--Schwarz  inequality.

Next, for any $v\ne y$,
\[
|u_v-u_y|
=
|(\mathbf e_v-\mathbf e_y)^\top u|
\le
\sqrt2\|u\|_2
:=a.
\]
Therefore, we have
\[
p_y
=
\frac{1}{1+\sum_{v\ne y}\exp(u_v-u_y)}
\in[p_{\min},p_{\max}],
\]
where
\[
p_{\min}:=
\frac{1}{1+(V-1)\exp(a)},
\qquad
p_{\max}:=
\frac{1}{1+(V-1)\exp(-a)}.
\]

Define $f(p):=\frac{-\log p}{(1-p)^2}$.
A direct calculation shows that $f$ has one interior minimum on $(0,1)$.
Hence its maximum over $[p_{\min},p_{\max}]$ is attained at one of the
endpoints:
\[
\max_{p\in[p_{\min},p_{\max}]}f(p)
=
\max\{f(p_{\min}),f(p_{\max})\}.
\]
Moreover, the two endpoint values satisfy
\begin{align*}
f(p_{\min})
&= \left(\frac{1}{V-1}\exp(-a)+1\right)^2\log\left(1+(V-1)\exp(a)\right)\le
\frac{V^2}{(V-1)^2}(\log V+a),\\
f(p_{\max})
&=\left(\frac{1}{V-1}\exp(a)+1\right)^2\log\left(1+(V-1)\exp(-a)\right) \le \frac{V^2}{V-1}\exp(a) 
\end{align*}
From direct circulation, for any $a\geq 0$ and $V\geq 2$, $\frac{V^2}{V-1}\exp(a)\geq \frac{V^2}{(V-1)^2}(\log V+a)$.
Therefore, substituting $a = \sqrt{2}\|u\|_2$, 
\[
\frac{-\log p_y}{(1-p_y)^2}
\le
\frac{V^2}{V-1}\exp(a) = \frac{V^2}{V-1}\exp(\sqrt{2}\|u\|_2).
\]
Combining this with \eqref{ineq:g-gradient-lower-bound} gives
\[
\|\nabla g_y(u)\|_2^2
\ge
\frac{1}{V}\exp(-\sqrt{2}\|u\|_2)
g_y(u).
\]
This completes the proof.
\end{proof}

\begin{lemma}[Non-uniform PL condition]
\label{lem:nonuniform_pl_offline}
Under Assumption~\ref{ass:token_geometry}, the offline penalized objective
$\mathcal L_\gamma(\omega,\theta)$ satisfies a non-uniform PL condition.
Specifically, if $\sigma_{\min}(H(\omega,\theta))>0$, then
\[
\|\nabla \mathcal L_\gamma(\omega,\theta)\|_F^2
\ge
2\mu_{\rm PL}(\omega,\theta)\mathcal L_\gamma(\omega,\theta),
\]
where
\begin{align}\label{def:Lgamma-PL}
\mu_{\rm PL}(\omega,\theta)
:=
\sigma_{\min}^2(H(\omega,\theta))\,\underline{\mu}_g(\theta),
\quad\underline{\mu}_g(\theta)
:=
\frac{1}{2V}\exp(-\sqrt{2}B_X\sigma_\mathrm{max}(W_{\mathrm{ov}})).
\end{align}

\end{lemma}

\begin{proof}
By Lemma~\ref{lem:nonuniform_pl_single_token}, for any single-token loss
\[
g_y(u):=-\mathbf e_y^\top\log\sigma(u)
=
-\log[\sigma(u)]_y,
\]
we have
\[
\|\nabla g_y(u)\|_2^2
\ge
2\mu_g(u)g_y(u),\qquad
\mu_g(u)
=
\frac{1}{2V}\exp(-\sqrt{2}\|u\|_2)
\]

For each training token and validation token, by Assumption~\ref{ass:token_geometry}, we have 
\[
\|U_\theta(X_d^i)\|_2
\le
B_X\sigma_{\max}(W_{\mathrm{ov}}),\quad
\|U_\theta(\widetilde X_d^j)\|_2
\le
B_X\sigma_{\max}(W_{\mathrm{ov}}).
\]
Since \(\mu_g(u)\) is non-increasing in \(\|u\|_2\), we have
\[
\mu_g(U_\theta(X_d^i))\ge \underline{\mu}_g(\theta),
\qquad
\mu_g(U_\theta(\widetilde X_d^j))\ge \underline{\mu}_g(\theta).
\]
where $\mu_g(\theta)=\frac{1}{2V}\exp(-\sqrt{2}B_X\sigma_\mathrm{max}(W_\mathrm{ov}))$. Therefore,
\begin{align}\label{ineq:g-PL}
\|\nabla g_{y_d^i}(U_\theta(X_d^i))\|_2^2
\ge
2\underline{\mu}_g(\theta)
g_{y_d^i}(U_\theta(X_d^i)),\quad
\|\nabla g_{\widetilde y_d^j}(U_\theta(\widetilde X_d^j))\|_2^2
\ge
2\underline{\mu}_g(\theta)
g_{\widetilde y_d^j}(U_\theta(\widetilde X_d^j)).
\end{align}

Recall the weighted token-level feature matrix
\[
H(\omega,\theta)
:=
\left[
\left\{
\frac{1}{\sqrt{N'D}}
h_\theta(\widetilde X_d^j)
\right\}_{j\in[N'],\,d\in[D]},
\left\{
\sqrt{\frac{\gamma\sigma_i(\omega)}{ND}}
h_\theta(X_d^i)
\right\}_{i\in[N],\,d\in[D]}
\right],
\]
and the weighted logit-gradient matrix
\[
G(\omega,\theta)
:=
\left[
\left\{
\frac{1}{\sqrt{N'D}}
\nabla g_{\widetilde y_d^j}(U_\theta(\widetilde X_d^j))
\right\}_{j\in[N'],\,d\in[D]},
\left\{
\sqrt{\frac{\gamma\sigma_i(\omega)}{ND}}
\nabla g_{y_d^i}(U_\theta(X_d^i))
\right\}_{i\in[N],\,d\in[D]}
\right].
\]
Then
\begin{align*}
\|G(\omega,\theta)\|_F^2
&=
\frac{1}{N'D}
\sum_{j=1}^{N'}\sum_{d=1}^D
\|\nabla g_{\widetilde y_d^j}(U_\theta(\widetilde X_d^j))\|_2^2 +
\frac{\gamma}{ND}
\sum_{i=1}^{N}\sum_{d=1}^D
\sigma_i(\omega)
\|\nabla g_{y_d^i}(U_\theta(X_d^i))\|_2^2 \\
&\overset{\eqref{ineq:g-PL}}{\ge}
2\underline{\mu}_g(\theta)
\left[
\frac{1}{N'D}
\sum_{j=1}^{N'}\sum_{d=1}^D
g_{\widetilde y_d^j}(U_\theta(\widetilde X_d^j))
+
\frac{\gamma}{ND}
\sum_{i=1}^{N}\sum_{d=1}^D
\sigma_i(\omega)
g_{y_d^i}(U_\theta(X_d^i))
\right] \\
&=
2\underline{\mu}_g(\theta)
\mathcal L_\gamma(\omega,\theta).
\end{align*}

By the definitions of \(G(\omega,\theta)\) and \(H(\omega,\theta)\), the
gradient with respect to \(W_{\mathrm{ov}}\) admits the factorization
\[
\nabla_{W_{\mathrm{ov}}}\mathcal L_\gamma(\omega,\theta)
=
G(\omega,\theta)H(\omega,\theta)^\top.
\]
Therefore,
\begin{align*}
\|\nabla_{W_{\mathrm{ov}}}\mathcal L_\gamma(\omega,\theta)\|_F^2
&=
\|G(\omega,\theta)H(\omega,\theta)^\top\|_F^2 \\
&\ge
\sigma_{\min}^2(H(\omega,\theta))
\|G(\omega,\theta)\|_F^2 \\
&\ge
2\sigma_{\min}^2(H(\omega,\theta))
\underline{\mu}_g(\theta)
\mathcal L_\gamma(\omega,\theta).
\end{align*}
Since \(W_{\mathrm{ov}}\) is one block of \(\theta\), we have
\[
\|\nabla \mathcal L_\gamma(\omega,\theta)\|_F^2
\ge
\|\nabla_{W_{\mathrm{ov}}}\mathcal L_\gamma(\omega,\theta)\|_F^2
\ge
2\mu_{\rm PL}(\omega,\theta)
\mathcal L_\gamma(\omega,\theta),
\]
where $\mu_{\rm PL}(\omega,\theta)$ is defined in \eqref{def:Lgamma-PL}.
\end{proof}

\subsubsection{Trajectory Control for the Local PL Region}
For notational simplicity, along the offline PBGD trajectory, we write $H^k:=H(\omega^k,\theta^k)$.
\begin{lemma}[Trajectory and representation drift]
\label{lem:trajectory-representation-drift}
Assume that $V\ge (N+N')D$ and \(H^0\) has full column rank. Equivalently, there exists a constant \(\sigma_0>0\) such that
\begin{align}\label{ass:sigma-H}
    \sigma_{\min}(H^0)
    =
    \left(1+\frac{\gamma}{N}\right)^{1/2}\sigma_0
    >0.
\end{align}
Let $S_k:=\sum_{t=0}^{k-1}\eta_t
\mathcal L_\gamma(\omega^t,\theta^t)$, where $\eta_t>0$ denotes the stepsize at iteration $t$. Along the offline PBGD trajectory,
\[
    \sigma_\mathrm{max}(W_{\mathrm{ov}}^k-W_{\mathrm{ov}}^0)\leq\sqrt{2}B_XS_k,
    \quad
    \|\omega^k-\omega^0\|_2\leq \sqrt{2}S_k,
    \quad
    \sigma_\mathrm{max}(W_{\mathrm{kq}}^k-W_{\mathrm{kq}}^0)
    \le
    \frac12 B_X^3B_{-1}S_k^2.
\]
Moreover, suppose the initial representation matrix satisfies $\sigma_{\min}(H^0) =
\left(1+\frac{\gamma}{N}\right)^{1/2}\sigma_0$.
Then
\[
    \sigma_{\min}(H^k)
    \ge
    \left(1+\frac{\gamma}{N}\right)^\frac{1}{2}\left(\sigma_0-\frac14B_X^5B_{-1}^2S_k^2- \left(\frac{\sqrt{2}}{4}B_X^4B_{-1}^2\sigma_\mathrm{max}(W_{\mathrm{ov}}^0)+\frac12B_X\right)S_k \right)
\]
\end{lemma}

\begin{proof}
\medskip
\noindent
\textbf{Step 1: Control of the \(W_{\mathrm{ov}},\omega\)-trajectory.}
Denote $S_k = \sum_{t=0}^{k-1}\eta_t \mathcal L_\gamma(\omega^t,\theta^t)$.
By Lemma~\ref{lem:Lgamma-gradient-bound-by-Lgamma}, for the trajectory of $W_{\mathrm{ov}}$, we have
\begin{align*}
    \sigma_\mathrm{max}\left(\nabla_{W_{\mathrm{ov}}} \mathcal L_\gamma(\omega,\theta)\right) \le \|\nabla_{W_{\mathrm{ov}}} \mathcal L_\gamma(\omega,\theta)\|_F \le \sqrt{2}B_X \mathcal L_\gamma(\omega,\theta)
\end{align*}
We have one-step iteration inequality
\begin{align*}
    \sigma_\mathrm{max}(W_{\mathrm{ov}}^{k+1}-W_{\mathrm{ov}}^k)
    &= \eta_k\sigma_\mathrm{max}(\nabla_{W_{\mathrm{ov}}}\mathcal L_\gamma(\omega^k,\theta^k))\\
    & \leq \sqrt{2}B_X\eta_k \mathcal L_\gamma(\omega^k,\theta^k).
\end{align*}
The recursion telescopes after summing the one-step operator-norm bound, we have
\begin{align}\label{Wov-diff}
    \sigma_\mathrm{max}(W_{\mathrm{ov}}^k-W_{\mathrm{ov}}^0)
    &\leq
    \sum_{t=0}^{k-1}\sigma_\mathrm{max}(W_{\mathrm{ov}}^{t+1}-W_{\mathrm{ov}}^t) \nonumber\\
    &\leq
    \sqrt{2}B_X\sum_{t=0}^{k-1}\eta_t \mathcal L_\gamma(\omega^t,\theta^t)
    =
    \sqrt{2}B_XS_k.
\end{align}
Using triangle inequality in \eqref{Wov-diff}, we have
\begin{align}\label{Wov}
    \sigma_\mathrm{max}(W_{\mathrm{ov}}^k)\leq
    \sqrt{2}B_XS_k+\sigma_\mathrm{max}(W_{\mathrm{ov}}^0).
\end{align}

For the trajectory of $\omega$, since $\nabla_\omega \mathcal L_\gamma(\omega^k,\theta^k)
=
\frac{\gamma}{N}J_\sigma(\omega^k)\ell_\mathrm{seq}(\theta^k)$, denote \(p^k:=\sigma(\omega^k)\) and
\(q^k:=(p^k)^\top\ell_\mathrm{seq}(\theta^k)\). Since
\[
J_\sigma(\omega^k)\ell_\mathrm{seq}(\theta^k)
=
p^k\odot \ell_\mathrm{seq}(\theta^k)-p^k q^k,
\]
the vector \(J_\sigma(\omega^k)\ell_\mathrm{seq}(\theta^k)\) has zero sum. Moreover, its
positive mass is bounded by \(q^k\). Hence,
\[
\|J_\sigma(\omega^k)\ell_\mathrm{seq}(\theta^k)\|_2
\le
\sqrt2 q^k.
\]
Therefore,
\begin{align*}
    \|\nabla_\omega \mathcal L_\gamma(\omega^k,\theta^k)\|_2
    &\le
    \frac{\sqrt2\gamma}{N}
    \sum_{i=1}^N[\sigma(\omega^k)]_i
    \mathcal L_{\mathrm{SFT}}(\theta^k;X_i,y_i)\\
    &\le
    \sqrt2\,\mathcal L_\gamma(\omega^k,\theta^k).
\end{align*}
The last inequality uses nonnegativity of the validation loss, so the weighted
training loss is bounded by the full penalty objective.

Thus,
\begin{align*}
    \|\omega^{k+1}-\omega^k\|_2
    &=
    \eta_k\|\nabla_\omega \mathcal L_\gamma(\omega^k,\theta^k)\|_2\\
    &\le
    \sqrt2\,\eta_k\mathcal L_\gamma(\omega^k,\theta^k).
\end{align*}
Summing over \(t=0,\ldots,k-1\), we obtain
\begin{align}\label{omega}
    \|\omega^k-\omega^0\|_2
    \le
    \sqrt2\sum_{t=0}^{k-1}\eta_t\mathcal L_\gamma(\omega^t,\theta^t)
    =
    \sqrt2 S_k .
\end{align}

\medskip
\noindent
\textbf{Step 2: Control of the \(W_{\mathrm{kq}}\)-trajectory.}
By Lemma~\ref{lem:Lgamma-gradient-bound-by-Lgamma}, for the trajectory of $W_\mathrm{kq}$, we have
\[
\sigma_\mathrm{max}(\nabla_{W_{\mathrm{kq}}}\mathcal L_\gamma(\omega,\theta)) \leq \|\nabla_{W_{\mathrm{kq}}}\mathcal L_\gamma(\omega,\theta)\|_F\leq \frac{\sqrt{2}}{2}B_X^2B_{-1}\sigma_\mathrm{max}(W_{\mathrm{ov}})\mathcal L_\gamma(\omega,\theta)
\]
Then we have one-step iteration inequality
\[
\sigma_\mathrm{max}(W_{\mathrm{kq}}^{k+1}-W_{\mathrm{kq}}^k)=\eta_k \sigma_\mathrm{max}(\nabla_{W_{\mathrm{kq}}}\mathcal L_\gamma(\omega^k,\theta^k))\le \frac{\sqrt{2}}{2}B_X^2B_{-1}\sigma_\mathrm{max}(W_{\mathrm{ov}}^k)\mathcal L_\gamma(\omega^k,\theta^k).
\]
The recursion telescopes after summing the one-step operator-norm bound and use \ref{Wov}, we have
\begin{align*}
\sigma_\mathrm{max}(W_{\mathrm{kq}}^{k}-W_{\mathrm{kq}}^0)
&\le
\frac{\sqrt{2}}{2} B_X^2B_{-1}\sum_{t=0}^{k-1} \sigma_\mathrm{max}(W_{\mathrm{ov}}^t)\eta_t \mathcal L_\gamma(\omega^t,\theta^t)\\
&\le
\frac{\sqrt{2}}{2} B_X^2B_{-1}\sum_{t=0}^{k-1} \left(\sqrt{2}B_XS_t+\sigma_\mathrm{max}(W_{\mathrm{ov}}^0)\right)\eta_t \mathcal L_\gamma(\omega^t,\theta^t)\\
& \overset{(a)}{=} \frac{\sqrt{2}}{2}B_X^2B_{-1}\left(\frac{\sqrt{2}}{2}B_X S_k^2+\sigma_\mathrm{max}(W_{\mathrm{ov}}^0)S_k-\frac{\sqrt{2}}{2}B_X\sum_{t=0}^{k-1}\eta_t^2\mathcal L_\gamma(\omega^t,\theta^t)^2\right) \\
& \leq \frac{1}{2}B_X^2B_{-1}\left(B_XS_k^2+\sqrt{2}\sigma_\mathrm{max}(W_{\mathrm{ov}}^0)S_k\right)\numberthis\label{Wkq}
\end{align*}
where (a) is the identity \(2\sum_{s<t}a_sa_t=(\sum_ta_t)^2-\sum_ta_t^2\), applied with \(a_t=\eta_t\mathcal L_\gamma(\omega^t,\theta^t)\).

\medskip
\noindent
\textbf{Step 3: Control of the $H$-trajectory.}

Recall that
\[
H(\omega,\theta)
:=
\left[
\left\{
\frac{1}{\sqrt{N'D}}h_\theta(\widetilde X_d^j)
\right\}_{j\in[N'],\,d\in[D]},
\left\{
\sqrt{\frac{\gamma\sigma_i(\omega)}{ND}}\,
h_\theta(X_d^i)
\right\}_{i\in[N],\,d\in[D]}
\right]\in \mathbb{R}^{V\times(N+N')D}.
\]
Then
\begin{align*}
\|H(\widetilde\omega,\widetilde\theta)
      -H(\omega,\theta)\|_F\le
\|H(\widetilde\omega,\widetilde\theta)
      -H(\omega,\widetilde\theta)\|_F
+
\|H(\omega,\widetilde\theta)
      -H(\omega,\theta)\|_F .
\end{align*}

For the \(\theta\)-dependent term, by Lemma~\ref{lem:h-basic}, we have
\begin{align*}
\|h_{\widetilde\theta}(\widetilde X_d^j)
-
h_\theta(\widetilde X_d^j)\|_2
&\le
\frac12 B_X^2B_{-1}
\sigma_\mathrm{max}(\widetilde W_{\mathrm{kq}}-W_{\mathrm{kq}}),\\
\|h_{\widetilde\theta}(X_d^i)
-
h_\theta(X_d^i)\|_2
&\le
\frac12 B_X^2B_{-1}
\sigma_\mathrm{max}(\widetilde W_{\mathrm{kq}}-W_{\mathrm{kq}}).
\end{align*}
Therefore,
\begin{align*}
\|H(\omega,\widetilde\theta)-H(\omega,\theta)\|_F^2&=\frac{1}{N'D}
\sum_{j=1}^{N'}\sum_{d=1}^D
\|h_{\widetilde\theta}(\widetilde X_d^j)-h_\theta(\widetilde X_d^j)\|_2^2\\
&\quad+
\frac{\gamma}{ND}
\sum_{i=1}^{N}\sum_{d=1}^D
\sigma_i(\omega)
\|h_{\widetilde\theta}(X_d^i)-h_\theta(X_d^i)\|_2^2\\
&\le
\frac14 B_X^4B_{-1}^2
\left(1+\frac{\gamma}{N}\right)
\sigma_\mathrm{max}(\widetilde W_{\mathrm{kq}}-W_{\mathrm{kq}})^2.
\end{align*}
Equivalently, we have
\begin{align}\label{ineq:H-theta}
\|H(\omega,\widetilde\theta)-H(\omega,\theta)\|_F
\le
\frac12 B_X^2B_{-1}
\left(1+\frac{\gamma}{N}\right)^{1/2}
\sigma_\mathrm{max}(\widetilde W_{\mathrm{kq}}-W_{\mathrm{kq}}).
\end{align}

For the \(\omega\)-dependent term, the validation columns do not depend on
\(\omega\). Thus
\begin{align*}
\|H(\widetilde\omega,\widetilde\theta)
      -H(\omega,\widetilde\theta)\|_F^2
&=
\frac{\gamma}{ND}
\sum_{i=1}^N\sum_{d=1}^D
\left(
\sqrt{\sigma_i(\widetilde\omega)}
-
\sqrt{\sigma_i(\omega)}
\right)^2
\|h_{\widetilde\theta}(X_d^i)\|_2^2\\
&\le
\frac{\gamma}{N}B_X^2
\left\|
\sqrt{\sigma(\widetilde\omega)}
-
\sqrt{\sigma(\omega)}
\right\|_2^2.
\end{align*}
Since the map \(r(\omega):=\sqrt{\sigma(\omega)}\) is globally
\(\sqrt2/4\)-Lipschitz, we obtain
\[
\left\|
\sqrt{\sigma(\widetilde\omega)}
-
\sqrt{\sigma(\omega)}
\right\|_2
\le
\frac{\sqrt2}{4}\|\widetilde\omega-\omega\|_2.
\]
Therefore,
\begin{align}\label{ineq:H-omega}
\|H(\widetilde\omega,\widetilde\theta)
      -H(\omega,\widetilde\theta)\|_F
\le
\frac{\sqrt2}{4}\sqrt{\frac{\gamma}{N}}B_X
\|\widetilde\omega-\omega\|_2.
\end{align}

Combining the two estimates \eqref{ineq:H-theta} and \eqref{ineq:H-omega} gives
\begin{align*}
&\sigma_\mathrm{max}(H(\widetilde\omega,\widetilde\theta)
      -H(\omega,\theta))\\
&\le
\|H(\widetilde\omega,\widetilde\theta)
      -H(\omega,\theta)\|_F\\
&\le
\frac12 B_X^2B_{-1}
\left(1+\frac{\gamma}{N}\right)^{1/2}
\sigma_\mathrm{max}(\widetilde W_{\mathrm{kq}}-W_{\mathrm{kq}})
+
\frac{\sqrt2}{4}\sqrt{\frac{\gamma}{N}}B_X
\|\widetilde\omega-\omega\|_2\\
&\le
\left(1+\frac{\gamma}{N}\right)^{1/2}
\left(
\frac12B_X^2B_{-1}
\sigma_\mathrm{max}(\widetilde W_{\mathrm{kq}}-W_{\mathrm{kq}})
+
\frac{\sqrt2}{4}B_X\|\widetilde\omega-\omega\|_2
\right).
\end{align*}

Taking \((\widetilde\omega,\widetilde\theta)=(\omega^k,\theta^k)\)
and \((\omega,\theta)=(\omega^0,\theta^0)\), and using the trajectory bounds,
\[
\sigma_\mathrm{max}(W_{\mathrm{kq}}^k-W_{\mathrm{kq}}^0)
\le
\frac12B_X^3B_{-1}S_k^2+\frac{\sqrt{2}}{2}B_X^2B_{-1}\sigma_\mathrm{max}(W_{\mathrm{ov}}^0)S_k,
\quad
\|\omega^k-\omega^0\|_2\le \sqrt2 S_k,
\]
we obtain
\begin{align*}
\sigma_\mathrm{max}(H^k-H^0)
&\le
\left(1+\frac{\gamma}{N}\right)^{1/2}
\left(
\frac12B_X^2B_{-1}
\sigma_\mathrm{max}(W_{\mathrm{kq}}^k-W_{\mathrm{kq}}^0)
+
\frac{\sqrt2}{4}B_X\|\omega^k-\omega^0\|_2
\right)\\
&\le
\left(1+\frac{\gamma}{N}\right)^{1/2}
\left(
\frac14B_X^5B_{-1}^2S_k^2
+
\left(\frac{\sqrt{2}}{4}B_X^4B_{-1}^2\sigma_\mathrm{max}(W_{\mathrm{ov}}^0)+\frac12B_X\right)S_k
\right).
\end{align*}
Under condition \eqref{ass:sigma-H}, we have
\begin{align*}
    \sigma_\mathrm{min}(H^k)&\geq \sigma_\mathrm{min}(H^0)-\sigma_\mathrm{max}(H^k-H^0) \\
    &\geq \left(1+\frac{\gamma}{N}\right)^\frac{1}{2}\left(\sigma_0-\frac14B_X^5B_{-1}^2S_k^2- \left(\frac{\sqrt{2}}{4}B_X^4B_{-1}^2\sigma_\mathrm{max}(W_{\mathrm{ov}}^0)+\frac12B_X\right)S_k \right)
\end{align*}
\end{proof}

\subsubsection{Proof of Theorem~\ref{thm:offline_pgdc}}
\begin{lemma}[Descent recursion under shifted local PL and smoothness]\label{lem:descent-recursion}
Under Assumption~\ref{ass:token_geometry}, let 
\begin{align*}
    \mu_k &= \frac{\sigma_\mathrm{min}(H^k)^2}{2V}\exp(-\sqrt{2}B_X\sigma_\mathrm{max}(W_\mathrm{ov}^k))\\
    L_k&=4B_X^2(1+B_{-1})
\left(1+B_XB_{-1}\sigma_\mathrm{max}(W_\mathrm{ov}^k)\right)^2
\left(1+\frac{\gamma}{N}\right)\\
&\quad+
\frac{3\gamma}{\sqrt{N}}
\left(
\log V
+
B_X
+
B_X\left(1+B_XB_{-1}\right)\sigma_\mathrm{max}(W_\mathrm{ov}^k)
\right)
\end{align*}
Suppose the step size satisfies \(\eta_k\le 1/L_k\) and the shifted local PL bound
\[
    \mathcal{L}_\gamma(\omega^{k+1},\theta^{k+1})
    \le
    (1-\eta_k\mu_k)\mathcal{L}_\gamma(\omega^k,\theta^k),
\]
\end{lemma}
\begin{proof}
    It follows directly from Lemma ~\ref{lem:one-step-smoothness} and Lemma~\ref{lem:nonuniform_pl_offline}. 
\end{proof}

\begin{restatable}[Complete version of Theorem~\ref{thm:offline_pgdc}]{theorem}{OfflinePBGDComplete}
\label{thm:offline_pgdc_complete}
Consider the offline PBGD updates in \eqref{eq:offline_PBGD_det} for the
one-layer softmax Transformer in \eqref{eq:transformer}. Suppose Assumption \ref{ass:seperable} holds, and additionally, assume the upper-level dataset is separable. 
Assume Assumption~\ref{ass:token_geometry} holds and $V\ge (N+N')D$.
Let $k_0\ge 0$ be a chosen burn-in index and fix a target accuracy
$\epsilon>0$. Set the penalty parameter as $\gamma=\epsilon^{-1/2}$ and define
\[
R_{k_0}:=\sigma_{\max}(W_{\mathrm{ov}}^{k_0}),
\quad
\sigma_{\min}(H^{k_0})
=
\left(1+\frac{\gamma}{N}\right)^{1/2}\sigma_{k_0},
\quad
\mathcal{L}_\gamma(\omega^{k_0},\theta^{k_0})=\left(1+\frac{\gamma}{N}\right)l^{k_0}
\]
and assume the burn-in representation is non-degenerate, i.e., $\sigma_{k_0}>0$.

Define the certified representation radius
\[
S_H(k_0)
:=
\frac{
-\left(
\frac{\sqrt2}{2}B_X^4B_{-1}^2R_{k_0}
+B_X
\right)
+
\sqrt{
\left(
\frac{\sqrt2}{2}B_X^4B_{-1}^2R_{k_0}
+B_X
\right)^2
+
2B_X^5B_{-1}^2\sigma_{k_0}
}
}{
B_X^5B_{-1}^2
}.
\]

Assume that at the burn-in step $k_0$, we have
\begin{align}\label{cond:initialization}
\frac{
8Vl^{k_0}
\exp\left(\sqrt2B_XR_{k_0}\right)
}{
\sigma_{k_0}^2
}
\le
M_H(k_0),
\end{align}
where $M_H(k_0)
:=
\max_{0<S\le S_H(k_0)}
S\exp(-2B_X^2S)$.
Then the set 
\begin{align}\label{def:S(k_0)}
\mathcal{S}(k_0)=\{S\in (0,S_H(k_0)]: \frac{
8Vl^{k_0}
}{
\sigma_{k_0}^2\exp(-2B_X^2S-\sqrt2B_XR_{k_0})
}\leq S\}
\end{align}
is non-empty. We define $S_\epsilon(k_0)=\sup_{S\in \mathcal{S}(k_0)}S$. And further define
\begin{align*}
\mu_\epsilon(k_0)
&:=
\left(1+\frac{\gamma}{N}\right)
\frac{\sigma_{k_0}^2}{8V}
\exp\left(
-2B_X^2S_\epsilon(k_0)
-\sqrt2B_XR_{k_0}
\right), \numberthis\label{eq:mu-epsilon-k0}\\
L_\epsilon(k_0)
&:=
4B_X^2(1+B_{-1})
\left(
1+B_XB_{-1}R_{k_0}
+\sqrt2B_X^2B_{-1}S_\epsilon(k_0)
\right)^2
\left(1+\frac{\gamma}{N}\right)
\\
&\quad+
\frac{3\gamma}{\sqrt N}
\left(
\log V+B_X
+B_X(1+B_XB_{-1})R_{k_0}
+\sqrt2B_X^2(1+B_XB_{-1})S_\epsilon(k_0)
\right).\numberthis\label{eq:L-epsilon-k0}
\end{align*}
For every $t\ge k_0$, choose the fixed stepsize $\eta_t = \eta := \frac{1}{L_\epsilon(k_0)}$.

Define 
\begin{align}\label{def:A_k0}
    A(k_0)
:=
\log\left(
1+(V-1)
\exp\left(
-2B_X^2S_\epsilon(k_0)
-\sqrt2B_XR_{k_0}
\right)
\right),
\end{align}
For any $\epsilon>\epsilon_0$, where
\begin{align}
\label{cond:epsilon_lower_bound}
\epsilon_0
=
\max\left\{2A(k_0),\left(\frac{2A(k_0)}{N}\right)^{2/3}\right\},
\end{align}
the iterates of the offline \texttt{BDS} satisfy
$\tau_{\epsilon}(k_0)
:=
\inf\left\{
k\ge k_0:
\mathcal L_\gamma(\omega^k,\theta^k)\le \epsilon
\right\},$ and
\begin{align}
\label{eq:hitting-time-bound}
\tau_\epsilon(k_0)-k_0
\le
\left\lceil
\frac{1}{\eta\mu_\epsilon(k_0)}
\log\frac{\mathcal L_\gamma(\omega^{k_0},\theta^{k_0})}{\epsilon}
\right\rceil
<\infty .
\end{align}
\end{restatable}
\begin{proof}
We prove the result by showing that the trajectory remains inside the
certified local region determined by $S_\epsilon(k_0)$, where both the
local PL constant and the smoothness constant are uniformly controlled.

For $k\ge k_0$, define the accumulated trajectory length
\[
S_{k,k_0}
:=
\sum_{t=k_0}^{k-1}
\eta\,\mathcal L_\gamma(\omega^t,\theta^t).
\]

Suppose first that $S_{k,k_0}\le S_\epsilon(k_0)$. 
By Lemma~\ref{lem:trajectory-representation-drift}, applied from the
burn-in point $k_0$, we have
\begin{align}
\label{eq:wov-drift-from-k0}
\sigma_{\max}(W_{\mathrm{ov}}^k)
&\le
R_{k_0}+\sqrt2B_XS_{k,k_0},
\\
\label{eq:H-drift-from-k0}
\sigma_{\min}(H^k)
&\ge
\left(1+\frac{\gamma}{N}\right)^{1/2}
\left[
\sigma_{k_0}
-\frac14B_X^5B_{-1}^2S_{k,k_0}^2
-\left(
\frac{\sqrt2}{4}B_X^4B_{-1}^2R_{k_0}
+\frac12B_X
\right)S_{k,k_0}
\right]
\end{align}
Since $S_{k,k_0}\le S_\epsilon(k_0)\le S_H(k_0)$, the definition of
$S_H(k_0)$ gives
\[
\frac14B_X^5B_{-1}^2S_{k,k_0}^2
+
\left(
\frac{\sqrt2}{4}B_X^4B_{-1}^2R_{k_0}
+\frac12B_X
\right)S_{k,k_0}
\le
\frac12\sigma_{k_0}.
\]
Therefore,
\begin{align}
\label{eq:H-lower-certified}
\sigma_{\min}(H^k)
\ge
\frac12
\left(1+\frac{\gamma}{N}\right)^{1/2}
\sigma_{k_0}.
\end{align}
Similarly, \eqref{eq:wov-drift-from-k0} gives
\begin{align}
\label{eq:Wov-upper-certified}
\sigma_{\max}(W_{\mathrm{ov}}^k)
\le
R_{k_0}+\sqrt2B_XS_\epsilon(k_0).
\end{align}

Combining \eqref{eq:H-lower-certified} and
\eqref{eq:Wov-upper-certified} with
Lemma~\ref{lem:nonuniform_pl_offline}, we obtain the uniform shifted local
PL bound inside the certified region:
\begin{align}
\label{eq:uniform-local-pl}
\|\nabla\mathcal L_\gamma(\omega^k,\theta^k)\|_F^2
\ge
2\mu_\epsilon(k_0)
\mathcal L_\gamma(\omega^k,\theta^k),
\end{align}
where $\mu_\epsilon(k_0)$ is defined in
\eqref{eq:mu-epsilon-k0}.

Moreover, by Lemma~\ref{lem:one-step-smoothness} and
\eqref{eq:Wov-upper-certified}, the local smoothness constant satisfies
$L_k\le L_\epsilon(k_0)$.
Since $\eta=1/L_\epsilon(k_0)$, we have $\eta\le 1/L_k$ in the certified
region. Hence Lemma~\ref{lem:descent-recursion} gives
\begin{align}
\label{eq:linear-descent-certified}
\mathcal L_\gamma(\omega^{k+1},\theta^{k+1})
\le
\left(1-\eta\mu_\epsilon(k_0)\right)
\mathcal L_\gamma(\omega^k,\theta^k).
\end{align}

On the other hand, iterating \eqref{eq:linear-descent-certified} yields
\begin{align}\label{eq:loss-linear-bound}
\mathcal L_\gamma(\omega^k,\theta^k)
&\le
\left(1-\eta\mu_\epsilon(k_0)\right)^{k-k_0}
\mathcal L_\gamma(\omega^{k_0},\theta^{k_0})
\end{align}
Consequently,
\begin{align*}
S_{k,k_0}
&=
\eta
\sum_{t=k_0}^{k-1}
\mathcal L_\gamma(\omega^t,\theta^t)
\\
&\le
\eta
\sum_{t=k_0}^{k-1}
\left(1-\eta\mu_\epsilon(k_0)\right)^{t-k_0}
\mathcal L_\gamma(\omega^{k_0},\theta^{k_0})
\\
&\le
\frac{
\mathcal L_\gamma(\omega^{k_0},\theta^{k_0})
}{
\mu_\epsilon(k_0)
} \overset{(a)}{\le}
S_\epsilon(k_0),
\end{align*}
where (a) is from the burn-in initialization condition
\eqref{cond:initialization}; indeed, by the definitions of
$\mathcal L_\gamma(\omega^{k_0},\theta^{k_0})$ and $\mu_\epsilon(k_0)$,
\[
\frac{
\mathcal L_\gamma(\omega^{k_0},\theta^{k_0})
}{
\mu_\epsilon(k_0)
}
=
\frac{
8Vl^{k_0}
\exp\left(2B_X^2S_\epsilon(k_0)+\sqrt2B_XR_{k_0}\right)
}{
\sigma_{k_0}^2
}
\le S_\epsilon(k_0).
\]
Therefore, by induction, the whole
trajectory remains inside the certified tube
\[
S_{k,k_0}\le S_\epsilon(k_0),
\qquad k\ge k_0.
\]
For any step $t\ge k_0$ such that $\sigma_\mathrm{max}(W_\mathrm{ov}^t)\leq \sigma_\mathrm{max}(W_\mathrm{ov}^{k_0})+\sqrt{2}B_XS_\epsilon(k_0)$, the lower bound $\mathcal{L}_\gamma(\omega^{t},\theta^t)\geq \left(1+\frac{\gamma}{N}\right)A(k_0):=\mathcal{L}_\gamma^*(k_0)$ where $A(k_0)$ is defined in \eqref{def:A_k0}.

With the penalty choice $\gamma=\epsilon^{-1/2}$, by the condition \eqref{cond:epsilon_lower_bound},
\begin{align*}
    A(k_0)\le \frac{\epsilon}{2},
    \qquad
    \frac{A(k_0)}{N\sqrt{\epsilon}}\le \frac{\epsilon}{2}.
\end{align*}
Consequently,
\[
    \mathcal{L}_\gamma^*(k_0)
    =
    \left(1+\frac{\gamma}{N}\right)A(k_0)
    =
    A(k_0)+\frac{A(k_0)}{N\sqrt{\epsilon}}
    \le \epsilon.
\]

When the condition \eqref{cond:epsilon_lower_bound} holds true, the uniform PL \eqref{eq:mu-epsilon-k0} and smoothness \eqref{eq:L-epsilon-k0} bounds above hold.

Now, from \eqref{eq:loss-linear-bound} and the inequality
$1-x\le e^{-x}$, we have
\[
\mathcal L_\gamma(\omega^k,\theta^k)
\le
\exp\left(
-\eta\mu_\epsilon(k_0)(k-k_0)
\right)
\mathcal L_\gamma(\omega^{k_0},\theta^{k_0}).
\]
Thus it is sufficient to take
\[
k-k_0
\ge
\left\lceil
\frac{1}{\eta\mu_\epsilon(k_0)}
\log\frac{\mathcal{L}_\gamma(\omega^{k_0},\theta^{k_0})}{\epsilon}
\right\rceil .
\]
This proves the hitting-time bound
\[
\tau_\epsilon(k_0)-k_0
\le
\left\lceil
\frac{1}{\eta\mu_\epsilon(k_0)}
\log\frac{\mathcal L_\gamma(\omega^{k_0},\theta^{k_0})}{\epsilon}
\right\rceil
<\infty.
\]
\end{proof}

\subsection{Proof of Theorem \ref{thm:eqv}}\label{sec:proof_thm1}
Before proceeding, we first define the bilevel linear scalarization problem as 
\begin{align}\label{eq:BLS}
&\texttt{BLS}:~ \min_{\lambda\in\Delta^N, \theta}~ \frac{1}{N^\prime}\sum_{i=1}^{N^\prime}\mathcal{L}_{\rm val} (\theta;\tilde x^i,\tilde y^i), ~\text{ s.t. }~ \theta\in\mathcal{S}(\lambda):=\argmin_{\theta^\prime} \frac{1}{N}\sum_{i=1}^N \lambda_i\mathcal{L}_{\text{SFT}} (\theta^\prime;x^i,y^i)
\end{align} 
where $\Delta^N:=\{\lambda\in\mathbb{R}^N: \lambda_i\geq 0, \sum_{i=1}^N \lambda_i=1\}$ is the simplex. Clearly, by setting $\lambda_i=\sigma_i(\omega)$, $\texttt{BLS}$ is equivalent to $\texttt{BDS}$. So the remaining task is to prove the equivalence of $\texttt{BLS}$ and $\texttt{BMO}$. 

\begin{proof}
Define \(f_i(\theta):=\mathcal{L}_{\text{SFT}}(\theta;x^i,y^i)\) for each \(i\in[N]\), and write \(f(\theta):=(f_1(\theta),\ldots,f_N(\theta))\). By switching the optimization order in the upper level, the equivalence of \(\texttt{BLS}\) and \(\texttt{BMO}\) reduces to proving
\[
\operatorname{WP}(f)=\bigcup_{\lambda\in\Delta^N}\mathcal{S}(\lambda).
\]

By Assumption \ref{ass:seperable}, there exists \(\bar{\theta}\in\mathbb{R}^h\) such that
\[
f_i(\bar{\theta})=0,\qquad \forall i\in[N].
\]
Since each \(f_i(\theta)\geq 0\), for any \(\lambda\in\Delta^N\) the scalarized lower-level objective $\frac{1}{N}\sum_{i=1}^N \lambda_i f_i(\theta)$
is nonnegative and reaches value \(0\) at \(\bar{\theta}\). Therefore,
\begin{align}
\theta\in\mathcal{S}(\lambda)
&\Longleftrightarrow
\frac{1}{N}\sum_{i=1}^N \lambda_i f_i(\theta)=0 \nonumber\\
&\Longleftrightarrow
f_i(\theta)=0 \quad \text{for every } i \text{ with } \lambda_i>0.
\label{eq:scalarization_characterization_neurips}
\end{align}
Hence
\begin{align}
\bigcup_{\lambda\in\Delta^N}\mathcal{S}(\lambda)
=
\left\{
\theta\in\mathbb{R}^h:
\exists i\in[N]\text{ such that } f_i(\theta)=0
\right\}.
\label{eq:union_characterization_neurips}
\end{align}

We next characterize the weak Pareto set of \(f\). First, suppose \(f_i(\theta)>0\) for every \(i\in[N]\). Then \(\bar{\theta}\) strictly improves all coordinates because
\[
f_i(\bar{\theta})=0<f_i(\theta),\qquad \forall i\in[N].
\]
Hence \(\theta\notin \operatorname{WP}(f)\).

Conversely, if \(f_j(\theta)=0\) for some \(j\in[N]\), then there does not exist any \(\theta^\prime\) such that
\[
f_j(\theta^\prime)<f_j(\theta)=0,
\]
because every SFT loss is nonnegative. Therefore no \(\theta^\prime\) can strictly improve all coordinates simultaneously, which implies \(\theta\in\operatorname{WP}(f)\). We have thus proved
\begin{align}
\operatorname{WP}(f)
=
\left\{
\theta\in\mathbb{R}^h:
\exists i\in[N]\text{ such that } f_i(\theta)=0
\right\}.
\label{eq:wp_characterization_neurips}
\end{align}
Combining \eqref{eq:union_characterization_neurips} and \eqref{eq:wp_characterization_neurips}, we obtain
\[
\operatorname{WP}(f)=\bigcup_{\lambda\in\Delta^N}\mathcal{S}(\lambda).
\]
Therefore \(\texttt{BMO}\) and \(\texttt{BLS}\) minimize the same validation objective over the same feasible set, so their global minimizers coincide. Since \(\texttt{BLS}\) is equivalent to \(\texttt{BDS}\) through the change of variables \(\lambda_i=\sigma_i(\omega)\), the global solutions of \(\texttt{BMO}\) and \(\texttt{BDS}\) coincide.
\end{proof}

\section{Implicit weight assigned by $\texttt{BMO}$}
\label{sec:BMO_implicit_weight}

In this section, we provide the proof of Lemma \ref{lemma:implicit_BMO}. First we derive the generalized implicit weight assigned by bilevel multi-objective optimization problem and then specify each objective as the response-level SFT loss so that we can obtain Lemma \ref{lemma:implicit_BMO}. 

In previous section, we showed that the optimal LLM model $\theta$ given by $\texttt{BDS}$ and $\texttt{BMO}$ are the same. This inspires the question that 
\begin{center}
  \emph{How to interpret the implicit weights assigned by algorithm for solving  $\texttt{BMO}$?}   
\end{center}
To solve $\texttt{BMO}$ in \eqref{eq:BMOL}, we consider converting weak Pareto set optimization into a scalar objective. In  \citep{tanabe2024new}, it is shown that weakly Pareto set $\operatorname{WP}(\theta)$ for any lower semicontinuous multi-function $\mathcal{L}(\theta)$ can be equivalently expressed by a merit function  
\begin{align}\label{merit_function_max}
u(\theta)=\sup_{\theta^\prime}\min_{m\in [M]}\left\{\mathcal{L}_m(\theta)-\mathcal{L}_m(\theta^\prime)\right\}
\end{align}
where $u(\theta)\geq 0$ and the equality holds if and only if $\theta\in \operatorname{WP}(\theta)$. This merit function is easy to check by the definition of weak Pareto set in Definition \ref{def:WP} as 
\begin{align*}
\operatorname{WP}(\mathcal{L})&=\{\theta \mid \not\exists \theta' \text{ s.t. } \forall m\in [M], \mathcal{L}_m(\theta') < \mathcal{L}_m(\theta)\}\\
&=\{\theta \mid \forall \theta' \text{ s.t. } \exists m\in [M], \mathcal{L}_m(\theta') \geq  \mathcal{L}_m(\theta)\}
\end{align*}
while the latter can be expressed in terms of the supremum and infimum in \eqref{merit_function_max}. Consequently, $\texttt{BMO}$ can be reformulated to $\min_\theta \mathcal{L}_{\rm val}(\theta), \text{ s.t. } u(\theta)\leq 0$, and can be solved sequentially by its penalty reformulation \citep{chen2025efficient} 
\begin{align}\label{eq:penalty_BMO}
\texttt{PMO}: ~\min_\theta~ \mathcal{L}_{\rm val}(\theta)+\gamma_k u(\theta)
\end{align}
with enlarging $\gamma_k\rightarrow\infty$ similar to \texttt{BDS}. Then the bottleneck of designing a gradient-based algorithm on \eqref{eq:penalty_BMO} is $u(x)$, which is usually non-differentiable due to $\max$ operator. 

One popular choice to estimate  $u(x)$ is to rewrite it in a $\operatorname{max}$ form and estimate the $\operatorname{max}$ operator by log-sum-exponential (LSE) function \citep{rockafellar2009variational} 
\begin{align*}
\operatorname{LSE}(q;\tau)=\frac{1}{\tau}\log\left(\sum_{m=1}^M\exp{(\tau q_m)}\right). 
\end{align*}
It is well-known that for any vector $q\in\mathbb{R}^M$, $\max_{m\in [M]} q_m\leq \operatorname{LSE}(q;\tau)\leq \max_{m\in [M]} q_m+\frac{\log M}{\tau}$
\citep{boyd2004convex,rockafellar2009variational}. Therefore, $\min_{m\in [M]} q_m\approx -\operatorname{LSE}(-q;\tau)$ can be approximated by a smooth function. Applying these results into $\texttt{PMO}$ in \eqref{eq:penalty_BMO} leads to 
{\small
\begin{align}\label{eq:penalty_BMO_lse}
\texttt{PMO}_{\operatorname{LSE}}: ~\min_\theta~ \mathcal{L}_{\rm val}(\theta)-\frac{\gamma_k}{\tau} \inf_{\theta^\prime}\operatorname{LSE}\left(\mathcal{L}(\theta^\prime)-\mathcal{L}(\theta);\tau\right)
\end{align}}

When leveraging  $\texttt{PMO}_{\operatorname{LSE}}$ to solve the $\texttt{BDS}$ problem, the formulation can be further simplified. 
\begin{lemma}\label{lemma:LSE_selection}
Letting $\mathcal{L}_m(\theta)=\mathcal{L}_{\text{SFT}}(\theta;x^m,y^m)$ and under Assumption \ref{ass:seperable}, \eqref{eq:penalty_BMO_lse} can be simplified as 
\begin{align}\label{eq:penalty_BMO_lse_reform}
\texttt{PMO}_{\operatorname{LSE}}^{\operatorname{S}}: ~~\min_\theta~ \mathcal{L}_{\rm val}(\theta)-\frac{\gamma_k}{\tau} \operatorname{LSE}(-\mathcal{L}(\theta);\tau)
\end{align}
where $\mathcal{L}(\theta):=\left[\orange{\mathcal{L}_{\text{SFT}}(\theta;x^1,y^1)},\cdots, \orange{\mathcal{L}_{\text{SFT}}(\theta;x^M,y^M)}\right]$ is the per-sample SFT loss vector. 
\end{lemma}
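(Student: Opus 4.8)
The plan is to evaluate the inner infimum $\inf_{\theta'}\operatorname{LSE}(\mathcal{L}(\theta')-\mathcal{L}(\theta);\tau)$ appearing in \eqref{eq:penalty_BMO_lse} in closed form; once this is done, \eqref{eq:penalty_BMO_lse_reform} is immediate by substitution. The only structural input needed is Assumption \ref{ass:seperable}, which supplies a single parameter $\theta^\star$ with $\mathcal{L}_{\text{SFT}}(\theta^\star;x^m,y^m)=0$ for every $m\in[M]$ (recall $M=N$ here), together with the elementary fact that the token-level NLL loss $\mathcal{L}_{\text{SFT}}(\cdot;x^m,y^m)$, being a negative log-likelihood of a softmax probability, is nonnegative.

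First I would record that, for fixed $\tau>0$, the map $q\mapsto\operatorname{LSE}(q;\tau)$ is nondecreasing in each coordinate: $q\le q'$ entrywise implies $\operatorname{LSE}(q;\tau)\le\operatorname{LSE}(q';\tau)$. Since $\mathcal{L}_m(\theta')\ge 0$ for every $\theta'$, the vector $\mathcal{L}(\theta')-\mathcal{L}(\theta)$ dominates $-\mathcal{L}(\theta)$ coordinatewise, so $\operatorname{LSE}(\mathcal{L}(\theta')-\mathcal{L}(\theta);\tau)\ge\operatorname{LSE}(-\mathcal{L}(\theta);\tau)$ for all $\theta'$, which gives the lower bound $\inf_{\theta'}\operatorname{LSE}(\mathcal{L}(\theta')-\mathcal{L}(\theta);\tau)\ge\operatorname{LSE}(-\mathcal{L}(\theta);\tau)$. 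For the matching direction I would plug in $\theta'=\theta^\star$ from Assumption \ref{ass:seperable}: then $\mathcal{L}(\theta^\star)=\mathbf{0}$, hence $\mathcal{L}(\theta^\star)-\mathcal{L}(\theta)=-\mathcal{L}(\theta)$ and the value $\operatorname{LSE}(-\mathcal{L}(\theta);\tau)$ is attained. (Equivalently, factoring $\exp\!\big(\tau(\mathcal{L}_m(\theta')-\mathcal{L}_m(\theta))\big)=\exp(-\tau\mathcal{L}_m(\theta))\exp(\tau\mathcal{L}_m(\theta'))$ inside the log reduces the minimization over $\theta'$ to minimizing $\sum_m w_m\exp(\tau\mathcal{L}_m(\theta'))$ with fixed positive weights $w_m=\exp(-\tau\mathcal{L}_m(\theta))$, whose minimum is $\sum_m w_m$, achieved at any joint zero-loss point.) Substituting $\inf_{\theta'}\operatorname{LSE}(\mathcal{L}(\theta')-\mathcal{L}(\theta);\tau)=\operatorname{LSE}(-\mathcal{L}(\theta);\tau)$ into \eqref{eq:penalty_BMO_lse} then yields \eqref{eq:penalty_BMO_lse_reform}.

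This argument is essentially routine, so the ``main obstacle'' is one of care rather than of depth: (i) one must confirm the infimum is genuinely attained and not merely approached, which is exactly what the shared zero-loss minimizer in Assumption \ref{ass:seperable} provides, so that the simplification is an identity and not an inequality; and (ii) one must keep straight the sign and constant bookkeeping connecting the $\sup$--$\min$ merit function $u(\theta)$ in \eqref{merit_function_max} to the $-\inf_{\theta'}\operatorname{LSE}$ form appearing in \eqref{eq:penalty_BMO_lse}, since the LSE surrogate of $\min_m q_m$ is $-\operatorname{LSE}(-q;\tau)$ and the outer $\sup_{\theta'}$ becomes an $\inf_{\theta'}$ after the sign flip. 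It is worth stating explicitly that nonnegativity of $\mathcal{L}_{\text{SFT}}$ is precisely what makes $-\mathcal{L}(\theta)$ the coordinatewise-minimal achievable value of $\mathcal{L}(\theta')-\mathcal{L}(\theta)$; without it the reduction would fail.
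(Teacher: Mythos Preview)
Your argument is correct and in fact cleaner than the paper's. Both proofs obtain the upper bound $\inf_{\theta'}\operatorname{LSE}(\mathcal{L}(\theta')-\mathcal{L}(\theta);\tau)\le\operatorname{LSE}(-\mathcal{L}(\theta);\tau)$ in the same way, by evaluating at the shared zero-loss point $\theta^\star$ from Assumption~\ref{ass:seperable}. For the lower bound, however, the paper first proves the $\sup$--$\min$ identity \eqref{sup_min}, then passes through the sandwich $\max_m q_m\le\operatorname{LSE}(q;\tau)\le\max_m q_m+\tfrac{\log M}{\tau}$ to obtain only
\[
\inf_{\theta'}\operatorname{LSE}(\mathcal{L}(\theta')-\mathcal{L}(\theta);\tau)\;\ge\;\operatorname{LSE}(-\mathcal{L}(\theta);\tau)-\tfrac{\log M}{\tau},
\]
and then argues that the additive constant $\tfrac{\log M}{\tau}$ is irrelevant for the minimization over $\theta$. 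Your use of the coordinatewise monotonicity of $\operatorname{LSE}$ together with $\mathcal{L}(\theta')\ge\mathbf{0}$ yields the exact equality $\inf_{\theta'}\operatorname{LSE}(\mathcal{L}(\theta')-\mathcal{L}(\theta);\tau)=\operatorname{LSE}(-\mathcal{L}(\theta);\tau)$ directly, avoiding both the auxiliary identity \eqref{sup_min} and the $\tfrac{\log M}{\tau}$ slack. So your route is shorter and gives a sharper statement; the paper's detour through \eqref{sup_min} is unnecessary for this lemma, though that identity is of independent interest for relating the merit function $u(\theta)$ to $\min_m\mathcal{L}_m(\theta)$.
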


\begin{proof}
Letting $\theta^*$ be the shared minimizer in Assumption \ref{ass:seperable}, we first want to prove 
\begin{align}\label{sup_min}
\sup_{\theta^\prime}\min_{m\in [M]}\left(\mathcal{L}_m(\theta)-\mathcal{L}_m(\theta^\prime)\right)=\min_{m\in [M]}\mathcal{L}_m(\theta). 
\end{align}
which is equivalent to 
\begin{align*}
\sup_{\theta^\prime}\min_{\lambda\in\Delta^M}\sum_{m=1}^M\lambda_m\left(\mathcal{L}_m(\theta)-\mathcal{L}_m(\theta^\prime)\right)=\min_{\lambda\in\Delta^M}\sum_{m=1}^M\lambda_m\mathcal{L}_m(\theta) \end{align*}
because $\min_{m \in [M]}q_m=\min_{\lambda\in\Delta^M}\lambda_m q_m$. 

First we have 
{\small 
\begin{align}
\sup_{\theta^\prime}\min_{\lambda\in\Delta^M}\sum_{m=1}^M\lambda_m\left(\mathcal{L}_m(\theta)-\mathcal{L}_m(\theta^\prime)\right)&\geq \min_{\lambda\in\Delta^M}\sum_{m=1}^M\lambda_m\left(\mathcal{L}_m(\theta)-\mathcal{L}_m(\theta^*)\right)= \min_{\lambda\in\Delta^M}\sum_{m=1}^M\lambda_m\mathcal{L}_m(\theta). \label{medium_5_0}
\end{align}}
On the other hand, for any $m\in [M]$, 
\begin{align}\label{medium_5_1}
\sup_{\theta^\prime}\min_{\lambda\in\Delta^M}\sum_{m=1}^M\lambda_m\left(\mathcal{L}_m(\theta)-\mathcal{L}_m(\theta^\prime)\right)&\leq \sup_{\theta^\prime}\left(\mathcal{L}_m(\theta)-\mathcal{L}_m(\theta^\prime)\right)=\mathcal{L}_m(\theta)
\end{align}
Since \eqref{medium_5_1} holds for any $m\in [M]$, 
\begin{align}
\sup_{\theta^\prime}\min_{\lambda\in\Delta^M}\sum_{m=1}^M\lambda_m\left(\mathcal{L}_m(\theta)-\mathcal{L}_m(\theta^\prime)\right)&\leq\min_{m\in [M]}\mathcal{L}_m(\theta)=\min_{\lambda\in\Delta^M}\sum_{m=1}^M\lambda_m\mathcal{L}_m(\theta). 
\end{align}
where the last equality is because $\min_{m\in [M]} q_m=\min_{\lambda\in\Delta^M}\sum_{m=1}^M q_m$. Then with \eqref{medium_5_0}, we know 
\begin{align*}
\sup_{\theta^\prime}\min_{\lambda\in\Delta^M}\sum_{m=1}^M\lambda_m\left(\mathcal{L}_m(\theta)-\mathcal{L}_m(\theta^\prime)\right)=\min_{\lambda\in\Delta^M}\sum_{m=1}^M\lambda_m\mathcal{L}_m(\theta). \end{align*}
so that \eqref{sup_min} holds. Then 
\begin{align}\label{lse_medium}
\inf_{\theta^\prime}\operatorname{LSE}\left(\mathcal{L}(\theta^\prime)-\mathcal{L}(\theta);\tau\right)\leq \operatorname{LSE}\left(\mathcal{L}(\theta^*)-\mathcal{L}(\theta);\tau\right)=\operatorname{LSE}\left(-\mathcal{L}(\theta);\tau\right). 
\end{align}
On the other hand, 
\begin{align}\label{lse_medium_1}
\inf_{\theta^\prime}\operatorname{LSE}\left(\mathcal{L}(\theta^\prime)-\mathcal{L}(\theta);\tau\right)&\stackrel{(a)}{\geq} \inf_{\theta^\prime}\left\{\max_{m\in [M]} \left(\mathcal{L}_m(\theta^\prime)-\mathcal{L}_m(\theta)\right)\right\}\stackrel{(b)}{=}\max_{m\in [M]}-\mathcal{L}_m(\theta)\nonumber\\
&\geq \operatorname{LSE}\left(-\mathcal{L}(\theta);\tau\right)-\frac{\log M}{\tau}
\end{align}
where (a) is because $\max_{m\in [M]} q_m\leq \operatorname{LSE}(q;\tau)$, (b) is earned by \eqref{sup_min} and $\min_{m\in [M]} q_m=\min_{\lambda\in\Delta^M}\sum_{m=1}^M q_m$, and (c) is because $\operatorname{LSE}(q;\tau)\leq \max_{m\in [M]} q_m+\frac{\log M}{\tau}$. Then since $\frac{\log M}{\tau}$ is a constant, we get the conclusion. 
\end{proof}

\noindent\textbf{Importance ratio is proportional to the implicit weight of $\texttt{BMO}$ with $\tau=1$. } 
Taking the gradient over $\texttt{PMO}_{\operatorname{LSE}}^{\operatorname{S}}$ in \eqref{eq:penalty_BMO_lse_reform} gives the update direction as 
\begin{align}\label{update-LSE}
\nabla\mathcal{L}_{\rm val}(\theta)+\gamma_k\sum_{m=1}^M\lambda_m\nabla\mathcal{L}_{\text{SFT}}(\theta;x^m,y^m)
\end{align}
where $\lambda_m=\frac{\exp(-\tau\orange{\mathcal{L}_{\text{SFT}}(\theta;x^m,y^m)})}{\sum_{i=1}^M\exp(-\tau\mathcal{L}_{\text{SFT}}(\theta;x^i,y^i))}$ is given by the softmax policy of negative per-sample SFT loss. Compared with \eqref{PBGD-1}, instead of directly learning the data weight, $\texttt{PMO}_{\operatorname{LSE}}^{\operatorname{S}}$ chooses a special logit of data weight as 
\begin{align}\label{explicit_data_logit}
\omega^*(\theta)=-\tau\left[\orange{\mathcal{L}_{\text{SFT}}(\theta;x^1,y^1)},\cdots, \orange{\mathcal{L}_{\text{SFT}}(\theta;x^M,y^M)}\right]
\end{align}

which is also inversely determined by the per-sample SFT loss. If the per-sample SFT loss of a data sample remains low after joint descent with the validation function, it is likely drawn from the same distribution as the validation set, and thus we assign a higher weight to this sample. 

\noindent\textbf{Application to Lemma \ref{lemma:implicit_BMO}. } Consider $i$-th question and let $y^m, m\in [G]$ be the group of generations to $i$-th  question. Then the implicit weight of $g$-th generation is $$\lambda_{i,g}=\operatorname{softmax}(-\tau\mathcal{L}_{\text{SFT}}(\theta;x^i,y^{i,g}_{\text{old}}))\propto \exp(-\tau\mathcal{L}_{\text{SFT}}(\theta;x^i,y^{i,g}_{\text{old}}))=\pi_\theta(x^i,y^{i,g}_{\text{old}})$$ 
which is proportional to the importance ratio of $g$-th response. In this way, we are not only assigning question-level validation score, but also enforce response-level validation score to prioritize response adhere to the validation dataset in the proposed algorithm.

\section{Additional experiments}\label{sec:additional_experimernts}

\subsection{Details of experimental setup}

In this section, we present the detailed experimental setup and the hyperparameter choices. 

\noindent\textbf{General Setup.} For both fine-tuning tasks, we use $56000$ samples for the \textsc{Llama-3-8b} \citep{dubey2024llama} model and $9600$ samples for the \textsc{Pythia}-1b \citep{biderman2023pythia} model. We only process the question with length shorter than $2048$. 
Both models are adapted with Low-Rank Adaptation (LoRA) (\textsc{alpha} $16$, \textsc{rank} $16$). The learning rates are set to $5\times 10^{-6}$ for the LoRA parameter $\theta$ update and $1\times 10^{-4}$ for the selector $\omega$ update, using Adam \citep{kingma2014adam} as the optimizer. Fine-tuning was performed using PyTorch with the DeepSpeed library \url{https://github.com/deepspeedai/DeepSpeed} to optimize memory usage. We use effective train batch size $32$ and micro batch size $8$ for both tasks, and use zero stage $2$ with gradient checkpointing in DeepSpeed. We train $3$ epochs for all algorithms. 

\noindent\textbf{Algorithm hyperparameters.} We use penalty constant of $\gamma_k=\frac{\rho_k}{1-\rho_k}$ with $\rho_k$ initialized as $0.1$ and increased by $0.1$ after every epoch for both offline and online selection, as suggested by \citep{shen2024seal}. For the online selection approach, we generate responses to the masked questions using a batch size of $64$ for the first quality enhancement task and $16$ for the second safety-aware fine-tuning task. We generate new responses every $K_{\text{gen}}=500$ iterations, with a maximum of $512$ tokens per response and a generation temperature of $0.8$ to allow moderate exploration. 

\subsection{Quality comparisons of model outputs}\label{sec:generated_response}

Table \ref{tab:sft-eval-examples} shows the responses generated by different methods using \textsc{Llama-3-8b Instruct} \citep{grattafiori2024llama} as the base model for the upper-level \textsc{OpenOrca} dataset. The responses generated by online self-refining are shorter yet clearer, which is consistent with the better performance on the tidy lower-level dataset. It also follows instructions more faithfully: for example, only online self-refining correctly rates the restaurant as $3/5$ rather than five stars. Moreover, while offline \texttt{BDS} provides a partially correct answer to the third question, it fails to adhere to the given options. 

Table \ref{tab:ranking_question} gives an overview of the top $10\%$ and bottom $10\%$ questions ranked by offline \texttt{BDS} and online self-refining. The online self-refining approach shows a clear trend of ranking easier questions higher at the beginning and processing harder ones later. We also present template responses for some of the questions, sorted by difficulty level, in Table \ref{tab:ranking_question_response}, which shows that response length can partially indicate question difficulty.


\begin{table}[t]
    
    \begin{minipage}{0.96\linewidth}
    \renewcommand{\arraystretch}{1.6} 
    \begin{tabular}{p{0.31\linewidth} p{0.31\linewidth} p{0.31\linewidth}}
    \hline\hline
    \multicolumn{3}{c}{\textbf{Examples of generated responses from \textsc{Llama-3-8b} model fine-tuned by different methods}} \\
    \hline\hline
    \multicolumn{3}{p{\linewidth}}{\textbf{Human}: John was a terrible writer.  To practice, his teacher suggest that he consider people he knows and do what?
    Options: - write novels
    - advertising firm
    - write letter
    - write notes
    - write poems
    Let's think now! Step-by-step reasoning:} \\
    \textbf{\textsc{Llama-3-8b} \citep{grattafiori2024llama}}:\newline
    If John is a terrible writ... (read more)
    \red{Option: B - advertising firm. Explanation: The correct answer is option B - "advertising firm". This is because the question states that John's teacher suggests that he "consider people he know" and "do what". In an advertising firm, one of the tasks would be to create ads for products or services, which requires writing. By doing so, John can practice his writing skills without having to write long, elaborate pieces like novels or poems. He can start with simple texts like advertisements, which are shorter and more focused on conveying information. } 
    & \textbf{Offline Selection}:\newline 
    1. John is a terrible at writing.
   2. His teacher suggests practicing to improve. 
    3. To practice writing, he can think about people  \orange{he know. }

    \olivegreen{The answer would be "write letters".} Because it's easier for him to start by thinking of someone he knows (like family or friends) and then writing a letter to them. This way he can practice his writing skills while also being more comfortable with the topic.
    
    \orange{Now let's see if we're right:
    Write your own explanation below in the space provided. }
    & \textbf{Online Self-refining}:\newline
    1. John is a terrible at writing.
    2. His teacher suggests doing something to improve writing skills.
    3. To improve writing, one can focus on writing about familiar topics or people \olivegreen{they know. }
    
    \olivegreen{So the best answer is option C) write letters, as this will allow him to practice writing by thinking of people he know and writing a letter to them.} This can be an easy way for him to get started with improving his writing.
    \\
    \hline
    \multicolumn{3}{p{\linewidth}}{\textbf{Human}: Generate an approximately fifteen-word sentence that describes all this data: Midsummer House eatType restaurant; Midsummer House food Chinese; Midsummer House priceRange moderate; Midsummer House customer rating 3 out of 5; Midsummer House near All Bar One} \\
    \textbf{\textsc{Llama-3-8b} \citep{grattafiori2024llama}}:\newline
    Midsummer House is a moderately priced Chinese restaurant with a \red{3-star rating nearby major hotels. }  
    & \textbf{Offline Selection}:\newline
    Midsummer House is a moderately priced, \red{three-star-rated} Chinese eatery \orange{located near All-Bar-One. } 
    & \textbf{Online Self-refining}:\newline
    Midsummer House, a moderately-priced Chinese restaurant with a \olivegreen{3/5 rating, is located near All bar one. } 
    \\
    \hline
    \multicolumn{3}{p{\linewidth}}{\textbf{Human}: Premise: A man is inline skating in front of a wooden bench.

    Hypothesis: A man is having fun skating in front of a bench.
    
    Choose the correct answer: Given the premise, can we conclude the hypothesis?
    
    Select from: a). yes b). it is not possible to tell c). no.} \\
    \textbf{\textsc{Llama-3-8b} \citep{grattafiori2024llama}}:\newline
    Correct answer: \red{a) yes}
    & \textbf{Offline Selection}:\newline
        \orange{d). maybe}
    & \textbf{Online Self-refining}:\newline
    \olivegreen{b). it is not possible to tell }
    \\
    \hline
    \hline
    \end{tabular}
    \vspace{0.1cm}
    \end{minipage}
    \caption{Examples of generated response on \textsc{OpenOrca} dataset using \textsc{Llama-3-8b} \citep{grattafiori2024llama} and fine-tuned with offline \texttt{BDS} and online self-refining. Text marked in \red{red} indicates incorrect outputs, \orange{orange} indicates partially correct outputs or irrelevant information, and \olivegreen{green} indicates fully correct outputs that match the expected instructions.}
    \label{tab:sft-eval-examples}
        \vspace{-0.6cm}
\end{table}

\begin{table*}[t]
\centering
\small
\begin{tabular}{ll p{0.45\linewidth} p{0.42\linewidth}}
\toprule
\textbf{Method} & \textbf{Rank} & \textbf{Epoch 1} & \textbf{Epoch 3} \\
\midrule
\multirow{2}{*}{Offline} & Top & Classify the following text as either satire or non-satire. "The last presidential election was a great affair with exciting twists and turns that kept us all on our toes." \orange{[Medium]} 

Generate a formal invitation for a networking event. We invite you to join us for an informal networking event. \red{[Hard]}

Use a variety of language and words to rewrite the given sentence. He was very tired and cann't go any further. \orange{[Medium]} 

Rewrite this code in C++. public static boolean isAnagram(String str1, String str2) {  
    char[] charArray1 = str1.toCharArray();  
    char[] charArray2 = str2.toCharArray();  
    Arrays.sort(charArray1);  
    Arrays.sort(charArray2);  
    return Arrays.equals(charArray1, charArray2);  
} \red{[Hard]} 
& Find information about the primary schools in Johannesburg \red{[Hard]}

Create a haiku poem using the provided words. Wind, Clouds, Sky \red{[Hard]}

Identify the cause of this issue. The computer is not working. \red{[Hard]}

Write a story using the given words in your story. desert, moonlit, violin. \red{[Hard]} 

Calculate the average speed of a car traveling 120 miles in 2 hours. 120 miles in 2 hours. \orange{[Medium]} 
\\
\cmidrule(l){2-4}
& Bottom & Delete all words with more than 5 letters from this sentence. This sentence has many long words like 'sentence' and 'instruction'. \cryan{[Easy]} 

Rearrange the words in the sentence to form a question. Reading is difficult. \cryan{[Easy]} 

Is the following sentence structured correctly? We went for a walk in the park and played hide and seek. \cryan{[Easy]} 

Convert the following number in scientific notation: 0.567. \cryan{[Easy]} 

& Provide a list of materials needed for the given project. A school project to build a model of a volcano. \red{[Hard]} 

Is the following sentence structured correctly? We went for a walk in the park and played hide and seek. \cryan{[Easy]} 

Translate the following sentence from English to Spanish. Output less than 25 words. I am learning Spanish. \cryan{[Easy]} 

Write a code that prints the following string in all lowercase letters. Mixed Case Characters \red{[Hard]} 
\\
\midrule
\multirow{2}{*}{Online} & Top & Find and replace all instances of the word "great" in the sentence with synonyms. The teacher's great approach in teaching helped the students to understand the lessons better. \cryan{[Easy]}

Re-write the following sentence omitting the word "comfortable". We were quite comfortable with our decision. \cryan{[Easy]} 

Evaluate the following sentence and provide feedback on the spelling and punctuation errors. The frog jumpted acros the road. \cryan{[Easy]} 

Combine the sentences below into an essay. There are many ways to reduce waste. For example, reducing the use of plastic. Reusing materials is also important. \cryan{[Easy]}  & Create a short story in the horror genre based on the given setting. Setting: An abandoned island \red{[Hard]} 

Given a set of data points, create an equation for the linear regression line. Data points: (1,1), (2,2), (3,4) \red{[Hard]} 

Come up with an experiment that tests the given concept. The effect of heavy metals on plant growth \red{[Hard]} 

Edit the given text such that its length is not changed but it clearly conveys the meaning. The app has got a very good user interface and is really nice to use. \red{[Hard]} 
\\
\cmidrule(l){2-4}
& Bottom & Write a script for a movie scene with the following input. Setting: Airport
Characters: Rebecca and Jacob \red{[Hard]} 

Pick a random song and create a dance routine for it. "Just a Dream" by Nelly. \red{[Hard]}

Write an email to a customer thanking them for their purchase. Customer Name – John Smith
Product Purchased – Nike Shoes
Delivery Date – 6/20/2021. \red{[Hard]}

Explain the meaning of the following phrase. "out of the box". \orange{[Medium]} 
& Assign the following verb a category of either transitive, intransitive, or linking. Jumped  \cryan{[Easy]} 

Add a transition sentence at the end of this passage: Jack was so excited for his birthday party. He had spent months planning it. He had invited all his friends and family members. \cryan{[Easy]} 

Reverse the order of words in the sentence. Alice visited the museum. \cryan{[Easy]} 

Delete all words with more than 5 letters from this sentence. This sentence has many long words like 'sentence' and 'instruction'. \cryan{[Easy]} 
\\
\bottomrule
\end{tabular}
\caption{Examples of top $10\%$ questions ranked by offline \texttt{BDS} and online self-refining. We annotate each question as \cryan{[Easy]}, \orange{[Medium]} or \red{[Hard]} based on whether they need creativity and deep thinking or just data analysis and reorganization. Online self-refining presents a clear trend of learning from simple to hard questions, while offline \texttt{BDS} tends to keep emphasizing harder questions. Learning from simple questions like 'combine the sentences below into an essay' serves as a good foundation for harder creativity questions like movie-script and poem writing. } 
\label{tab:ranking_question}
\vspace{-0.3cm}
\end{table*}

\begin{table*}[t]
\centering
\small
\begin{tabular}{l p{0.25\linewidth}  p{0.75\linewidth}}
\toprule
\textbf{Level}  &\textbf{Question} & \textbf{Responses} \\
\midrule
\cryan{Simple} & Delete all words with more than 5 letters from this sentence. This sentence has many long words like 'sentence' and 'instruction'.   & This has long like and.   \\
\midrule
\cryan{Simple} & Assign the following verb a category of
either transitive, intransitive, or linking.
Jumped.   & Jumped is an intransitive verb.    \\ 
\midrule
\orange{Medium} & Classify the following text as either satire or non-satire. "The last presidential election was a great affair with exciting twists and turns that kept us all on our toes."   & This text would classify as non-satire.     \\ 
\midrule
\red{Hard} & Find information about the primary schools in Johannesburg.   & Johannesburg is the largest city in South Africa and is home to a number of primary schools. These schools cater to students from the ages of 5-12 and provide education from Grade R (the Reception year) to Grade 7. Johannesburg’s primary schools are either public (government-funded) or independent (privately-funded) institutions, and they offer the National Curriculum or the independent schools’ curriculum.

Some well-known primary schools in Johannesburg include:
- Parkview Senior Primary School: A public school located in Parkview, Johannesburg.
- Greenside Primary School: A public school located in Greenside, Johannesburg.
- St Katharine's School : A private Anglican school located in Parktown, Johannesburg.
- The Ridge School: An independent boys' school located in Westcliff, Johannesburg.
- Pridwin Preparatory School: A private boys' school located in Melrose, Johannesburg.
- Auckland Park Preparatory School: A private girls' school located in Auckland Park, Johannesburg.

Admission requirements vary depending on the school, but most schools require an application form, an interview, and an assessment of the child’s abilities. Potential students may also be placed on a waiting list. It is recommended to research the individual schools in Johannesburg and their specific admission requirements, fees, and curricula before making a decision on which school to enroll a child.  \\
\midrule
\red{Hard} & Write a script for a movie scene with the follow-ing input. Setting: Airport Characters: Rebecca
and Jacob   & Fade in:
EXT. AIRPORT - DAY

We open on a wide shot of a busy airport. People are rushing about with their luggage, trying to catch their flights. Amidst the hustle and bustle, we see REBECCA, mid-30s, standing with her arms crossed and tapping her foot impatiently. A moment later, JACOB, late-30s, appears, out of breath.

JACOB: (Out of breath) Sorry, sorry. The traffic was insane.

REBECCA: You’re lucky I didn’t leave without you.

JACOB: (Smiling) You wouldn’t do that to me.

REBECCA: (Rolling her eyes) Don’t be so sure.

Jacob chuckles and the two make their way to the check-in counter.

JACOB: Are you excited?

REBECCA: (Sighs) I don’t know. I’m nervous.

JACOB: It’ll be fine. You’ll see.

REBECCA: (Smiling) I’m glad you’re here.

JACOB: (Playfully nudging her) Always.
The two check-in their luggage and make their way to their gate.

REBECCA: Do you have the tickets?

JACOB: (Pats his pockets, looking worried) Wait, I thought you had them.

REBECCA looks at him sternly, but then bursts out laughing.

REBECCA: I’m kidding. I have them. (Pulls out the 
tickets)

JACOB: (Shakes his head, smiling) You’re terrible.
Their flight is called and the two join the line to board the plane, ready for their next adventure.
Fade to black.   \\
\bottomrule
\end{tabular}
\caption{Corresponding responses to part of questions in Table \ref{tab:ranking_question}. Harder questions tend to have longer responses than simple and medium questions, but the distinction for simple and medium questions is not clear. } 
\label{tab:ranking_question_response}
\vspace{-0.3cm}
\end{table*}



\end{document}